\DeclarePairedDelimiterX{\inp}[2]{\langle}{\rangle}{#1, #2}
\newcolumntype{L}[1]{>{\raggedright\let\newline\\\arraybackslash\hspace{0pt}}m{#1}}
\newcolumntype{C}[1]{>{\centering\let\newline\\\arraybackslash\hspace{0pt}}m{#1}}
\newcolumntype{R}[1]{>{\raggedleft\let\newline\\\arraybackslash\hspace{0pt}}m{#1}}
\theoremstyle{plain}
\newtheorem{theorem}{Theorem}[section]
\newtheorem{proposition}[theorem]{Proposition}
\newtheorem{lemma}[theorem]{Lemma}
\theoremstyle{definition}
\newtheorem{definition}[theorem]{Definition}
\newtheorem{assumption}[theorem]{Assumption}
\theoremstyle{remark}
\DeclareMathOperator{\fastmobius}{FastMobius}
\DeclareMathOperator{\poly}{poly}
\DeclareMathOperator{\dec}{Dec}
\DeclareMathOperator{\linspan}{span}
\DeclareMathOperator{\proj}{Proj}
\DeclareMathOperator{\sv}{SV}
\DeclareMathOperator{\stii}{STII}
\DeclareMathOperator{\bz}{BZ}
\DeclareMathOperator{\err}{err}
\DeclareMathOperator{\bern}{Bern}
\newcommand{\type}[1]{\mathrm{Type} \left( #1 \right)}
\newcommand{\detect}[1]{\mathrm{Detect} \left( #1 \right)}
\newcommand{\abs}[1]{\left\lvert #1 \right\rvert}
\definecolor{lightpink}{rgb}{1,0.9,0.9}
\newcommand{\defeq}{\vcentcolon=}
\newcommand{\cA}{\mathcal A}
\newcommand{\cD}{\mathcal D}
\newcommand{\cG}{\mathcal G}
\newcommand{\cH}{\mathcal H}
\newcommand{\cK}{\mathcal K}
\newcommand{\cM}{\mathcal M}
\newcommand{\cN}{\mathcal N}
\newcommand{\cS}{\mathcal S}
\newcommand{\bbR}{\mathbb R}
\newcommand{\bbZ}{\mathbb Z}
\newcommand{\bbE}{\mathbb E}
\newcommand{\bbN}{\mathbb N}
\newcommand{\one}{\mathds 1}
\newcommand{\trans}{\textrm{T}}
\newcommand{\by}{\mathbf y}
\newcommand{\bxi}{\boldsymbol \xi}
\newcommand{\bx}{\mathbf x}
\newcommand{\bd}{\mathbf d}
\newcommand{\barbd}{\mathbf{ \overline{ d}}}
\newcommand{\barbh}{\mathbf{ \overline{ h}}}
\newcommand{\be}{\mathbf e}
\newcommand{\bg}{\mathbf g}
\newcommand{\bW}{\mathbf W}
\newcommand{\bv}{\mathbf v}
\newcommand{\bj}{\mathbf j}
\newcommand{\bk}{\mathbf k}
\newcommand{\bbm}{\mathbf m}
\newcommand{\br}{\mathbf r}
\newcommand{\bM}{\mathbf M}
\newcommand{\bX}{\mathbf X}
\newcommand{\bY}{\mathbf Y}
\newcommand{\bD}{\mathbf D}
\newcommand{\bI}{\mathbf I}
\newcommand{\bH}{\mathbf H}
\newcommand{\bh}{\mathbf h}
\newcommand{\bs}{\mathbf s}
\newcommand{\bS}{\mathbf S}
\newcommand{\bU}{\mathbf U}
\newcommand{\bbeta}{\bm \beta}
\newcommand{\balpha}{\bm \alpha}
\newcommand{\bell}{\boldsymbol{\ell}}
\newcommand{\bZero}{\boldsymbol 0}
\newcommand{\bOne}{\boldsymbol 1}
\newcommand{\indep}{\perp \!\!\! \perp}
\newcommand{\sample}{\overline{\bH^{\trans}\overline{\bell}}}
\newcommand{\samplecgend}{\overline{\bH^{\trans}\overline{\bell} + \bd}}
\newcommand{\samplecp}{\overline{\bH^{\trans}_c\overline{\bell} + \bd_{c,p}}}
\begin{document}

\title{Learning to Understand: \\ Identifying Interactions via the M\"{o}bius Transform}





\author{
Justin Singh Kang\\
UC Berkeley\\ \texttt{justin\_kang@berkeley.edu}\\ 
\And
Yigit Efe Erginbas\\
UC Berkeley\\
\texttt{erginbas@berkeley.edu}\\
\And
Landon Butler\\
UC Berkeley \\
\texttt{landonb@berkeley.edu}
\AND
Ramtin Pedarsani\\
UC Santa Barbara\\
\texttt{ramtin@ece.ucsb.edu}
\And
Kannan Ramchandran\\
UC Berkeley\\
\texttt{kannanr@berkeley.edu}
}

\linepenalty =5000

\maketitle
\begin{abstract}

One of the key challenges in machine learning is to find interpretable representations of learned functions. The M\"{o}bius transform is essential for this purpose, as its coefficients correspond to unique \emph{importance scores} for \emph{sets of input variables}.
This transform is closely related to widely used game-theoretic notions of importance like the \emph{Shapley} and \emph{Bhanzaf value}, but it also captures crucial higher-order interactions.
Although computing the M\"{o}bius Transform of a function with $n$ inputs involves $2^n$ coefficients, it becomes tractable when the function is \emph{sparse} and of \emph{low-degree} as we show is the case for many real-world functions. Under these conditions, the complexity of the transform computation is significantly reduced.
When there are $K$ non-zero coefficients, our algorithm recovers the M\"{o}bius transform in $O(Kn)$ samples and $O(Kn^2)$ time asymptotically under certain assumptions, the first non-adaptive algorithm to do so. We also uncover a surprising connection between group testing and the M\"{o}bius transform. For functions where all interactions involve at most $t$ inputs, we use group testing results to compute the M\"{o}bius transform with $O(Kt\log n)$ sample complexity and $O(K\poly(n))$ time. 
A robust version of this algorithm withstands noise and maintains this complexity. This marks the first $n$ sub-linear query complexity, noise-tolerant algorithm for the M\"{o}bius transform. In several examples, we observe that representations generated via sparse M\"obius transform are up to twice as faithful to the original function, as compared to Shaply and Banzhaf values, while using the same number of terms. 



\end{abstract}

\section{Introduction}

As machine learning models become increasingly complex, our ability to interpret them has not kept pace. A natural question to ask is: 
What is the most fundamental interpretable representation of the functions we learn?
The Shapley value \cite{shapley1952}, a concept from cooperative game theory, has become a popular way to interpret model predictions \cite{Lundberg2017} by assigning importance scores to individual inputs such as features, data samples or tokens. This value represents the weighted average marginal contribution of an input, quantifying the change in the function’s output when that input is included.
Recent research has expanded the scope of interpretability to encompass sets of inputs \cite{dhamdhere2019shapley, tsai2023faith}, capturing the collective influence of input combinations and their synergies on model predictions. Central to this advancement is the M\"{o}bius Transform \cite{grabisch_equivalent_2000}, a mathematical transformation that projects functions onto a fundamental interpretable basis known in game theory as the \emph{unanimity function basis}.

The M\"obius transform has a more powerful and nuanced explanation capability than the Shapley value. Consider a sentiment analysis model (BERT \cite{Devlin2019} fine-tuned on the IMDB dataset \cite{Li2023}) explained using both Shapley values and the M\"obius transform as depicted in Fig.~\ref{fig:sentiment_mobius}. The model's objective is to classify the sentiment of the review as positive or negative. The M\"obius transform
assigns a score to all word subsets within a sentence. For instance, in the sentence “Her acting never fails to impress” each subset of words is evaluated—positive interactions receive positive scores, and negative interactions, negative scores. Summing these scores yields the overall sentiment $+0.98$.
This granular analysis reveals the model’s understanding of linguistic constructs like double negatives, as seen in the interaction between \emph{never} and \emph{fails}, and the inherent positivity of words like \emph{impress}. When the word \emph{never} is masked, interactions involving never are excluded, shifting the sentiment negatively to $-0.96$. 
This level of detail is not readily available with the Shapley value, which assigns scores to individual words without considering their interplay. The value of the M\"{o}bius transform is apparent, but given its complex structure, is it possible to compute efficiently?

\begin{figure}[t]
    \centering
    \begin{minipage}[c]{\textwidth}
    \includegraphics[width=\textwidth]{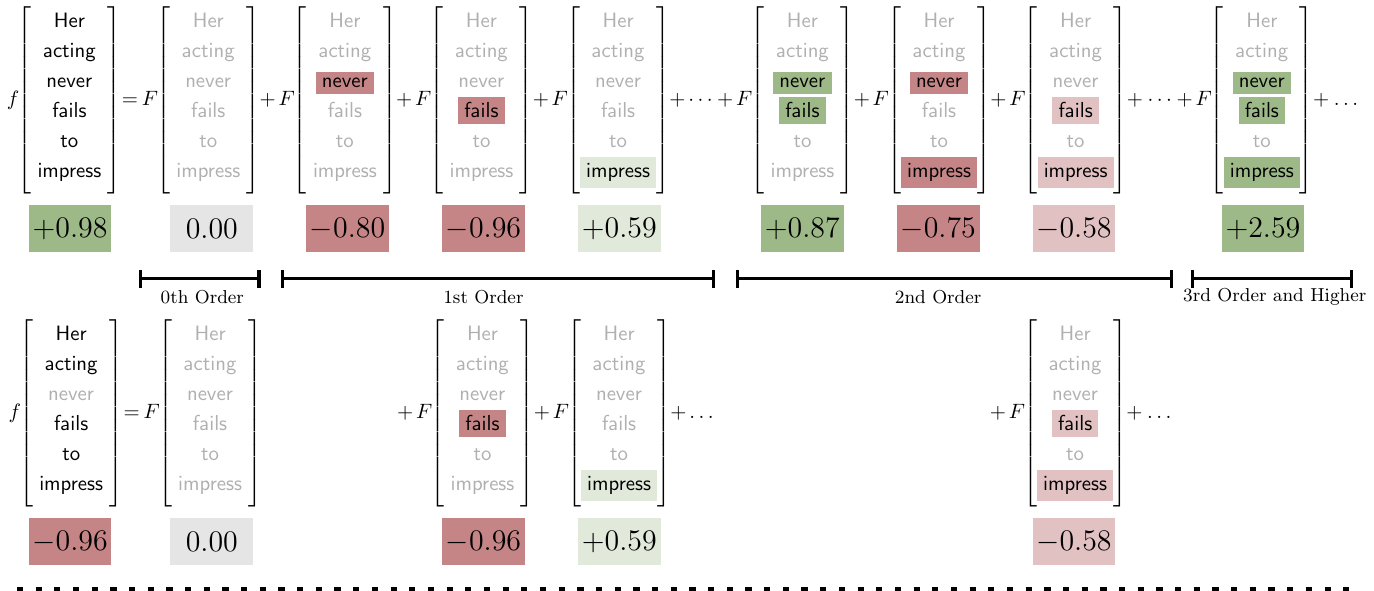}    
            \vspace{-7pt}
    \end{minipage}
    \includegraphics[width=0.65\textwidth]{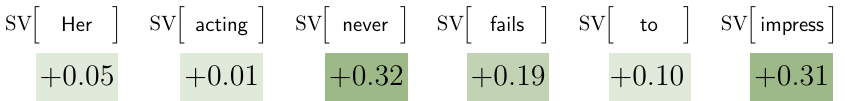}
    \caption{The movie review ``Her acting never fails to impress" is passed into a BERT language model fine-tuned to do sentiment analysis \cite{perez2021pysentimiento}. Presented are $1^{st}$, $2^{nd}$ and $3^{rd}$ order M\"{o}bius coefficients, with positive interactions in green and negative in red computed via \eqref{eq:inverse_transform}. 
    The coefficients explain how groups of words influence BERT's perception of sentiment. For instance, while \emph{never} and \emph{fails} have strong negative sentiments individually, when combined, they impose a profound positive sentiment. In the second row, the word \emph{never} is deleted, resulting in a large change in sentiment. In contrast, the Shapley values of each word $\sv(\cdot)$, presented at the bottom of the figure, are less informative. 
    }
    \label{fig:sentiment_mobius}
    \vspace{-10pt}
\end{figure}
In general, to compute a M\"obius transform over $n$ features requires $2^n$ inferences (masking over all $2^n$ subsets of features), as well as $n2^n$ time using a divide-and-conquer approach similar to that of the Fast Fourier Transform (FFT) algorithm. GPT-4 currently supports in the range of $8000$ words-per-prompt, and context length will continue to grow with new architectures \cite{dao2022flashattention}. Running inference $2^{8000}$ times is not even close to possible, and even if you could, $2^{8000}$ coefficients are hardly interpretable!
In Fig.~\ref{fig:sentiment_mobius} we see that many coefficients are \emph{insignificant}. This is typical. The solution to the computational problem is to just focus on computing the largest M\"{o}bius interactions and ignore the small ones. Is this possible in a systematic way?  Yes---assuming that 
for all but $K$ values of the M\"obius coefficients (which $K$ values are significant is unknown), our algorithm enables us to intelligently query the model to significantly reduce the number of samples of that are required to  $O(Kn)$ with $O(Kn^2)$ time.
We also explore the regime where the non-zero interactions occur between at most $t$ inputs, with $t \ll n$, showing that only $O(Kt\log(n))$ samples are required in $O(K\poly(n))$ time.
We also have a robust algorithm that allows for some noise in the sampling process, effectively relaxing the constraint that the insignificant coefficients are exactly zero while maintaining the same complexities.


\paragraph{Defining the M\"obius Transform} 
We define a value function for a model with $n$ inputs across subsets $S \subseteq [n]$ denoted as $f(S)$. The construction of this function varies based on the model: in Fig.~\ref{fig:sentiment_mobius}, words not in $S$ might be masked or omitted. In other cases, we might take a conditional expectation over words not in $S$. 
To facilitate later discussion on group testing, we express the function as $f : \bbZ_2^n \rightarrow \bbR$, where $f(S) = f(\bbm)$ with $S = \{ i : m_i = 1\}$. 
The relationship between $f : \bbZ_2^n \rightarrow \bbR$ and its Mobius transform $F : \bbZ_2^n \rightarrow \bbR$ is characterized by the forward and inverse transforms:
\begin{equation}\label{eq:inverse_transform}
    \text{Inverse: }\;\;f(\bbm)  = \sum_{\bk \leq \bbm} F(\bk), \qquad \text{Forward: }\;\; F(\bk) = \sum_{\bbm \leq \bk} (-1)^{\bOne^{\trans}(\bk-\bbm)}f(\bbm),
\end{equation}
where $\bk \leq \bbm$ means that $k_i \leq m_i\;\forall i$. This transform acts as a bridge, connecting various importance metrics, which can be expressed as projections onto a subset of the M\"obius basis. 
The Shapley value $\sv (i)$ and Banzhaf value $\bz(i)$ for feature $i$ is elegantly represented within this framework:
\begin{equation}
  \sv(i) = \sum_{\bk: k_i = 1} \frac{1}{\abs{\bk}}F(\bk),  \quad\quad \bz(i) = \sum_{\bk: k_i = 1} \frac{1}{2^{\abs{\bk}-1}}F(\bk).
\end{equation}


\subsection{Related Works and Applications}
This work is inspired by the literature on sparse Fourier transforms, which began with \cite{Hassanieh2012, stobbe2012, Pawar2013}. The sparse boolean Fourier (Hadamard) transform \cite{li2015spright,amrollahi2019efficiently} is most relevant. 

\textbf{Group Testing} This manuscript establishes a strong connection between the interaction identification problem and group testing \cite{Aldridge_2019}. Group testing was first described by Dorfman \cite{dorfman1943}, who noted that when testing soldiers for syphilis, pooling blood samples from many soldiers, and testing the pooled blood samples reduced the total number of tests needed.
\cite{Zhou2014} is the first work to exploit group testing in a feature selection/importance problem, using a group testing matrix in their algorithm. \cite{jia19} also mentions group testing in relation to Shapley values.   

\textbf{M\"{o}bius Transform} M\"{o}bius transforms \cite{grabisch_equivalent_2000} have been studied in the pseudo-boolean (set) function literature. \cite{wendler2021learning} develop a framework for computing sparse transforms of pseudo-boolean functions.
They do not directly consider the M\"{o}bius transform as we define it, but one can apply their algorithm to compute a $K$ sparse transform in $O(nK)$ \emph{adaptive} samples and $O(K^2n)$ time. In the sparse and noiseless setting, our algorithm improves on this by being fully non-adaptive and having lower time complexity in most non-trivial settings. \cite{wendler2021learning} does not consider the important low-degree setting and does not consider robustness to noise (approximate sparsity), which are critical aspects of this work. 

\textbf{Explainability} \cite{Lundberg2017} propose model explanation via pseudo-boolean functions approximated by Shapley values, effectively utilizing only first-order M\"{o}bius coefficients. 
Constructing these functions, \cite{aas2021explaining, chen2020true, janzing2020feature} especially for generative models with complex outputs \cite{enouen2023textgenshap, miglani2023using, paes2024multi}, is an ongoing research area.
\cite{dhamdhere2019shapley} presents the Taylor-Shapley interaction index (STII), scoring interactions up to size $ t $. For sets smaller than $ t $, STII are exactly M\"obius coefficients.
\cite{tsai2023faith} introduces the Faithful Shapley Interaction index (FSI), which computes scores via projection onto up to $t^{th}$ order M\"{o}bius coefficients. 
\cite{fumagalli2023shapiq} develops methods for computing FSI, STII, and other interaction indices. The relationship between the M\"{o}bius transform, FSI, STII, Shapley value, and Banzhaf value is detailed in Appendix~\ref{apdx:mobius_relation}.


\textbf{Data Valuation} In data valuation \cite{jia19} the goal is to assign an importance score to data, either to determine a fair price \cite{kang2023fair} or to curate a more efficient dataset \cite{wang2023}. A feature of this problem is the high cost of getting a sample since we need to determine the accuracy of our model when trained on different subsets of data, making sample complexity of critical importance. \cite{ghorbani19,ghorbani20} try to approximate this by looking at the accuracy of partially trained models, though this introduces sampling noise. 

\subsection{Main Contributions}
Our algorithm and proofs are deeply \emph{interdisciplinary}, and the \textbf{contributions of this paper are theoretical}. We use modern ideas spanning across signal processing, algebra, coding and information theory, and group testing to address the important problem of interpretability at the forefront of machine learning. The main contributions of this manuscript are: 
\begin{itemize}[topsep=0pt, itemsep=0pt,leftmargin=12pt]
    \item For a function with $K$ non-zero M\"{o}bius coefficients  chosen uniformly at random, the Sparse M\"{o}bius Transform (SMT) algorithm exactly recovers the transform $F$ in $O(Kn)$ samples and $O(Kn^2)$ time in the limit as $n \rightarrow \infty$ with $K$ growing at most as $2^{n \delta}$ with $\delta \leq \frac{1}{3}$.
    \item We develop a formal connection with \emph{group testing} and present a variant of SMT that works when all non-zero interactions are low order. 
    If the maximum order of interaction is $t = \Theta(n^{\alpha})$ where $\alpha < 0.409$ then we can compute the M\"{o}bius transform in $O(Kt\log(n))$ samples in $O(K\poly(n))$ time with error going to zero as $n \rightarrow \infty$ with growing $K$. 
    \item Using robust group testing, we develop an algorithm that, under certain assumptions, computes the M\"{o}bius transform in $O(Kt \log(n))$ samples, with vanishing error as $n \rightarrow \infty$ with growing $K$.
\end{itemize}
In addition to our asymptotic analysis, we provide synthetic and real-world experiments that verify that our algorithm performs well even in the finite $n$ regime. Furthermore, our results are \emph{non-adaptive} meaning that all samples can be computed in parallel. 

\paragraph{Notation}
Lowercase boldface $\bx$ and uppercase boldface $\bX$ denote vectors and matrices respectively. $\bx \geq \by$ means that $x_i \geq y_i\;\forall i$.
 Multiplication is always standard real field multiplication, but \textbf{addition between two elements in $\bbZ_2$ should be interpreted as a logical OR $\lor$}. We define subtraction, of $\bx-\by$ for $\bx \geq \by$ by standard real field subtraction. 
 $\bar{\bx}$ corresponds to bit-wise negation for boolean $\bx$, and $\bx \odot \by$ represents an element-wise multiplication. 

\begin{figure}
    \centering
    \includegraphics[height=0.4\textwidth, width=\textwidth]{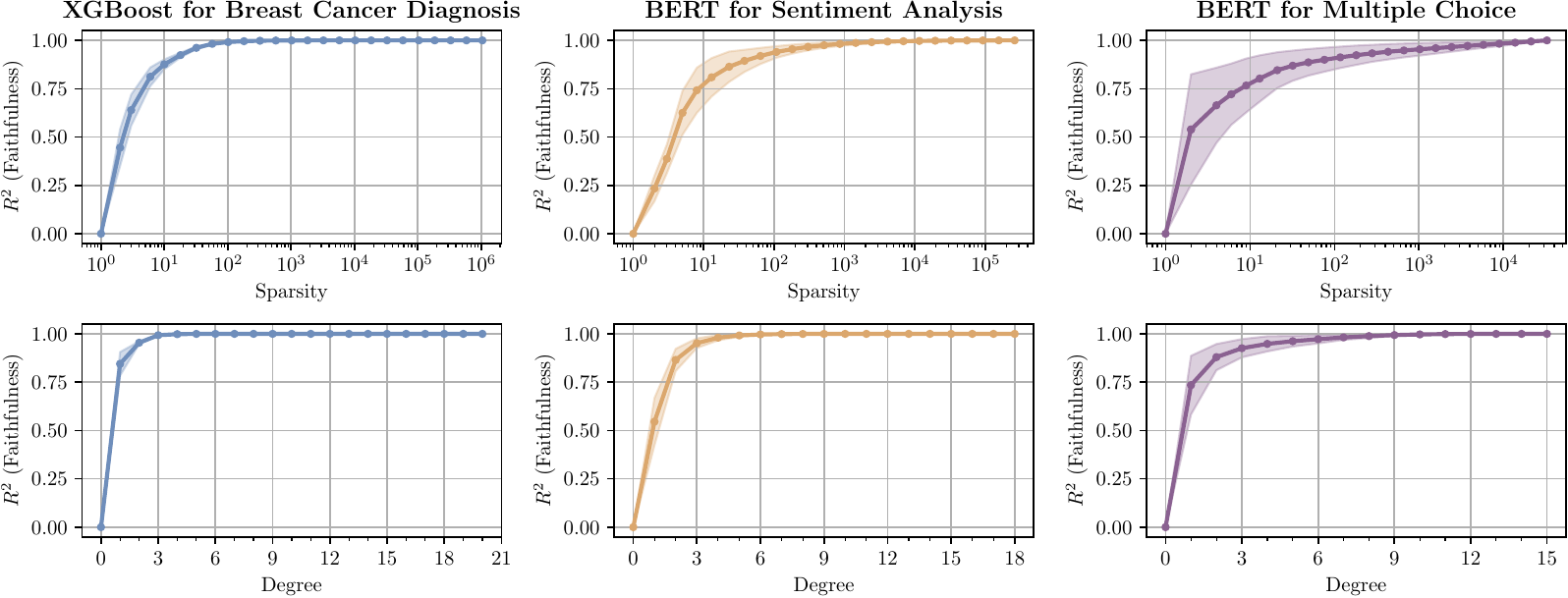}
    \caption{These plots are strong indicators that sparsity and low-degree assumptions are worthy of consideration. We consider three different learning tasks. The left-most plot shows results from an XGBoost \cite{Chen:2016:XST:2939672.2939785} model used for breast cancer diagnosis. The middle plot shows results from word-level sentiment analysis task using a BERT model \cite{perez2021pysentimiento} like in Fig.~\ref{fig:sentiment_mobius}. The right-most plot shows results from a multiple choice question and answer task also using a BERT model \cite{bertMC}. Error bars represent standard deviation over 10 different instances. Details for each setting are in Appendix~\ref{apdx:fig_examples}. In all cases, the number of features $n \approx 20$, for which it is possible to perform the full M\"obius transform.
    On the top row, we plot achievable faithfulness $R^2$ as a function of sparsity. We observe that in \emph{all} cases, faithfulness approaching $1$ requires only a few thousand M\"obius coefficients, motivating our sparsity assumption. The bottom row of plots considers achievable faithfulness vs. degree, i.e., what $R^2$ can be achieved using only M\"obius coefficients $\hat{F}$ up to a given degree. Here we observe that in nearly all cases, low-degree coefficients suffice to get quite small $R^2$, motivating our low-degree assumption.}
    \label{fig:sparse_examples}
\end{figure}


\section{Understanding Assumptions: Sparsity and Low Degree}
Computing the forward transform \eqref{eq:inverse_transform} typically requires sampling all $2^n$  input combinations, an infeasible task, even for modest $n$. For an arbitrary $f$, one cannot do any better. In fact, the same is true of the Shapley value, yet, computational tools like SHAP \cite{Lundberg2017} exist because practical functions of interest are \emph{not arbitrary}. 
To help understand this, we define \emph{faithfulness} for an explanation model $\hat{f}$:
\begin{equation}
    R^2 = 1 -  \lVert \hat{f} - f \rVert^2/\left\lVert f \right\rVert^2, \text{where} \left\lVert f \right\rVert^2 = \sum_{\bbm \in \bbZ_2^n}f(\bbm)^2.
\end{equation}
Note that this corresponds to the standard definition of $R^2$ in statistics when $f$ is zero-mean, and we generally define $f$ such that this is the case. A good explanation model should have a high $R^2$, a succinct representation, and most importantly, be easily computed.
For the M\"{o}bius transform, we aim to learn coefficients $\hat{F}(\bk)$ efficiently and construct $\hat{f}$ using the inverse transform \eqref{eq:inverse_transform}. 
With no restrictions on $\hat{F}(\bk)$ we can achieve $R^2 = 1$, but this fails to meet our simplicity criterion. Fortunately, many real-world functions are \emph{sparse}—only a few $\hat{F}(\bk)$ coefficients need to be non-zero to yield $R^2 \approx 1$. 
Fig.~\ref{fig:sparse_examples} considers three machine learning models for breast cancer diagnosis, sentiment analysis, and question answering respectively. In all three cases, we find that we only need a small number of M\"obius coefficients to achieve $R^2 \approx 1$. 
Furthermore, real-world functions are \emph{low degree}, such that those small number of non-zero coefficients satisfy $\abs{\bk} \leq t$ for some small $t$. This results in a much more compact representation and as we shall see, also enables efficient computation. 

Fig.~\ref{fig:sparse_examples} validates our assumption for the deep-learning models mentioned above. Further investigation of the spectral properties of explanation functions could be a promising avenue for future research, as these properties appear to apply widely.
Our formal statements of assumptions are given below:

\begin{assumption} \label{ass:unif}($K$ Uniform Interactions)
    $f : \bbZ_2^n \mapsto \bbR$ has a M\"{o}bius transform of the following form: $\bk_1, \dotsc, \bk_K$ are sampled uniformly at random from $\bbZ_2^n$, and have $F(\bk_i) \neq 0, \; \forall i \in [K]$, but $F(\bk) = 0$ for all other $\bk \in \bbZ_2^n$.
\end{assumption}

\begin{assumption} \label{ass:low}($K$ $t$-Degree Interactions)
    $f : \bbZ_2^n \mapsto \bbR$ has a M\"{o}bius transform of the following form: $\bk_1, \dotsc, \bk_K$ are sampled uniformly from $\{\bk : \abs{\bk} \leq t, \bk \in \bbZ_2^n\}$, and have $F(\bk_i) \neq 0, \; \forall i \in [K]$, but $F(\bk) = 0$ other $\bk \in \bbZ_2^n$.
\end{assumption}

\textbf{Assumption Limitations } By assuming that the non-zero coefficients are uncorrelated and uniformly distributed, we aim to understand the \emph{fundamental difficulty} in learning a sparse M\"obius transform. 
Correlation between non-zero coefficients means identifying one coefficient would tell us information about the locations of the others, which can be further exploited. The existence of a scheme that works well under the uniform setting suggests that it is possible to solve the problem where correlations between interactions exist. We also consider \emph{exact} sparsity in our assumptions. In practice, these ``zero" coefficients may instead have some small magnitude. We investigate this in Section~\ref{subsec:noisy}.

\section{Algorithm Overview} 

\begin{wrapfigure}{R}{0.53\textwidth}
\vspace{-72pt}
    \begin{minipage}{0.53\textwidth}
\begin{algorithm}[H]
   \caption{Sparse M\"{o}bius Transform (SMT)}
   \label{alg:smt}
\begin{algorithmic}[1]
   \STATE {\bfseries Input:}
   $\bH_c \in \bbZ_2^{b \times n}$ for $c=1,\dots,C$ 
   \STATE \quad \quad \quad $\bD_c \in \bbZ_2^{P \times n}$ for $c=1,\dots,C$
  \STATE $\hat{F}(\bk) \gets 0\;\forall \bk$; $\;\cK \gets \emptyset$;

  \FOR{$c=1$ {\bfseries to} $C$}
      \FOR{$p=1$ {\bfseries to} $P$}
     \STATE $u_{c,p}(\bell) \gets f\left(\samplecp \right), \forall \bell \in \bbZ_2^b$
        \STATE $U_{c,p} \gets \fastmobius \left( u_{c,p}\right)$
        \ENDFOR
   \ENDFOR
   \STATE $\cS = \left\{ (c,\bj,\bk,v): \detect{ \bU_c(\bj)} = \cH_S(\bk, v)\right\}$
   \WHILE{$\abs{\cS} > 0$}
        \FOR{$(c,\bj,\bk,v) \in \cS$ with $\bk \in \cK$}
            \STATE $\hat{F}(\bk) \gets v$; $\cK \gets \cK \cup \{ \bk\}$
            \FOR{$c'=1$ {\bfseries to} $C$}
                \STATE $\mathrm{res} \gets \bU_{c'}(\bH_{c'} \bk) - \hat{F}(\bk)(\bOne - \bD_{c'} \bk)$
                \STATE $\bU_{c'}(\bH_{c'} \bk) \gets \mathrm{res}$
            \ENDFOR
        \ENDFOR
        \STATE Update $\cS$ : Re-run $\detect{\cdot}$
   \ENDWHILE
   \STATE {\bfseries Output:}
   $\hat{F}$
\end{algorithmic}
\end{algorithm}
\end{minipage}
\end{wrapfigure}

\subsection{Subsampling and Aliasing}
First we perform functional \emph{subsampling}: For some $b < n$ we construct $u$ according to
\begin{equation}\label{eq:generic_sampling}
    u(\bell) = f(\bbm_{\bell}), \;\; \bell \in \bbZ_2^b,\;\; \bbm_{\bell} \in \bbZ_2^n, 
\end{equation}
where we have the freedom to choose $\bbm_{\bell}$. Critically, the M\"{o}bius transform of $u$, denoted $U$, is related to $F$ via the well-known signal processing phenomenon of \emph{aliasing}: 
\begin{equation}
    U(\bj) = \sum_{\bk \in \cA(\bj)} F(\bk), \label{eq:aliasing_generic}
\end{equation}
where $\cA(\bj)$ corresponds to an \emph{aliasing set} determined by $\bbm_{\bell}$. Fig.~\ref{fig:example_pt1} shows this subsampling procedure on a ``sparsified" version of our sentiment analysis example using different $\bbm_{\bell}$.
Our goal is to choose $\bbm_{\bell}$ such that the non-zero values of $F(\bk)$ are uniformly spread across the aliasing sets, since that makes them easier to recover. If only a single $\bk$ with non-zero $F(\bk)$ ends up in an aliasing set $\cA(\bj)$, we call it a \emph{singleton}. In Fig.~\ref{fig:example_pt1}, our first subsampling generated two singletons, while our second one generated only one.
Maximizing the number of singletons is one of our goals since we can ultimately use those singletons to construct the M\"{o}bius transform. In this work, we have determined two different subsampling procedures that are asymptotically optimal under our two assumptions:
\begin{lemma}  
    Choose $\bbm_{\bell} = \sample$, which results in $\cA(\bj) = \{ \bk : \bH\bk = \bj\}$. $\bH$ is chosen as follows:
    \begin{enumerate}[topsep=0pt, itemsep=0pt, leftmargin=12pt]
        \item Under Assumption~\ref{ass:unif}, we choose $\bH = [\bI_{b \times b} \bZero_{b,n-b}]$, or any column permutation of this matrix.
        \item Under Assumption~\ref{ass:low} with $t = \Theta(n^{\alpha})$ for some $\alpha \leq 0.409$, we choose $\bH$ to be $b$ rows of a properly chosen group testing matrix.
    \end{enumerate}
    If chosen this way, non-zero indices are mapped to the $2^b$ sampling sets $\cA(\bj)$ independently and uniformly at random asymptotically, thus maximizing the number of singletons when $b = \Theta(\log(K))$.
\end{lemma}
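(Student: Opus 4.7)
The plan is to separate the lemma into two pieces: a deterministic algebraic identity describing the aliasing, and a combinatorial/probabilistic analysis of how the chosen $\bH$ distributes the support of $F$ across the $2^b$ bins. I would prove them in that order.

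First, I would verify the aliasing identity $U(\bj) = \sum_{\bk : \bH\bk = \bj} F(\bk)$ for the subsampling rule $\bbm_{\bell} = \overline{\bH^{\trans}\overline{\bell}}$. The key observation is an equivalence on the boolean lattice: the condition $\bk \leq \overline{\bH^{\trans}\overline{\bell}}$ (which appears when expanding $u(\bell) = f(\bbm_\bell) = \sum_{\bk \leq \bbm_\bell} F(\bk)$ via the inverse Mobius transform \eqref{eq:inverse_transform}) is equivalent, after unfolding the bitwise negations and the OR-sums, to $\bH\bk \leq \bell$. Applying the forward transform definition to $u$ and swapping the order of summation,
\begin{equation*}
U(\bj) \;=\; \sum_{\bk} F(\bk)\sum_{\bell : \bH\bk \leq \bell \leq \bj}(-1)^{\bOne^{\trans}(\bj-\bell)},
\end{equation*}
the inner sum is a standard inclusion-exclusion computation over the interval $[\bH\bk,\bj]$ in the boolean lattice, and equals $1$ when $\bH\bk=\bj$ and $0$ otherwise. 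This gives exactly $\cA(\bj)=\{\bk : \bH\bk = \bj\}$.

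Second, under Assumption~\ref{ass:unif}, I would show that $\bH=[\bI_{b\times b}\,\bZero_{b,n-b}]$ makes $\bH\bk$ simply the first $b$ coordinates of $\bk$ (since OR against a one-hot column is the identity). Because the $K$ supports $\bk_1,\dotsc,\bk_K$ are drawn uniformly and independently from $\bbZ_2^n$, their first $b$ coordinates are i.i.d.\ uniform on $\bbZ_2^b$, so the bin assignments $\bH\bk_i$ are independent and uniform over the $2^b$ aliasing sets, as claimed. Column permutations correspond to relabeling coordinates and leave the distribution unchanged.

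Third, and this is the main obstacle, I would handle Assumption~\ref{ass:low} with $t=\Theta(n^{\alpha})$. Here $\bH$ should be $b$ rows of a randomized group-testing matrix whose entries are i.i.d.\ $\bern(p)$ with $p$ tuned so that $(1-p)^t = 1/2$, i.e.\ $p = 1-2^{-1/t}$. For a fixed $\bk$ with $|\bk|=s\leq t$, each coordinate of $\bH\bk$ equals $\bigvee_{i:k_i=1} H_{r,i}$ which is $\bern(1-(1-p)^s)$ independently across rows; the plan is to show that for the relevant regime of $s$ the output distribution of $\bH\bk$ is close to uniform on $\bbZ_2^b$ in total variation. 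The hard part is pushing this through for all weights up to $t$ simultaneously and arguing independence of the bin assignments across the $K$ supports, while keeping $b = \Theta(\log K)$ compatible with the group-testing disjunctness requirement that drives the $\alpha<0.409$ threshold (which comes from matching the number of rows needed for decodability against the number of bins needed to singleton-ize $K$ items). I would invoke the group-testing results cited earlier to certify existence of an $\bH$ with the right number of rows, and then combine the per-support marginal uniformity with the i.i.d.\ draws of $\bk_1,\dotsc,\bk_K$ to conclude that the bin assignments are (asymptotically) independent and uniform, which is precisely the balls-into-bins condition under which $b=\Theta(\log K)$ maximizes the expected number of singletons.
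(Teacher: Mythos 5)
Your proof of the aliasing identity is correct and actually a cleaner route than the paper's. You observe that $\bk \leq \overline{\bH^{\trans}\overline{\bell}}$ is equivalent to $\bH\bk \leq \bell$ (unfolding the ORs and negations shows $(\overline{\bH^{\trans}\overline{\bell}})_i = \bigwedge_{j:\ell_j=0}\overline{H_{ji}}$, so the containment $\bk \leq \bbm_\bell$ becomes: no column of $\bH$ in $\supp(\bk)$ intersects the rows where $\ell_j=0$, i.e.\ $\bH\bk\leq\bell$). With this substitution the inner sum $\sum_{\bH\bk\leq\bell\leq\bj}(-1)^{\bOne^{\trans}(\bj-\bell)}$ is just the Mobius-inversion interval identity, giving $\one\{\bH\bk=\bj\}$ at once. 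The paper instead expands $\bbm_\bell=\bigodot_{i:\ell_i=0}\barbh_i$ and does a three-way case analysis on the quantity $I(\br)$; your version compresses the cases into a single standard lattice computation. The uniform-interaction case (your second part) is the same argument the paper gives in Lemma~\ref{cor:uniform}: $\bH\bk$ is a fixed coordinate projection of a uniformly random $\bk$.

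The low-degree case is where you have a genuine gap, and it is the hard part of the lemma. You propose an i.i.d.\ $\bern(p)$ design with $p=1-2^{-1/t}$ and note that each bit of $\bH\bk$ is $\bern(1-(1-p)^s)$ for a support of weight $s$, which is $1/2$ only when $s=t$, and then you say ``the hard part is pushing this through for all weights up to $t$'' — that is exactly the unresolved step, and it does not get resolved by passing to total variation. Worse, you are also silent on whether the bin assignments $\bH\bk_1,\dots,\bH\bk_K$ are independent when the supports overlap in coordinates, which they do with nontrivial probability once $K$ grows. The paper's proof (Appendix~\ref{apdx:low_deg_dist}) avoids both issues by a completely different mechanism: it invokes the rate-$1$ achievability result for near-constant-column-weight group testing (Theorem~\ref{thm:rate1_group_test}), argues $H(\bY^n)/T\to 1$ by sandwiching $H(\bY^n)$ between $T$ and $H(\br^n)-H(\err_n)$, and then uses the chain rule plus ``conditioning reduces entropy'' to show that almost all conditional bit entropies converge to $1$, which forces those bits to be $\bern(1/2)$ and mutually independent in the limit. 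The $\alpha<0.409$ bound is not, as you suggest, a rows-versus-bins tradeoff for singleton-izing; it is the threshold below which constant-column-weight non-adaptive group testing attains rate $1$ (the Bernoulli design you propose only reaches rate $1$ for $\alpha\leq 1/3$, so your proposed ensemble would not even give the claimed $0.409$ constant). In short: your first two parts are right, your third part both mischaracterizes the origin of the threshold and leaves the key uniformity-and-independence claim as an acknowledged but unclosed gap that the paper closes by an information-theoretic argument you would need to reproduce or replace.
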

\begin{figure}
    \centering
    \includegraphics[width=0.94\textwidth]{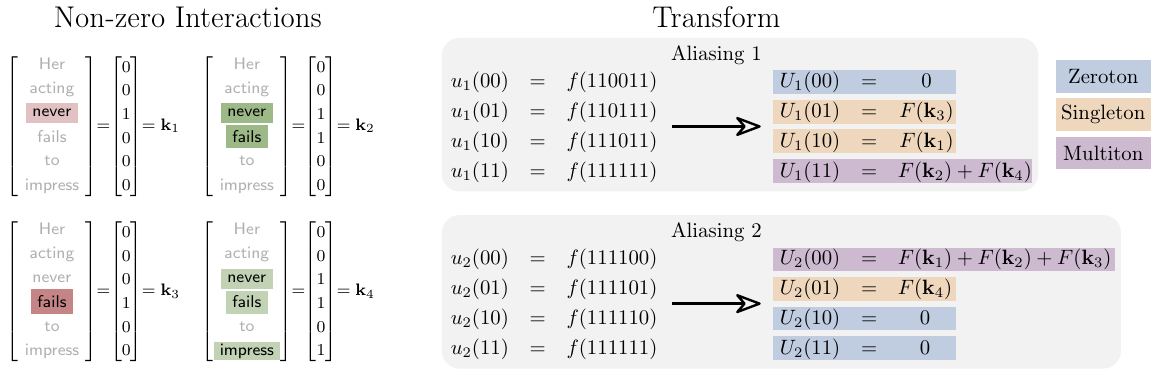}
    \caption{This figure considers a ``sparsified" version of the M\"{o}bius coefficients depicted in Fig~\ref{fig:sentiment_mobius}, keeping only the largest 4 depicted. Two different sampling choices are shown, as well as the resulting aliasing sets. In the first aliasing set, there is one zeroton, two singletons, and one multiton. In the second aliasing set, there are two zerotons, one singleton, and one multiton.}
    \label{fig:example_pt1}
\end{figure}
A detailed discussion of this result is in  Appendix~\ref{apdx:aliasing_discussion}, with an enhanced version for independence across multiple $\bH$, as is required for our overall result. The proof of this lemma touches many areas of mathematics, including the theory of monoids, information theory, and optimal group testing.  

\subsection{Message Passing to Resolve Collisions}

Singletons are useful, but we cannot immediately use them to recover $F(\bk)$. We first need to know that a given $U(\bj)$ \emph{is a singleton}. Secondly, we need to identify the value of $\bk$ that singleton corresponds to. Section~\ref{sec:singleton_detection} discusses both tasks. For now we discuss the rest of the algorithm assuming that we can accomplish both tasks. 

\begin{wrapfigure}{R}{0.48\textwidth}
    \centering
    \vspace{-48pt}
\includegraphics[width=0.48\columnwidth]{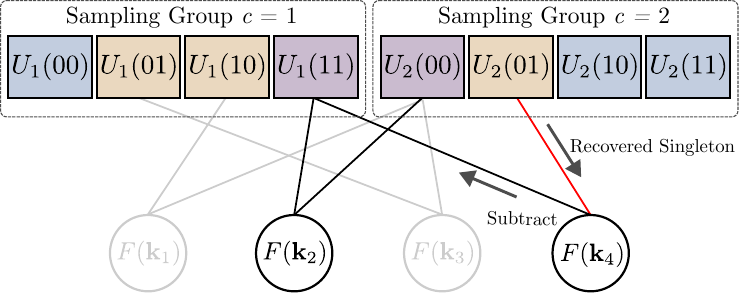}
    \caption{Depiction of our peeling message passing algorithm for the samples in Fig.~\ref{fig:example_pt1}. The singleton in $U_2(01)$ is subtracted (peeled) so we can resolve $F(\bk_2)$ from $U_1(11)$.}
    \label{fig:message-passing}
    \vspace{-6pt}
\end{wrapfigure}

\subsection{Singleton Detection and Identification}

Since we don't know the non-zero indices beforehand, collisions between multiple non-zero indices in the same aliasing set are inevitable. These are called \emph{multitons}. One approach to deal with these multitons is to repeat the procedure over again:
\begin{equation}\nonumber
    u_{c}(\bell) = f(\bbm_{c,\bell}), \iff U_{c}(\bj) = \sum_{\bk \in \cA_{c}(\bj)} F(\bk), \label{eq:generic_reduction}
\end{equation} 
$c=1,\dotsc,C$. Each time, we get different aliasing sets $\cA_{c}(\bj)$ resulting in different singletons, and thus find different $\bk$ with non-zero $F(\bk)$. While this approach works, a better approach is to combine this idea with \emph{message passing} to use known non-zero indices and values $(\bk, F(\bk))$ to resolve these multitons and turn them into singletons. The type of message passing algorithm we use is called \emph{graph peeling}. The aliasing structure can be represented as a bipartite graph. Each $U_{c}(\bj)$ is a \emph{check node}, and each non-zero coefficient $F(\bk)$ is a \emph{variable node}. The variable node $F(\bk)$ is connected to the check node $U_{c}(\bj)$ if $\bH_c\bk = \bj$. Fig.~\ref{fig:message-passing} constructs this bipartite graph for the aliasing in Fig.~\ref{fig:example_pt1}. Note that $U_1(11) = F(\bk_2) + F(\bk_4)$ is a multiton; however, in the other sub-samping group $U_2(01) = F(\bk_4)$ is a singleton. Once we resolve $U_2(01)$, we can simply subtract $F(\bk_4)$ from $U_1(11)$, allowing us to create a new singleton, and extract $F(\bk_2)$. The remaining values of $F$ both appear as singletons in the first sampling group, so we can resolve all $4$ non-zero interactions $F$ with only $8$ ($7$ unique) samples.   
Peeling algorithms were popularized in information and coding theory for decoding fountain codes \cite{Luby2002}. They can be analyzed using density evolution theory \cite{Chung2001}, which we also use as part of our proof. 


\section{Singleton Detection and Identification} \label{sec:singleton_detection}

We have discussed how to subsample efficiently to maximize singletons and how to use message passing to recover as many coefficients as possible. Now we discuss (1) how to identify singletons and (2) how to determine the $\bk^*$ corresponding to the singleton. The following result is key:

\begin{lemma} \label{lem:delaying}
    Consider $\bH \in \bbZ_2^{b \times n}$, and $f : \bbZ_2^n \mapsto \bbR$, and some $\bd \in \bbZ_2^n$. If $U$ is the M\"{o}bius transform of $u$, and $F$ is the M\"{o}bius transform of $f$ we have:
    \begin{equation} \label{eq:subsample}
        u(\bell) = f \left( \samplecgend \right) \iff U(\bj) = \sum_{\bk \leq \barbd\;\text{\emph{s.t.} } \bH \bk = \bj } F(\bk).
    \end{equation}
\end{lemma}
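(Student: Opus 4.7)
I would prove the forward direction (the reverse is immediate since the Mobius transform is a bijection between $u$ and $U$) by expanding $u(\bell)$ via the inverse Mobius relation \eqref{eq:inverse_transform}, rewriting the constraint on the indexing so it becomes \emph{linear} in $\bell$, and then collapsing the resulting alternating sum by a coordinate-wise cancellation argument of Mobius-inversion flavor.

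\textbf{Step 1: rewrite the argument of $f$.} Plugging $\bbm_\bell = \overline{\bH^{\trans}\overline{\bell}+\bd}$ into \eqref{eq:inverse_transform} gives $u(\bell) = \sum_{\bk \leq \overline{\bH^{\trans}\overline{\bell}+\bd}} F(\bk)$. Since addition on $\bbZ_2$ is $\lor$, position $i$ of $\bH^{\trans}\overline{\bell}+\bd$ is $1$ iff either $(\bH^{\trans}\overline{\bell})_i=1$ or $d_i=1$, so the constraint $\bk \leq \overline{\bH^{\trans}\overline{\bell}+\bd}$ separates into $\bk \leq \barbd$ together with $\bk \leq \overline{\bH^{\trans}\overline{\bell}}$. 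A brief coordinate-wise check converts the second: $\bk \leq \overline{\bH^{\trans}\overline{\bell}}$ says that whenever $k_i=1$ every row index $j$ with $H_{ji}=1$ has $\ell_j=1$, and since $(\bH\bk)_j = \bigvee_i H_{ji}k_i$, this is exactly $\bH\bk \leq \bell$. Hence $u(\bell) = \sum_{\bk \leq \barbd,\ \bH\bk \leq \bell} F(\bk)$.

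\textbf{Step 2: Mobius-invert and swap sums.} Applying \eqref{eq:full_mobius} to $u$, substituting Step 1, and swapping the order of summation yields $U(\bj) = \sum_{\bk \leq \barbd} F(\bk)\, S(\bk,\bj)$, where $S(\bk,\bj) := \sum_{\bell\,:\,\bH\bk \leq \bell \leq \bj}(-1)^{\bOne^{\trans}(\bj-\bell)}$. If $\bH\bk \not\leq \bj$ the range is empty and $S(\bk,\bj)=0$, so only $\bk$ with $\bH\bk \leq \bj$ contribute.

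\textbf{Step 3: coordinate-wise collapse.} The sum $S(\bk,\bj)$ factorizes across coordinates. For $i$ with $j_i=0$, $\ell_i$ is pinned to $0$; for $i$ with $(\bH\bk)_i=1$, $\ell_i$ is pinned to $1$; each remaining ``free'' position (where $j_i=1$ but $(\bH\bk)_i=0$) contributes a factor $(-1)^0+(-1)^1 = 0$. So $S(\bk,\bj)=0$ unless there are no free positions at all, i.e., unless $\bH\bk=\bj$, in which case $S(\bk,\bj)=1$. Substituting back gives $U(\bj) = \sum_{\bH\bk=\bj,\ \bk \leq \barbd} F(\bk)$, as claimed.

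\textbf{Main obstacle.} The only genuinely non-routine step is the equivalence $\bk \leq \overline{\bH^{\trans}\overline{\bell}} \iff \bH\bk \leq \bell$ used in Step 1; one has to be careful with the OR-valued matrix arithmetic and the interplay of two negations. Once that equivalence is in hand, the rest is a standard Mobius-lattice inclusion-exclusion that collapses cleanly, and the $\iff$ in the statement comes for free from the invertibility of the Mobius transform.
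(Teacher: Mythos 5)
Your argument is correct and follows essentially the same route as the paper's: expand $u$ via the inverse Mobius relation, apply the forward transform, interchange the two sums, and show that the resulting inner alternating sum collapses to the indicator $\one\{\bH\bk=\bj\}$ (the paper's Case~1/Case~2 analysis of $I(\br)$ in Appendix~\ref{apdx:aliasing_proof} is the same cancellation your coordinate-wise factorization of $S(\bk,\bj)$ establishes). Your reformulation of the constraint $\bk \leq \overline{\bH^{\trans}\overline{\bell}}$ as the linear condition $\bH\bk\leq\bell$ is a tidy simplification of the paper's $\bigodot_{i:\ell_i=0}\barbh_i$ bookkeeping, but it is a notational rather than a conceptual difference.
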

The proof can be found in Appendix~\ref{apdx:proof_delays}. The form of \eqref{eq:subsample} allows us to shrink the aliasing set in a controlled way. Define $\bd_{c,0} \defeq \bZero_{n}$, and $\bD_c \in \bbZ_2^{P \times n}$ for some $P > 0$. The $i^{\text{th}}$ row of $\bD_c$ is denoted $\bd_{c,p}$, $p=1\dotsc,P$. Using these vectors, we construct $C(P+1)$ different subsampled functions $u_{c,p}$:
\begin{equation}
    u_{c,p}(\bell) = f \left( \samplecp \right), \;\forall \bell \in \bbZ_2^b.
\end{equation}
We compute the M\"{o}bius transform of each $u_{c,p}$ denoted by $U_{c,p}$ and construct a vector-valued function $\bU_c(\bj) \defeq [U_{c,0}(\bj), \dotsc, U_{c,P}(\bj)]^{\trans}$.
The goal of singleton detection is to identify when $\bU_{c}(\bj)$ reduces to a single term, 
 and for what value $\bk$ that term corresponds to. To do so,  we define the $\type{\cdot}$:
\begin{enumerate}[topsep=0pt, itemsep=0pt, leftmargin=12pt]
    \item $\type{\bU_{c}(\bj)} = \mathcal{H}_{Z}$ denotes a \emph{zeroton}, for which there does not exist $F(\bk) \neq 0 \text{ s.t. } \bH \bk = \bj$.
    \item $\type{\bU_{c}(\bj)} = \mathcal{H}_{S}(\bk, F(\bk))$ denotes a \emph{singleton} with only one $\bk$ with $F(\bk) \neq 0 \text{ s.t. } \bH \bk = \bj$.
    \item $\type{\bU_{c}(\bj)} = \mathcal{H}_{M}$ denotes a \emph{multiton} with more than one $\bk$ with $F(\bk) \neq 0 \text{ s.t. } \bH \bk = \bj$.
\end{enumerate}

To describe our type estimation rule, we define the following ratios between elements of $\bU_{c}(\bj)$:
\begin{equation}
    y_{c,p} \defeq 1 - \frac{U_{c,p}(\bj)}{U_{c,0}(\bj)},\;\;p = 1,\dotsc, P,
\end{equation}
and construct the vector $\by_c \defeq [y_{c,1}, \dotsc, y_{c, P}]^{\trans}$. Then, our estimate for the type is given by
\begin{equation}\label{eq:noiseless_type}
\def\stackalignment{l}
   \detect{\bU_{c}(\bj)} \defeq\begin{cases}
        \mathcal{H}_{Z}, & \hspace{-2pt}\bU_{c}(\bj) = \bZero \\
        \mathcal{H}_{M}, &\hspace{-2pt} \by_c \notin \{0,1\}^{P}\\
        \mathcal{H}_{S}(\bk, F(\bk)), &\hspace{-2pt} \by_c \in \{0,1\}^{P}.
    \end{cases}
\end{equation}

By considering the definition of $\bU_c$ it is possible to show that if $\type{\bU_{c}(\bj)} = \mathcal{H}_{S}(\bk^*, F(\bk^*))$, then
    $\by_c = \bD_c \bk^*$.
To recover $\bk^*$, it always suffices to take $\bD_c = \bI$, and thus $P=n$. It follows immediately that $\detect{\bU_{c}(\bj)} = \type{\bU_{c}(\bj)}$ under this choice. We can't do better if we don't have any extra information about $\bk^*$, but we can if we know $\abs{\bk^*} \leq t$ as we show below. Going back to our example in Fig.~\ref{fig:example_pt1}, with $\bD_c = \bI$ we use a total of $8\times7 = 56$  samples as opposed to $2^6 = 64$. 

\paragraph{Singleton Identification in the Low-Degree Setting } Let's say we want to determine the singleton from $U_1(10)$ in Fig.~\ref{fig:example_pt1}, and we know $\abs{\bk^*}\leq 1$. The following $\bD_c$ suffices:
\begin{equation} \label{eq:D_def}
    \bD_c = 
    \begin{pmatrix}
         1&1&1&1&0&0\\
         1&1&0&0&1&1\\
         1&0&1&0&1&0
    \end{pmatrix}.
\end{equation}
This matrix is essentially doing a binary search. The first two rows check which third of the vectors the $1$ is in, and the final row resolves any remaining ambiguity. It requires $P=3$, rather than the $P=6$  for $\bD_c = \bI$. If all non-zero $F(\bk)$ had satisfied $\abs{\bk} \leq 1$, we could use this matrix for our example in Fig.~\ref{fig:example_pt1}. However, we only have $\abs{\bk} \leq 3$ for non-zero $F(\bk)$ in this example, so $\bD_c$ as in \eqref{eq:D_def} does not suffice. In the case of general $\abs{\bk} \leq t$, \cite{Bay2022} says that for any scaling of $t$ with $n$, there exists a group testing design $\bD_c$ with $P = O(t \log(n))$ that can recover $\bk^*$ in the limit as $n \rightarrow \infty$ with vanishing error in $\poly(n)$ time, also implying $\detect{\bU_{c}(\bj)}$ has vanishing error (see Appendix~\ref{subsec:detect_noiseless_low_deg}).  

\paragraph{Extension to Noisy Setting}\label{subsec:noisy}
We now relax the assumption that most of the coefficients are \emph{exactly} zero. To do this, we assume each subsampled M\"{o}bius coefficient is corrupted by noise: 
\begin{equation}\label{eq:noisy_sampling}
        U_{c,p}(\bj)  = \sum_{\bk \leq \barbd_p\; \text{ s.t. } \bH_c \bk = \bj } F(\bk) + Z_{c,p}(\bj),
\end{equation}
where $Z_{c,p}(\bj) \overset{i.i.d.}{\sim} \cN( 0, \sigma^2)$. There are two main changes that must be made compared to the noiseless case. First, we must place an assumption on the magnitude of non-zero coefficients $\abs{F(\bk_i)}$, such that the signal-to-noise ratio (SNR) remains fixed. Secondly, the matrix $\bD_c$ must be modified. It now consists of two parts: $\bD_c = [\bD_c^{1};\bD_c^{2}]$. We design $\bD_c^{2} \in \bbZ_2^{P_2 \times n}$ as a standard noise robust Bernoulli group testing matrix with $P_2 = O(t\log(n))$ tests, which suffices for \emph{singleton identification} under any fixed SNR \citep{Scarlett2019}. However, unlike the noiseless case, the samples from the rows of $\bD_c^{2}$ are not enough to ensure a vanishing error for \emph{singleton detection} in the $\detect{\cdot}$ procedure. To solve this, we design $\bD_c^{1} \in \bbZ_2^{P_1 \times n}$ as a Bernoulli group testing matrix with a different parameter. In Appendix~\ref{apdx:noisy_detect}, we show this modified version of $\detect{\cdot}$ has vanishing error if $P_1 = O(t\log(n))$.

\section{Results}

Now that we have discussed all components of the algorithm, we present our theoretical guarantees:

\begin{restatable}{theorem}{thmnoiseless} (Recovery with $K$ Uniform Interactions) \label{thm:noiseless_reconstruct}
Let $f$ satisfy Assumption~\ref{ass:unif} for some $K = O(2^{n\delta})$ with $\delta \leq \frac{1}{3}$. For $\{\bH_c\}_{c=1}^{C}$ chosen as in Lemma~\ref{cor:uniform} with $b = O(\log(K))$, $C=3$ and $\bD_c = \bI$, Algorithm~\ref{alg:smt} exactly computes the transform $F$ in $O(Kn)$ samples and $O(Kn^2)$ time complexity with probability at least $1 - O(1/K)$.
\end{restatable}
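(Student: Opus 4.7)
The plan is to decompose the argument into three ingredients that combine cleanly at the end: (i) uniform independent bucketing of the $K$ non-zero indices by the three chosen hash matrices $\bH_c$; (ii) exact singleton detection when $\bD_c=\bI$; and (iii) peeling on a random bipartite graph, upgraded to a $1-O(1/K)$ failure probability.

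For (i), I would invoke the strong ``multi-$\bH$'' form of the subsampling lemma referenced in the paper. Each $\bH_c=[\bI_b\;\bZero]$ up to a column permutation $\pi_c$; choose the three permutations $\pi_1,\pi_2,\pi_3$ so that their supported column sets are pairwise disjoint. For any uniform $\bk_i\in\bbZ_2^n$, the tuple $(\bH_1\bk_i,\bH_2\bk_i,\bH_3\bk_i)$ then reads only $Cb$ distinct coordinates of $\bk_i$, so it is uniform on $(\bbZ_2^b)^3$, and across different $i$ the tuples are independent since the $\bk_i$'s are. Disjointness requires $Cb\le n$, and with $b=\lceil\log_2 K\rceil+O(1)=n\delta+O(1)$ this forces $\delta\le 1/3$, which is where that hypothesis of the theorem originates.

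For (ii), when $\bD_c=\bI$ the identity \eqref{eq:matrix_product} shows that at a true singleton $\bU_c(\bj)$ we have $\by_c=\bk^*\in\{0,1\}^n$, so the rule \eqref{eq:noiseless_type} always declares it a singleton and returns the correct index, with value $F(\bk^*)=U_{c,0}(\bj)$. A multiton can be misclassified only if the ratio vector $\by_c$ accidentally falls in $\{0,1\}^n$, which imposes $n$ algebraic equalities on the values $\{F(\bk_j)\}$ colliding in that bin; this is a measure-zero event under any absolutely continuous law on the coefficients, and I would dispatch it with a short genericity argument. For (iii), conditional on (i)--(ii), recovery reduces to peeling on the random bipartite graph in which each of $K$ variable nodes picks one check node uniformly and independently in each of $C=3$ banks of size $B=2^b$. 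Setting $b=\log_2 K+c_0$ for a sufficiently large absolute constant $c_0$ pushes the load $K/B$ below the known $C=3$ peeling threshold for this sparse-graph-code ensemble (the same structure used in the SPRIGHT line of work). Density evolution then yields asymptotic success; I would upgrade to a $1-O(1/K)$ bound by combining a Wormald/Doob martingale concentration argument around the density-evolution trajectory with a first-moment bound ruling out non-trivial stopping sets, as developed in Appendix~\ref{apdx:density}. The complexity claims are then bookkeeping: $C(P+1)2^b=O(Kn)$ samples; the $O(n)$ fast Mobius transforms cost $O(b\cdot 2^b)$ each, contributing $O(Kn^2)$; detection scans $O(K)$ bins at $O(n)$ work each; peeling performs $K$ removals at $O(Cn)$ cost each; everything sums to $O(Kn^2)$ time.

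The main obstacle is the quantitative piece of (iii). Density evolution by itself only delivers a failure probability that vanishes as $n\to\infty$; obtaining the sharp $1-O(1/K)$ bound demanded by the theorem requires explicit concentration on the finite random bipartite graph and a careful union bound over stopping sets of all sizes, and this is the step where I expect the most technical work. The genericity argument needed in (ii) is a minor secondary subtlety, but one that should be spelled out rather than swept under the rug.
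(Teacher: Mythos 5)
Your proposal follows the same decomposition as the paper's proof: (i) disjoint column supports of the three $\bH_c$ give mutually independent, uniform bin assignments for the $K$ nonzero indices (this is Lemma~\ref{cor:uniform}, proved in Appendix~\ref{apdx:unif_deg_dist}, and yes, $3b\le n$ is where $\delta\le 1/3$ comes from); (ii) with $\bD_c=\bI$, a true singleton gives $\by_c=\bk^*$ exactly; and (iii) peeling with $C=3$ banks of size $2^b=\Theta(K)$. For (iii) the paper does not rederive the $1-O(1/K)$ concentration --- it cites the SPRIGHT result as Theorem~\ref{thm_peeling} in Appendix~\ref{apdx:density}, so the Wormald/martingale/stopping-set machinery you anticipate is outsourced to \citet{li2015spright}; you need only check its three hypotheses (i.i.d.\ Poisson check-degree limit, constant $C\ge 3$, $2^b=O(K)$), which is exactly what Lemma~\ref{cor:uniform} and the choice of $b$ buy you.

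The one place you diverge is instructive: you correctly flag that a multiton can pass the test $\by_c\in\{0,1\}^n$ only under exact algebraic cancellations among the colliding $F(\bk_j)$, and you say this needs a genericity argument. The paper's treatment of this point in Section~\ref{sect_uniform_singleton} is actually \emph{weaker} than your concern suggests it should be: it only argues the impossibility for multitons of size exactly two, and even there it implicitly needs $F(\bk_1)+F(\bk_2)\neq 0$ (else $U_{c,0}(\bj)=0$ and the ratio is undefined). For a multiton of size $m\ge 3$ one can construct adversarial nonzero values --- e.g.\ $\bk_1,\bk_2,\bk_3$ with $F(\bk_1)=F(\bk_3)=1$, $F(\bk_2)=-1$ and suitable supports --- for which $\by_c\in\{0,1\}^n$ and the bin is misclassified as a singleton with a wrong index. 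Assumption~\ref{ass:unif} places no distributional restriction on the nonzero \emph{values}, so this cannot be dismissed as measure zero without adding a hypothesis. So your ``secondary subtlety'' is in fact a real gap shared by the paper: either a genericity hypothesis on $\{F(\bk_i)\}$ should be made explicit, or the argument needs to show (probabilistically over the random indices) that adversarial cancellations are avoided with probability $1-O(1/K)$. Apart from that shared loose end, your route and the paper's are the same.
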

\begin{restatable}{theorem}{thmnoisy}\label{thm:noisy_reconstruct} (Noise-Robust Recovery with $K$ $t$-Degree Interactions)
Let $f$ satisfy Assumption~\ref{ass:low} for $K = O(\poly(n))$ and $t = \Theta(n^{\alpha})$ with $\alpha \leq 0.409$. (Only for the noisy case, further assume $U_{c,p}$ be of the form \eqref{eq:noisy_sampling} and let non-zero $\abs{F(\bk)} = \rho$.) Then, for $\{\bH_c\}_{c=1}^{C}$ chosen as in Lemma~\ref{cor:low_deg_matrix} with $b = O(\log(K))$, $C=3$, and $\bD_c$ chosen as a suitable group testing matrix, 
Algorithm~\ref{alg:smt} exactly computes the transform $F$ in $O(Kt\log(n))$ samples and $O(K\poly(n))$ time complexity with probability at least $1 - O(1/K)$ in both the noiseless and noisy case.
\end{restatable}
The proof of Theorem~\ref{thm:noiseless_reconstruct} and \ref{thm:noisy_reconstruct} is provided in Appendix~\ref{apdx:final_proofs}. It combines results on aliasing, singleton detection and graph peeling.
The requirement $\abs{F(\bk)} = \rho$ is only due to limitations of group testing theory, and a lower bound suffices in practice. In addition to our theoretical results, we also conduct numerical experiments on synthetic and real word models, which are discussed below.

\begin{figure}[t]
\centering
    \begin{subfigure}[t]{0.33\textwidth}
    \includegraphics[width=\textwidth]{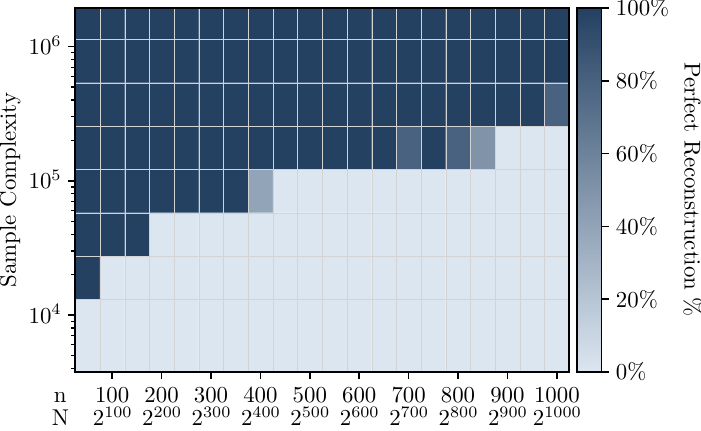}
    \caption{}
    \label{fig:noiselessUniformSamp}
    \end{subfigure}%
    \hfill
    \begin{subfigure}[t]{0.31\textwidth}
    \includegraphics[width=\textwidth]{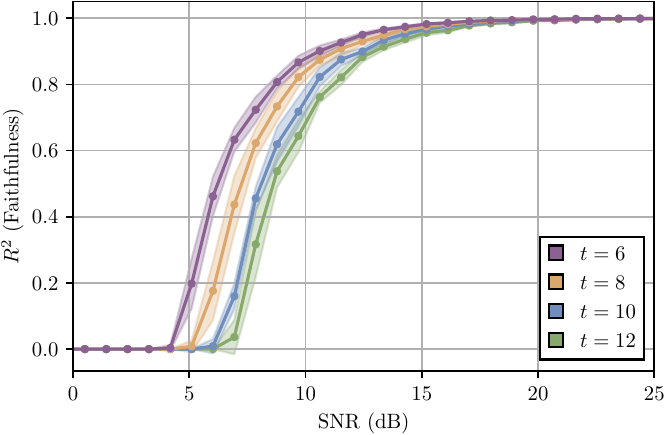}
    \caption{}
    \label{fig:snr}
    \end{subfigure}
    \hfill
    \begin{subfigure}[t]{0.33\textwidth}
    \includegraphics[width=\textwidth]{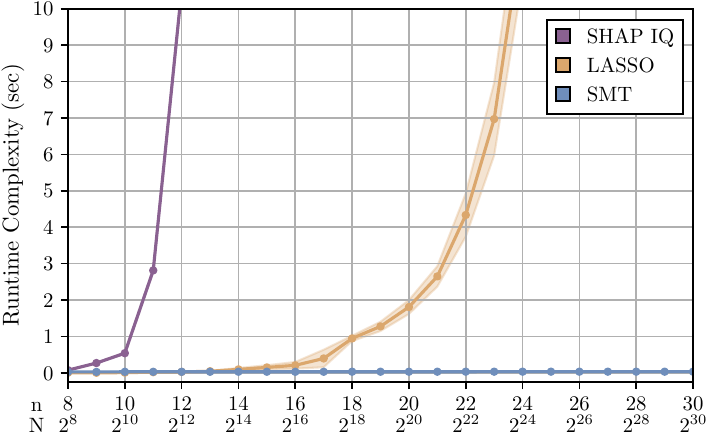}
    \caption{}
    \label{fig:low-degree-runtime}
    \end{subfigure}
\caption{(a) Perfect reconstruction against $n$ and sample complexity under Assumption~\ref{ass:unif}. Holding $C=3$, we scale $b$ to increase the sample complexity. We observe that the number of samples required to achieve perfect reconstruction is scaling linearly in $n$ as predicted. (b) Plot of the noise-robust version of our algorithm. For various values of $t$, we set $n=500$ and $K=500$, using a group testing matrix with $P=1000$. We plot the performance of our algorithm against SNR, measured in terms of the $R^2$. Error bands represent the standard deviation over $10$ runs. (c) Runtime comparison of SMT, SHAP-IQ \citep{fumagalli2023shapiq}, and $t=5$ order FSI via LASSO \cite{tsai2023faith}. All are computing the M\"{o}bius transform in the setting where all non-zero interactions are order $t$, $K=10$. SMT easily outperforms both, while the other methods become intractable. Error bands represent standard deviation over $10$ runs.}
\end{figure}
\begin{figure}[t]
    \centering
    \includegraphics[width=\textwidth]{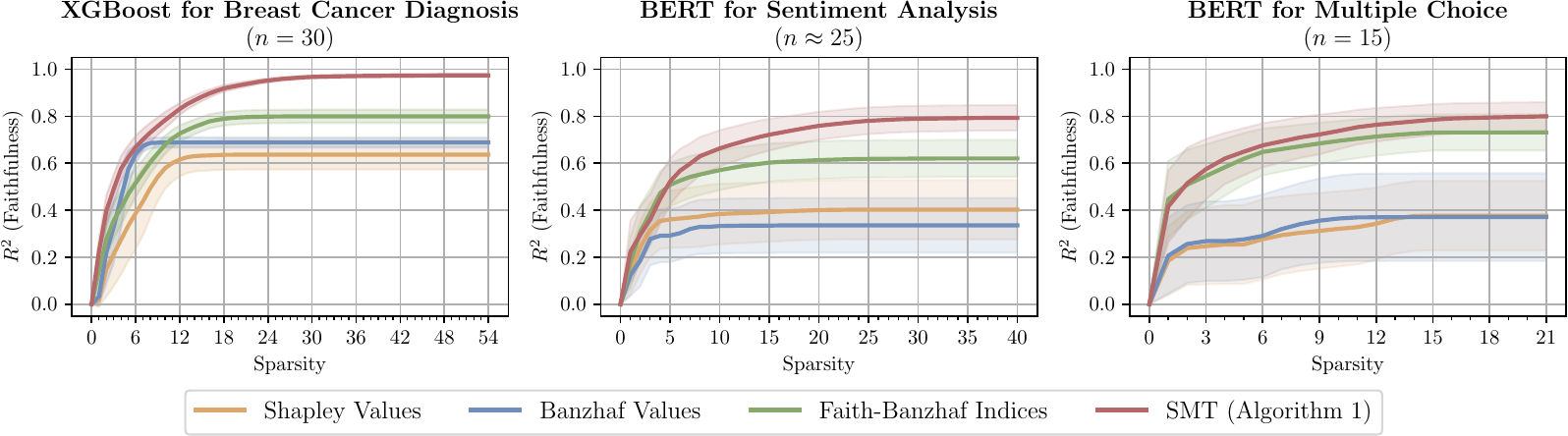}
    \caption{ Since our ultimate goal is compact, meaningful and computable representations, we compare representations generated from SMT (Algorithm \ref{alg:smt}) with other popular explanation models. 
    We plot $R^2$ (faithfulness) vs. the number of terms used in the representation (sparsity). 
    For Shapley and Banzhaf values, to generate an $r$-sparse representation, we use the top $r$ magnitude values.  
    For SMT and Faith-Banzhaf, we do a slightly more sophisticated refinement procedure. 
    Faith-Banzhaf is included because it is the first-order metric that maximizes $R^2$. As observed in the breast cancer and sentiment analysis tasks, SMT can achieve better $R^2$ than other approaches by utilizing higher-order interactions. In the sentence-level multiple choice dataset, we observe less of a difference, since in those cases the entire answer to a question is usually contained in a single sentence, thus higher-order interactions provide little advantage. 
    }
    \label{fig:real_experiment}
\end{figure}

\paragraph{Synthetic simulations} We tested SMT's efficacy on functions satisfying Assumption~\ref{ass:unif} and \ref{ass:low}, setting non-zero $F(\bk)$ uniformly in $[-1,1]$. SMT is implemented as in Algorithm~\ref{alg:smt}, with group testing decoding via linear programming (see Appendix~\ref{apdx:group_test_imp}). 
Fig.~\ref{fig:noiselessUniformSamp} plots the percent of cases where SMT achieves $R^2 = 1$ with fixed $K=100$ at different sample complexities and values of $n$. 
We \emph{vastly outperform} the naive approach: when $n=1000$, we get perfect reconstruction with only $10^{-294}$ percent of total samples!
Fig.~\ref{fig:snr} assesses SMT under noise, plotting $R^2$ against SNR with $K=500$, $n=500$, and $P=1000$.
Fig.~\ref{fig:low-degree-runtime} plots the runtime for SMT and competing methods. Test functions $f$ have $K=10$ non-zero M\"{o}bius coefficients at locations that satisfy $\abs{\bk} = 5$ (restricted to equality due to limitations in SHAP-IQ code).  We compare against SHAP-IQ \cite{fumagalli2023shapiq} configured to compute $5^{\text{th}}$ order FSI, as well as the method of \cite{tsai2023faith} which computes $5^{\text{th}}$ order FSI via LASSO. As shown in Appendix~\ref{apdx:mobius_relation}, the $t^{\text{th}}$ order FSI are exactly the $t^{\text{th}}$ order M\"{o}bius coefficients for our chosen $f$.
This figure exemplifies that SMT is the sole feasible method for identifying interactions on the scale of $n\geq 100$. 
Additional simulations and discussion can be found in Appendix~\ref{apdx:additional_sim}.

\paragraph{Real world models} 
Our objective is a computable, faithful, and compact representation of real-world machine-learned functions. Fig.~\ref{fig:real_experiment} addresses this goal head-on, by plotting $R^2$ against the number of terms used in the representation (sparsity) for SMT and other popular model explanation approaches.
We consider three different tasks: The first is an XGboost model for breast cancer diagnosis, and the other two are transformer-based BERT models for the tasks of sentiment analysis and multiple choice question answering respectively. Appendix~\ref{apdx:fig_examples} discusses the setup in great detail.     
For Shapley and Banzhaf values, to generate an $r$-sparse representation, we use the top $r$ magnitude values. For SMT and Faith-Banzhaf, we do a slightly more sophisticated refinement procedure using LASSO \cite{tibshirani1996regression}, described in the Appendix.
We observe that for the breast cancer and sentiment analysis tasks, SMT can generate representations that, with the same number of terms, achieve a much higher $R^2$. This is done by identifying interactions between inputs that are important to the model output. 
Interestingly, in the case of the multiple choice model, there is less of a difference between the Faith-Banzhaf Indices and the SMT representations. This is likely because in the corresponding dataset, answers to the questions are usually contained in single sentences, making interactions less important.

\section{Conclusion}
Identifying interactions between inputs is an important open research question in machine learning, with applications to explainability, data valuation, and many other problems. We approached this problem by studying the M\"{o}bius transform, which is a representation over the fundamental interaction basis. We introduced several new tools to the problem of identifying interactions. The use of ideas from sparse signal processing and group testing has allowed SMT to operate in regimes where all other methods fail due to computational burden. Our theoretical results guarantee asymptotic exact reconstruction and are complemented by numerical simulations that show SMT performs well with finite parameters and also under noise.

\paragraph{Limitations}
Our assumption of independently sampled interactions was made for information-theoretic hardness and may not hold in some settings where correlated interactions exist. For instance, in the sentiment problem in Fig.~\ref{fig:sentiment_mobius}, words with strong $2^{\text{nd}}$ order interactions are likely to appear together in important $3^{\text{rd}}$ order interactions. In such settings, correlation is exploitable, so a more specific algorithm can likely exploit this correlation and eliminate this assumption. Another limitation is that we have focused on taking a sparse M\"obius \emph{transform} in this work. In practice, we may be more interested in taking a sparse \emph{projection} onto a subset of low-order terms. 

\paragraph{Future Work}
Applying SMT to real-world tasks like understanding protein language models \cite{lin2022language}, LLM chatbots \cite{openai2023gpt4} or diffusion models \cite{kingma2021}, would be insightful. Working with large and complicated models will likely require further improvements to robustness---both in terms of dealing with noise from small but non-zero interactions, and dealing with potential correlations between interactions. Some interesting ideas in this direction could be using more standard statistical ideas like in \cite{fumagalli2023shapiq}, or considering concepts from adaptive group testing. Finally, it would be interesting to see if the techniques used here can improve other algorithms for computing Shapley or Banzhaf values directly.

\newpage
\bibliography{main}
\bibliographystyle{IEEEtranN}

\newpage
\appendix
\section{Relationship between Möbius Transform and Other Importance Metrics}\label{apdx:mobius_relation}
We begin with some notation. We define the Möbius basis function (which are all possible products of inputs) as:
\begin{equation}
    b_{\bk}(\bbm) \defeq \prod_{i:k_i=1} m_i.
\end{equation}
Now we define the following sub-spaces of pseudo-boolean function in terms of the linear span of Möbius basis functions:
\begin{equation}
    \cM_t \defeq \linspan \{b_{\bk}(\bbm) : \abs{\bk} \leq t \}.
\end{equation}
 Now we define the projection operator $\proj_{\mu}(f, \cD)$, as the projection of the function $f$ onto the function space $\cD$ with respect to the measure $\mu$. If $g(\bbm) = \proj_{\mu}(f, \cD)$, we write its decomposition as $g(\bbm) = \sum_{\bk \in \bbZ_2^n} c(f,\cD, \mu, \bk) b_{\bk}(\bbm)$. Note that linear independence implies the uniqueness of this representation.

\paragraph{Shapley Value} The Shapley values $\sv(i)$ \cite{shapley1952} of the inputs $m_i,i=1,\dotsc,n$ with respect to the function $f$ are \cite{Hammer1992}:
\begin{equation}
    \sv(i) = c(f,\cM_1,\sigma, \be_i) = \sum_{\bk : k_i = 1} \frac{1}{\abs{\bk}}F(\bk),
\end{equation}
where $\sigma$ is the Shapley kernel. 
$\sv(i) = F(\be_i)$ when $f$ is a linear function.

\paragraph{Banzhaf Index} The Banzhaf index $\bz(i)$ of the inputs $m_i,i=1,\dotsc,n$ with respect to the function $f$ are \cite{Hammer1992}:
\begin{equation}
    \bz(i) = c(f,\cM_1,\mu, \be_i) = \sum_{\bk : k_i = 1} \frac{1}{2^{\abs{\bk} -1 }} F(\bk),
\end{equation}
where $\mu$ is the uniform measure. $\bz(i) = F(\be_i)$ when $f$ is a linear function.

\paragraph{Faith Shapley Interaction Index} The $t^{\text{th}}$ order Faith Shapley interaction index $\sv_t(\bk)$ for $\abs{\bk} \leq t$ \cite{tsai2023faith} is 
\begin{equation}
    \sv_t(\bk) = c(f,\cM_t,\sigma, \bk),
\end{equation}
where $\sigma$ is the Shapley kernel. $\sv_t(\bk) = F(\bk)$ when $f$ is a $t^{\text{th}}$ order function, i.e., $F(\bk) = 0$ when $\abs{\bk} > t$.

\paragraph{Faith Banzhaf Interaction Index} The $t^{\text{th}}$ order Faith Shapley interaction index $\bz_t(\bk)$ for $\abs{\bk} \leq t$ \cite{tsai2023faith} is 
\begin{equation}
    \bz_t(\bk) = c(f,\cM_t,\mu, \bk),
\end{equation}
where $\mu$ is the uniform measure. $\bz_t(i) = F(\bk)$ when $f$ is a $t^{\text{th}}$ order function, i.e., $F(\bk) = 0$ when $\abs{\bk} > t$.

\paragraph{Shapley-Taylor Interaction Index}
The $t^{th}$ order Shapley-Taylor Interaction Index \cite{dhamdhere2019shapley} $\stii_t(\mathbf{k})$ is:
\begin{equation}
    \stii_t(\mathbf{k}) = \begin{cases} 
      F(\bk) & \abs{\bk} < t \\
      c(f - f^{t-1}, \cM_t - \cM_{t-1}, \sigma, \bk) & \abs{\bk} = t,
   \end{cases}, \quad f^{t-1}(\bbm) = \sum \limits_{\substack{\bk \leq \bbm \\ \abs{\bk} < t}} F(\bk),
\end{equation}
where $\sigma$ is the Shapley kernel. 
 Explicitly, it can be shown that:
\begin{equation}
    c(f - f^{t-1}, \cM_t - \cM_{t-1}, \sigma, \bk) = \sum_{\bk \leq \bk'} \binom{|\bk'|}{t}^{-1} F(\bk') \text{ for } \abs{\bk} = t.
\end{equation}
As a consequence of the above, we have $\stii_t(\bk) = \sv_t(\bk) = F(\bk)$ when $f$ is a $t^{th}$ order function, i.e., $F(\bk) = 0$ for $\abs{\bk} > t$. 

\section{Experiment Details}\label{apdx:fig_examples}
Let $f$ be the real-world function we wish to explain. In subsections \ref{subsec:XgBoost}, \ref{subsec:BertSA}, and \ref{subsec:BertMC}, we describe how we formed these functions for the tasks of breast cancer diagnosis, sentiment analysis, and multiple choice answering respectively. For our experiments, we plot the $R^2$ (faithfulness) for a variety of explanation models $\hat{f}$, measured through:
\begin{equation*}
    R^2 = 1 -  \frac{\left \lVert \hat{f} - f \right \rVert_2^2}{\left \lVert f - \overline{f} \right \rVert_2^2}.
\end{equation*}
where we use the notation $\lVert f \rVert_2^2 = \sum_{\bbm \in \bbZ_2^n} f(\bbm)^2$.

In Figure \ref{fig:sparse_examples}, we consider settings where $n \approx 20$, such that we can run optimization procedures to find faithful approximations that are \textbf{sparse} and \textbf{low degree}. 

\textbf{Achievable Low Degree:}
To find the best approximation $\hat{f}$ of up to degree $t$, we  solve the following quadratic programming problem:
\begin{align}
& \min_{\hat{f}, \bm{\alpha}}  \left \lVert \hat{f} - f \right\rVert_2^2\\
& \;\; \text{s.t.} \quad \hat{f}(\mathbf{m}) = \sum_{\mathbf{k} \leq \mathbf{m}, |\mathbf{k}| \leq t} \alpha_{\mathbf{k}}, \forall \bbm.
\label{eq:lowDegApp}
\end{align}

\textbf{Achievable Sparsity:} 
On the other hand, we cannot efficiently find the optimal faithful $K$-sparse approximation due to the problem's combinatorial nature. Instead, informed by the strong faithfulness of low-degree approximations, we employ the following heuristic to obtain some sparse approximation.

Let $\mathcal{S}_K \subseteq \bbZ_2^n$ be a set containing the first $K$ coordinates with the lowest degree, where ties are randomly broken. With this set, we solve the following quadratic programming problem:
\begin{align}
& \min_{\hat{f}, \bm{\alpha}}  \left \lVert \hat{f} - f \right\rVert_2^2\\
& \;\; \text{s.t.} \quad \hat{f}(\mathbf{m}) = \sum_{\mathbf{k} \leq \mathbf{m}, \mathbf{k} \in \mathcal{S}_K} \alpha_{\mathbf{k}}, \forall \bbm.
\label{eq:sparseApp}
\end{align}

In Figure \ref{fig:real_experiment}, we consider the four explanation models described below.

\textbf{Shapley Values:}
We approximate Shapley values by iterating through permutations of the inputs \cite{Lundberg2017}. For an efficient implementation of the algorithm, we use the SHAP Python package \cite{Lundberg2017}. To measure the faithfulness captured by Shapley values at some sparsity level $r$, we consider approximations that only include the top-$r$ Shapley values by magnitude.

\textbf{Banzhaf Values:}
We approximate Banzhaf values using the \emph{Maximum Sample Reuse Monte Carlo} procedure described in \cite{wang2023}. To measure the faithfulness captured by Banzhaf values at some sparsity level $r$, we consider approximations that only include the top-$r$ Banzhaf values by magnitude.

\textbf{Faith-Banzhaf Indices:}
We calculate Faith-Banzhaf indices using the regression formulation described in \cite{tsai2023faith}. To measure the faithfulness captured by sparse approximations of Faith-Banzhaf indices, we modify the regression problem by adding an $\ell_1$ penalty on the values of the Faith-Banzhaf indices. We vary the penalty coefficient to obtain different levels of sparsity.

\textbf{SMT:}
We run SMT (Algorithm \ref{alg:smt}) to obtain a sparse Möbius representation $\hat{F}$ with support $\mathrm{supp}(\hat{F})$. Then, we fine-tune the values of the coefficients by solving the following regression problem over a uniformly sampled set of points $\mathcal{D} \subseteq \bbZ_2^n$:
\begin{align*}
    & \min_{\hat{f}, \bm{\alpha}} \;  \sum_{\bbm \in \mathcal{D}} (\hat{f}(\bbm) - f(\bbm))^2 \\
& \;\; \text{s.t.} \quad \hat{f}(\bbm) = \sum_{\mathbf{k} \leq \mathbf{m}, \bk \in \mathrm{supp}(\hat{F})} \alpha_{\bk}, \forall \bbm.
\end{align*}
To measure the faithfulness captured by sparse approximations, we modify the regression problem by adding an $\ell_1$ penalty on the values of the Möbius coefficients. Then, we vary the penalty coefficient $\lambda$ to obtain different levels of sparsity:
\begin{align*}
    & \min_{\hat{f}, \bm{\alpha}} \;  \sum_{\bbm \in \mathcal{D}} (\hat{f}(\bbm) - f(\bbm))^2 + \lambda \sum_{\bk \in \mathrm{supp}(\hat{F})} |\alpha_{\bk}|\\
& \;\; \text{s.t.} \quad \hat{f}(\bbm) = \sum_{\mathbf{k} \leq \mathbf{m}, \bk \in \mathrm{supp}(\hat{F})} \alpha_{\bk}, \forall \bbm.
\end{align*}

\subsection{XGBoost for Breast Cancer Diagnosis}
We train an XGBoost model for classification using the Wisconsin Breast Cancer dataset \cite{misc_breast_cancer_wisconsin_(diagnostic)_17}. This dataset contains the mean, standard deviation, and largest value of ten measurements, resulting in thirty features. For Figure \ref{fig:sparse_examples}, we use only the mean and standard deviation, resulting in twenty features. For Figure \ref{fig:sparse_examples}, we use the first ten data points in the training set and for Figure \ref{fig:real_experiment}, we present the aggregated results over the first twenty.

To explain the XGBoost model $h$ (the probability associated with a positive classification) on each data point $x \in \mathcal{X}$, we use an interventional expected value formulation: we freeze some of the features and take an expectation over all data points by infilling the remaining features. Formally, 
\begin{align*}
    f(\bbm) = \bbE \left[ h(X) |\mathrm{do}(X_\bbm = x_\bbm) \right]
\end{align*}
where we use the notation $x_\bbm = \{x_i : m_i = 1\}$.

\label{subsec:XgBoost}

\subsection{BERT for Sentiment Analysis}
\label{subsec:BertSA}
We employ the sentiment analysis model from  \cite{perez2021pysentimiento}, which is built upon BERTweet \cite{bertweet}, a RoBERTa model trained on English tweets. We take movie reviews from the IMDb Movie Reviews dataset \cite{maas-EtAl:2011:ACL-HLT2011}. For a particular review, we define its function as a mapping from maskings of words (using the [UNK] token) to the model's logit value associated to the correct sentiment classification.

For Figure \ref{fig:sparse_examples}, we use the first ten sentences in the dataset with 17, 18, or 19 words, where words separated through spaces in the review. Below, we include the reviews and their low degree and sparse approximations calculated with equations \ref{eq:lowDegApp} and \ref{eq:sparseApp} respectively. 

\begin{table}[ht]
    \centering
    \resizebox{\columnwidth}{!}{
\begin{tabular}{cccc}\toprule
           & $n$ (words) & \textsc{Review}  & \textsc{Sentiment}\\\midrule
    (a)  & 18 & A rating of ``1" does not begin to express how dull, depressing and relentlessly bad this movie is. & Negative \\
 (b)    & 18 & Hated it with all my being. Worst movie ever. Mentally- scarred. Help me. It was that bad.TRUST ME!!! & Negative  \\
 (c)    & 19 & This is a good film. This is very funny. Yet after this film there were no good Ernest films! & Positive  \\
 (d)    & 19 & The characters are unlikeable and the script is awful. It's a waste of the talents of Deneuve and Auteuil. & Negative  \\
 (e)    & 18 & I don't know why I like this movie so well, but I never get tired of watching it. & Positive  \\
 (f)    & 19 & If you like Pauly Shore, you'll love Son in Law. If you hate Pauly Shore, then, well...I liked it! & Positive  \\
 (g)    & 17 & This is the definitive movie version of Hamlet. Branagh cuts nothing, but there are no wasted moments. & Positive  \\
 (h)    & 19 & Without a doubt, one of Tobe Hoppor's best! Epic storytellng, great special effects, and The Spacegirl (vamp me baby!). & Positive  \\
 (i)    & 17 & Add this little gem to your list of holiday regulars. It is sweet, funny, and endearing & Positive \\
 (j)    & 17 & no comment - stupid movie, acting average or worse... screenplay - no sense at all... SKIP IT! & Negative \\
\bottomrule
\end{tabular}}
\end{table}

\begin{figure}[!htb]
\centering
    \begin{subfigure}[t]{0.23\textwidth}
    \includegraphics[width=\textwidth]{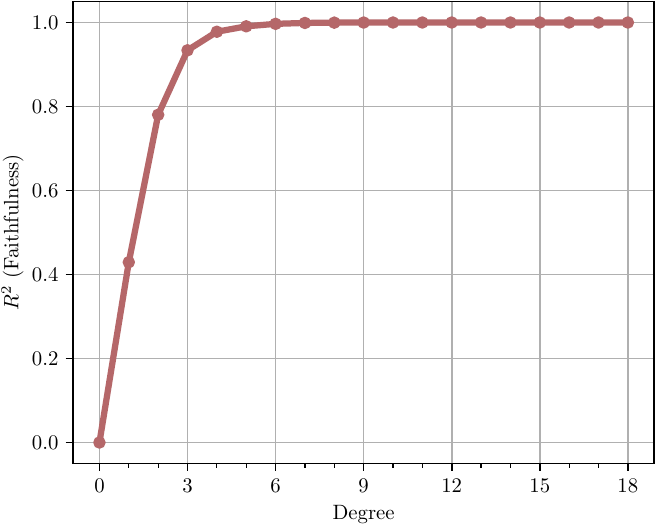}
    \captionsetup{labelformat=empty}
    \caption{(a)}
    \end{subfigure}%
    \hfill
    \begin{subfigure}[t]{0.23\textwidth}
    \includegraphics[width=\textwidth]{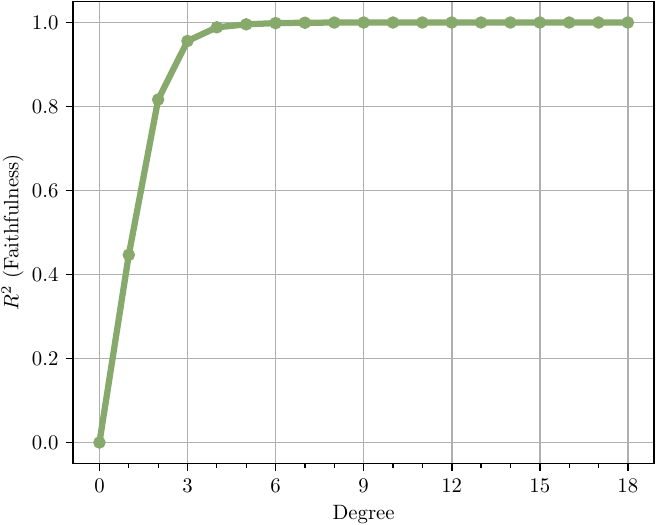}
    \captionsetup{labelformat=empty}
    \caption{(b)}
    \end{subfigure}
    \hfill
    \begin{subfigure}[t]{0.23\textwidth}
    \includegraphics[width=\textwidth]{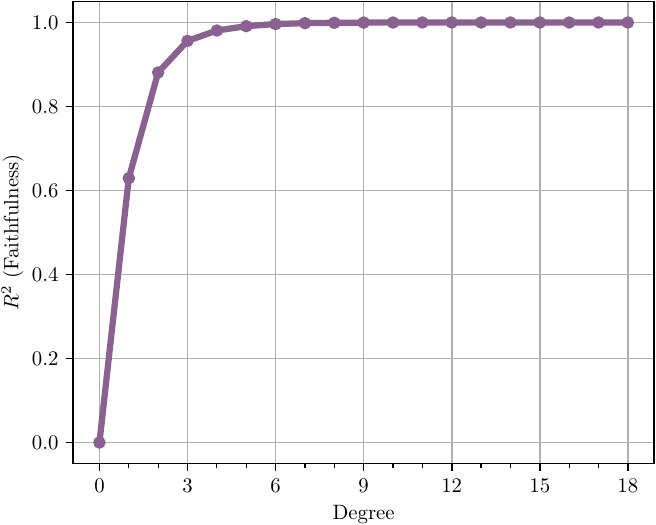}
    \captionsetup{labelformat=empty}
    \caption{(c)}
    \end{subfigure}
    \hfill
    \begin{subfigure}[t]{0.23\textwidth}
    \includegraphics[width=\textwidth]{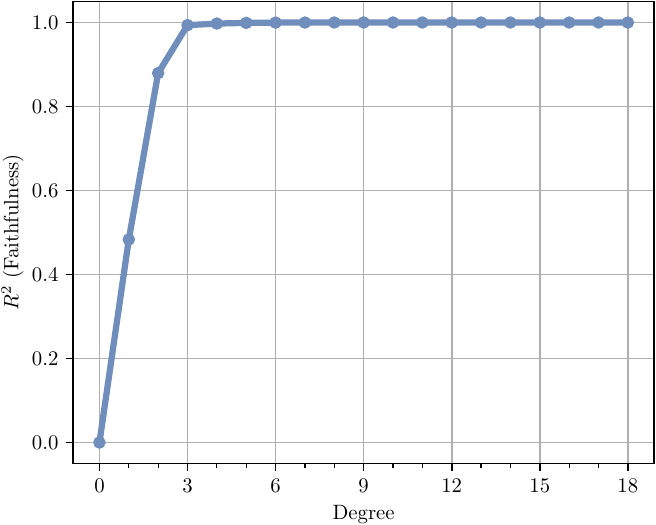}
    \captionsetup{labelformat=empty}
    \caption{(d)}
    \end{subfigure}
\end{figure}
\begin{figure}[!htb]
\centering
    \begin{subfigure}[t]{0.23\textwidth}
    \includegraphics[width=\textwidth]{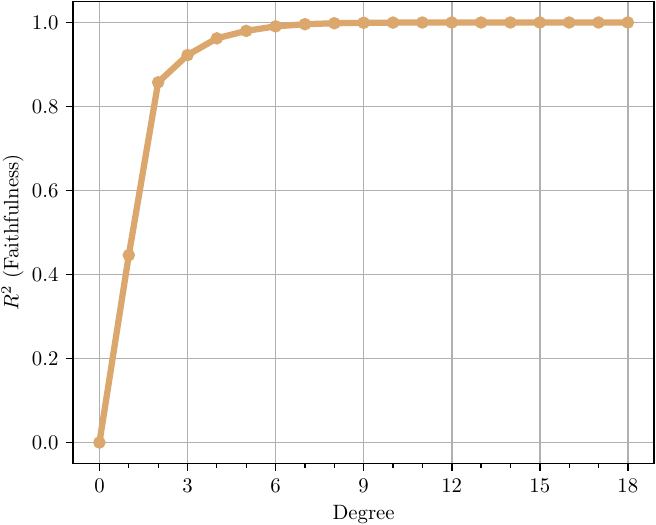}
    \captionsetup{labelformat=empty}
    \caption{(e)}
    \end{subfigure}%
    \hfill
    \begin{subfigure}[t]{0.23\textwidth}
    \includegraphics[width=\textwidth]{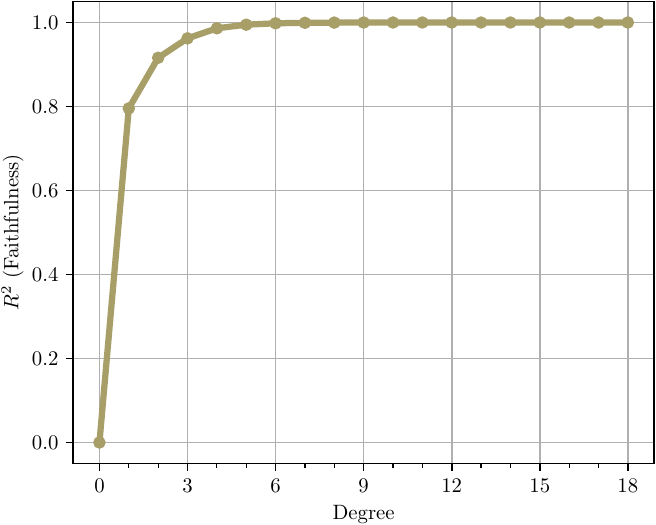}
    \captionsetup{labelformat=empty}
    \caption{(f)}
    \end{subfigure}
    \hfill
    \begin{subfigure}[t]{0.23\textwidth}
    \includegraphics[width=\textwidth]{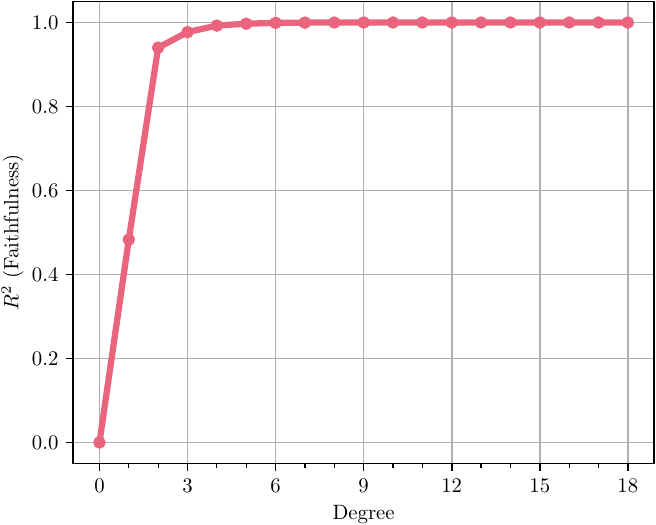}
    \captionsetup{labelformat=empty}
    \caption{(g)}
    \end{subfigure}
    \hfill
    \begin{subfigure}[t]{0.23\textwidth}
    \includegraphics[width=\textwidth]{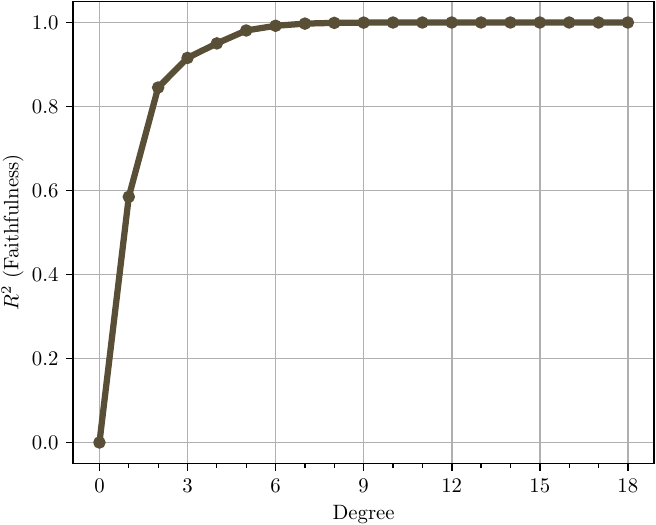}
    \captionsetup{labelformat=empty}
    \caption{(h)}
    \end{subfigure}
\end{figure}
\begin{figure}[!htb]
\centering
    \begin{subfigure}[t]{0.23\textwidth}
    \includegraphics[width=\textwidth]{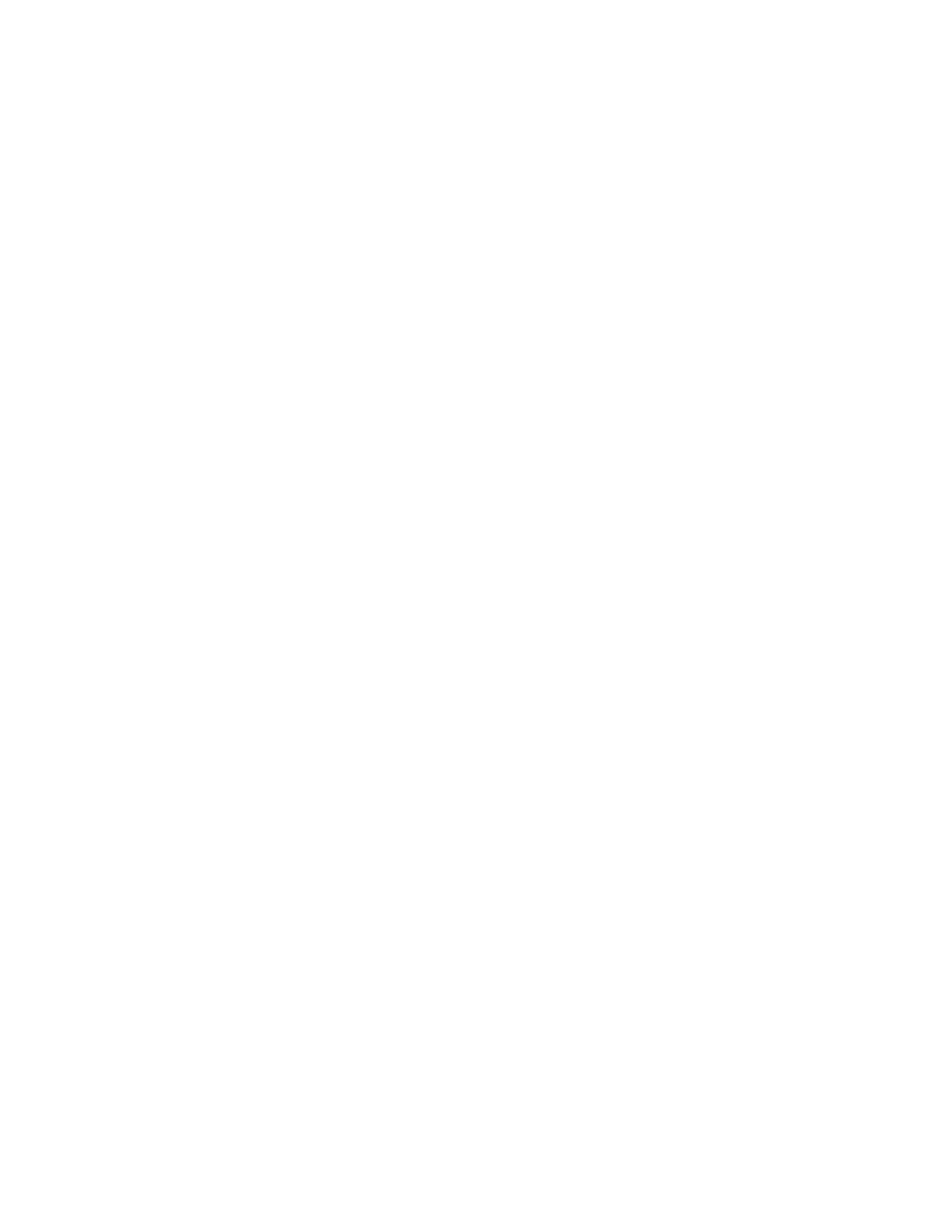}
    \captionsetup{labelformat=empty}
    \caption{}
    \end{subfigure}
    \hfill
    \begin{subfigure}[t]{0.23\textwidth}
    \includegraphics[width=\textwidth]{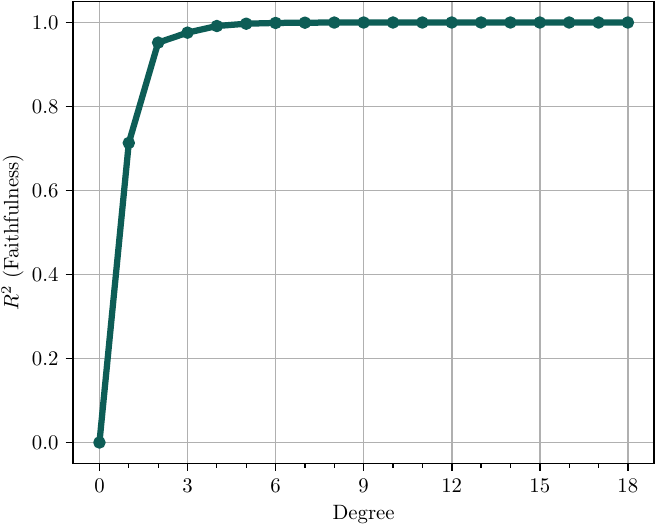}
    \captionsetup{labelformat=empty}
    \caption{(i)}
    \end{subfigure}
    \hfill
    \begin{subfigure}[t]{0.23\textwidth}
    \includegraphics[width=\textwidth]{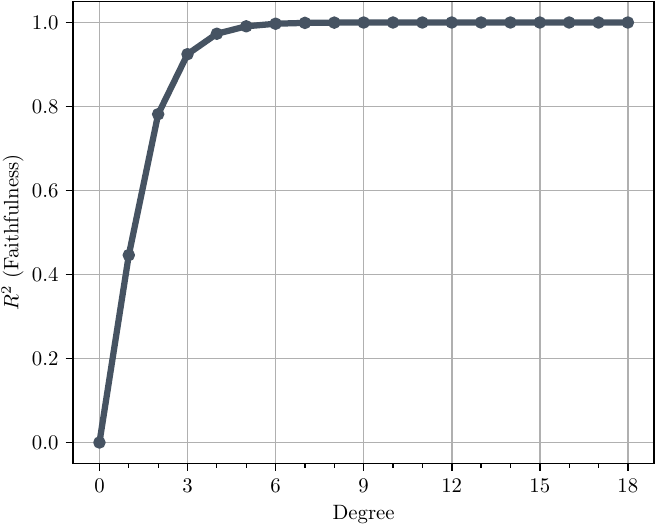}
    \captionsetup{labelformat=empty}
    \caption{(j)}
    \end{subfigure}
    \hfill
    \begin{subfigure}[t]{0.23\textwidth}
    \includegraphics[width=\textwidth]{figures/appendixSparsityPlots/empty.pdf}
    \captionsetup{labelformat=empty}
    \caption{}
    \end{subfigure}
\end{figure}

\begin{figure}[!htb]
\centering
    \begin{subfigure}[t]{0.23\textwidth}
    \includegraphics[width=\textwidth]{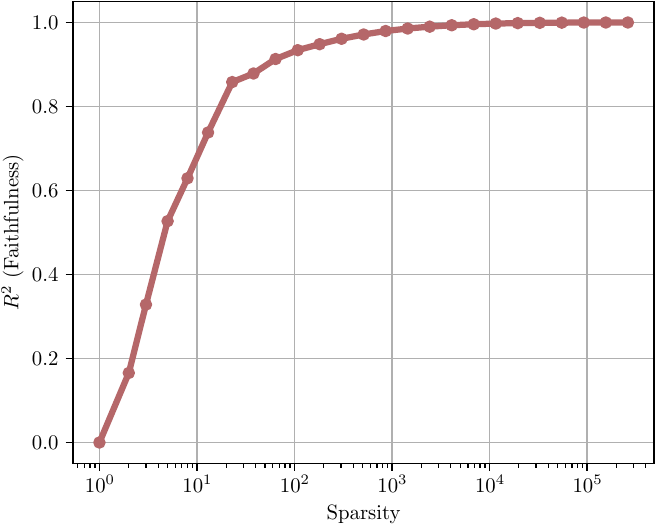}
    \captionsetup{labelformat=empty}
    \caption{(a)}
    \end{subfigure}%
    \hfill
    \begin{subfigure}[t]{0.23\textwidth}
    \includegraphics[width=\textwidth]{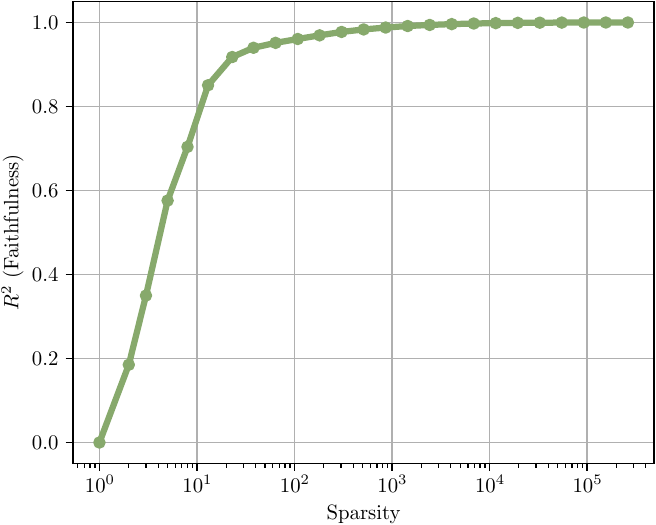}
    \captionsetup{labelformat=empty}
    \caption{(b)}
    \end{subfigure}
    \hfill
    \begin{subfigure}[t]{0.23\textwidth}
    \includegraphics[width=\textwidth]{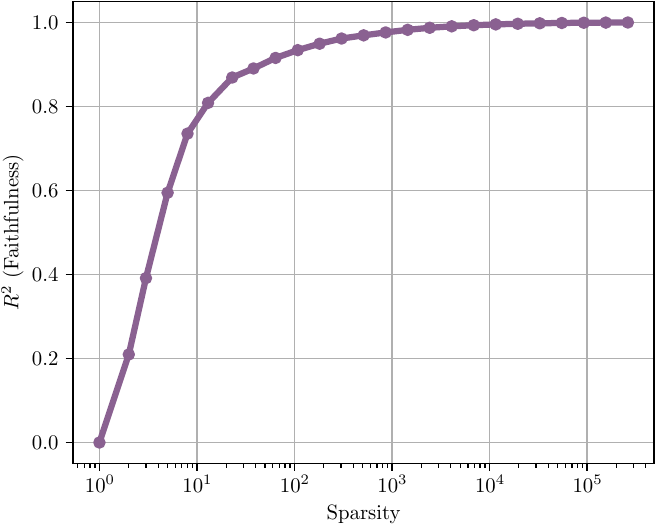}
    \captionsetup{labelformat=empty}
    \caption{(c)}
    \end{subfigure}
    \hfill
    \begin{subfigure}[t]{0.23\textwidth}
    \includegraphics[width=\textwidth]{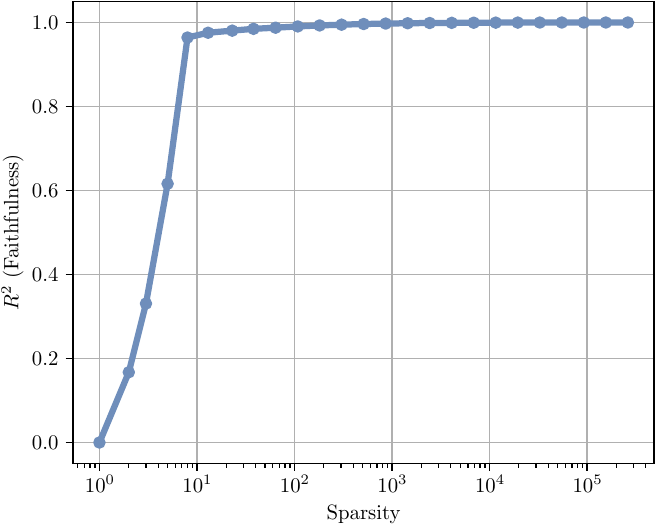}
    \captionsetup{labelformat=empty}
    \caption{(d)}
    \end{subfigure}
\end{figure}
\begin{figure}[!htb]
\centering
    \begin{subfigure}[t]{0.23\textwidth}
    \includegraphics[width=\textwidth]{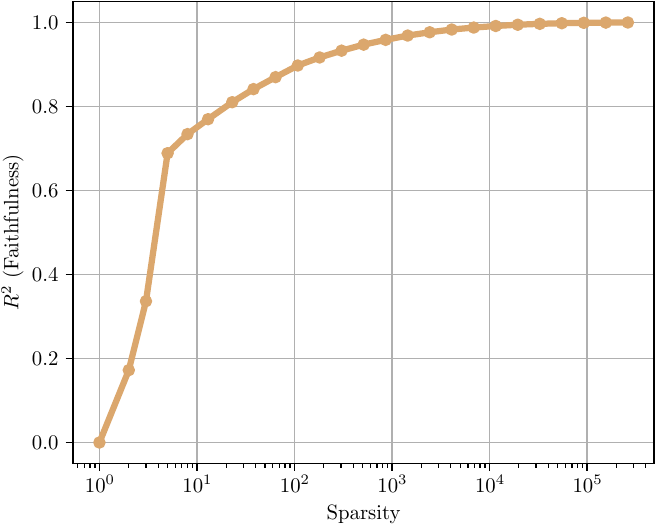}
    \captionsetup{labelformat=empty}
    \caption{(e)}
    \end{subfigure}%
    \hfill
    \begin{subfigure}[t]{0.23\textwidth}
    \includegraphics[width=\textwidth]{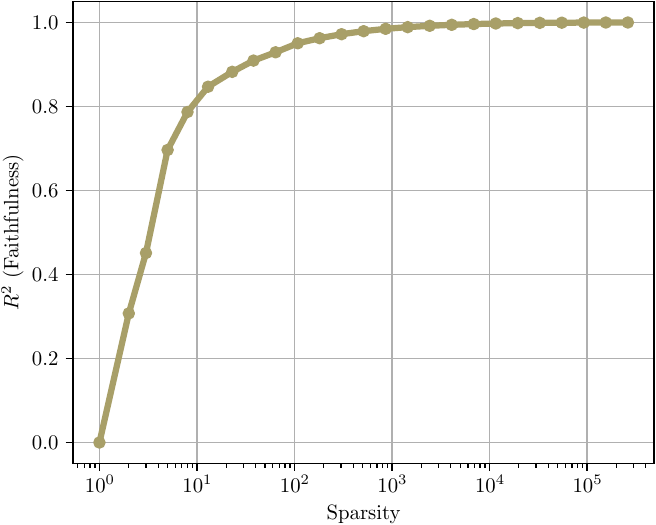}
    \captionsetup{labelformat=empty}
    \caption{(f)}
    \end{subfigure}
    \hfill
    \begin{subfigure}[t]{0.23\textwidth}
    \includegraphics[width=\textwidth]{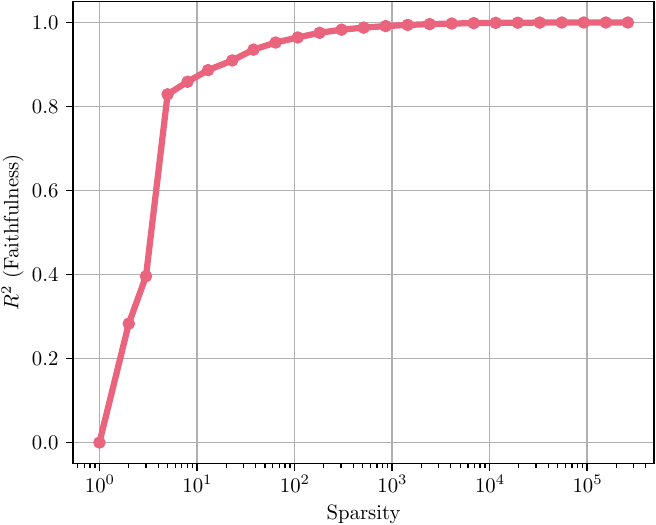}
    \captionsetup{labelformat=empty}
    \caption{(g)}
    \end{subfigure}
    \hfill
    \begin{subfigure}[t]{0.23\textwidth}
    \includegraphics[width=\textwidth]{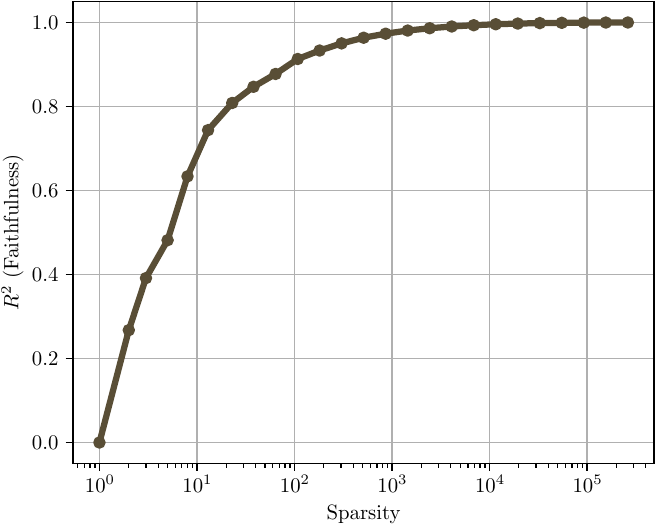}
    \captionsetup{labelformat=empty}
    \caption{(h)}
    \end{subfigure}
\end{figure}
\begin{figure}[!htb]
\centering
    \begin{subfigure}[t]{0.23\textwidth}
    \includegraphics[width=\textwidth]{figures/appendixSparsityPlots/empty.pdf}
    \captionsetup{labelformat=empty}
    \caption{}
    \end{subfigure}
    \hfill
    \begin{subfigure}[t]{0.23\textwidth}
    \includegraphics[width=\textwidth]{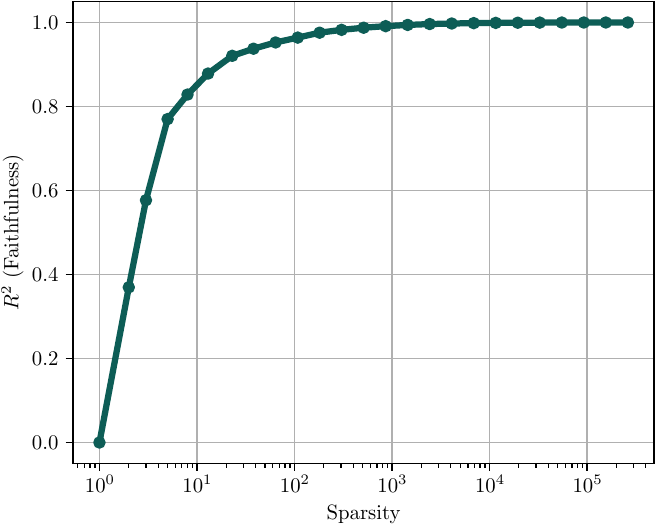}
    \captionsetup{labelformat=empty}
    \caption{(i)}
    \end{subfigure}
    \hfill
    \begin{subfigure}[t]{0.23\textwidth}
    \includegraphics[width=\textwidth]{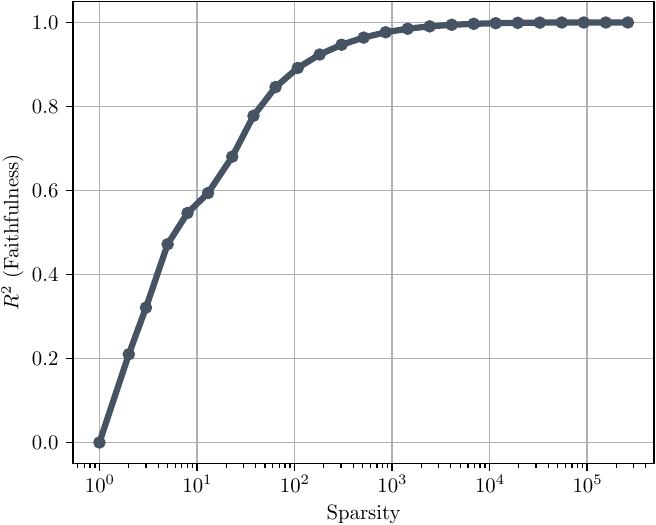}
    \captionsetup{labelformat=empty}
    \caption{(j)}
    \end{subfigure}
    \hfill
    \begin{subfigure}[t]{0.23\textwidth}
    \includegraphics[width=\textwidth]{figures/appendixSparsityPlots/empty.pdf}
    \captionsetup{labelformat=empty}
    \caption{}
    \end{subfigure}
\end{figure}

In Figure \ref{fig:real_experiment}, we take a random sampling of reviews, with number of words spanning from 17 to 38. The reviews we used, alongside their word counts and sentiment, are included below:
\begin{table}[htb!]
    \centering
        \resizebox{\columnwidth}{!}{
\begin{tabular}{C{2cm} C{14cm} C{2cm}}\toprule
            $n$ (words) & \textsc{Review}  & \textsc{Sentiment}\\\midrule
17 & This is the definitive movie version of Hamlet. Branagh cuts nothing, but there are no wasted moments.                                                                                                                          & Positive \\
18 & I don’t know why I like this movie so well, but I never get tired of watching it.                                                                                                                                               & Positive \\
23 & Brilliant movie. The drawings were just amazing. Too bad it ended before it begun. I´ve waited 21 years for a sequel, but nooooo!!!                                                                                             & Positive \\
26 & Malcolm McDowell has not had too many good movies lately and this is no different. Especially designed for people who like Yellow filters on their movies.                                                                      & Negative \\
24 & Excellent episode movie ala Pulp Fiction. 7 days - 7 suicides. It doesnt get more depressing than this. Movie rating: 8/10 Music rating: 10/10                                                                                  & Positive \\
26 & You've got to be kidding. This movie sucked for the sci-fi fans. I would only recommend watching this only if you think Armageddon was good.                                                                                    & Negative \\
27 & Despite its interesting premise, Sniper is quite tedious. With a tighter script and sharper directing it could have been electrifying; instead it plods along with little tension.                                              & Negative \\
29 & You may like Tim Burton's fantasies, but not in a commercial-like show off lasting 8 minutes. It demonstrates good technical points without real creativity or some established narrative pace.                                 & Negative \\
27 & Brilliant execution in displaying once and for all, this time in the venue of politics, of how "good intentions do actually pave the road to hell". Excellent!                                                                  & Positive \\
28 & I can't believe they got the actors and actresses of that caliber to do this movie. That's all I've got to say - the movie speaks for itself!!                                                                                  & Positive \\
33 & Something does not work in this movie. There are absolutely no energies between the actors. In fact, their very acting seems frozen, sometimes amateur. Also, the script is not convincing and not reliable.                    & Negative \\
24 & Great story, great music. A heartwarming love story that's beautiful to watch and delightful to listen to. Too bad there is no soundtrack CD.                                                                                   & Positive \\
38 & A very carelessly written film. Poor character and idea development. The silly plot and weak acting by just about the ensemble cast didn't help. Seriously, watching this movie will NOT make you smile. It may make you retch. & Negative \\
19 & This is a good film. This is very funny. Yet after this film there were no good Ernest films!                                                                                                                                   & Positive \\
18 & The characters are unlikeable and the script is awful. It's a waste of the talents of Deneuve and Auteuil.                                                                                                                      &   Negative  \\
\bottomrule
\end{tabular}}
\end{table}
\subsection{BERT for Multiple Choice}
\label{subsec:BertMC}
For multiple choice answering, we use a RoBERTa model \cite{bertMC} fine-tuned on RACE \cite{lai-etal-2017-race}: a large-scale reading comprehension dataset. This dataset contains over 28,000 passages, each containing corresponding multiple-choice questions. For our experiments, we found the first ten passages with 15 sentences, and took their first multiple-choice question.

To construct the function, we consider sentence-level maskings of the passages using the [PAD] token. We pass the masked passage, alongside the multiple choice question into the RoBERTa model, and measure the logit value of the question's correct answer. 

\section{Missing Proofs} \label{apdx:proofs}
\subsection{Boolean Arithmetic}\label{apdx:bool}
Table~\ref{tbl:1} the addition and multiplication table for arithmetic between $x,y \in \bbZ_2$. We also note that $\bbZ_2$ is typically used to refer to the integer ring modulo $2$. The arithmetic we are describing here is actually that of a \emph{monoid}. Since the audience for this paper is people interested in machine learning, we continue to use $\bbZ_2$ since it is commonly used to simply refer to the set $\{0,1\}$.

\begin{table}[ht]
    \centering
\begin{tabular}{lcc}\toprule
\multicolumn{3}{c}{Addition Table}\\
\cmidrule(lr){1-3}
        $+$   & $x = 1$  & $x=0$ \\\midrule
    $y=1$  & 1 & 1 \\
 $y=0$     & 1 & 0  \\
\bottomrule
\end{tabular}
\hspace{2em}
\begin{tabular}{lcc}\toprule
\multicolumn{3}{c}{Multiplication Table}\\
\cmidrule(lr){1-3}
        $\times$   & $x = 1$  & $x=0$ \\\midrule
    $y=1$  & 1 & 0 \\
 $y=0$     & 0 & 0  \\
\bottomrule
\end{tabular}
\hspace{2em}
\begin{tabular}{lcc}\toprule
\multicolumn{3}{c}{Subtraction Table}\\
\cmidrule(lr){1-3}
        $-$   & $x = 1$  & $x=0$ \\\midrule
    $y=1$  & 0 & 1 \\
 $y=0$     & N/A & 0  \\
\bottomrule
\end{tabular}
\vspace{1em}
\caption{Addition, Multiplication and Subtraction table for boolean arithmetic in this paper. Subtraction is for $y-x$.}
\label{tbl:1}
\end{table}

\subsection{Discussion of Aliasing of the Möbius Transform}\label{apdx:aliasing_discussion}

When a function has many small or zero Mobius coefficients (interactions), our goal is to subsample \eqref{eq:generic_sampling} in such a way that the aliasing causes the non-zero coefficients to end up in different aliasing sets \eqref{eq:aliasing_generic} (as opposed to all of them being aliased together, making them more difficult to reconstruct).
Lemma \ref{lem:aliasing} is a key tool that we will use in this work to design subsampling patterns that result in good aliasing patterns. 
\begin{lemma} \label{lem:aliasing}
    Consider $\bH \in \bbZ_2^{b \times n}$, $b < n$ and $f : \bbZ_2^n \mapsto \bbR$. Let  
    \begin{equation} \label{eq:subsample_alias}
        u(\bell) = f \left( \sample \right), \;\forall \bell \in \bbZ_2^b.
    \end{equation}
If $U$ is the Mobius transform of $u$, and $F$ is the Mobius transform of $f$ we have:
    \begin{equation}
        U(\bj) = \sum_{\bH \bk = \bj}F(\bk).
    \end{equation}
\end{lemma}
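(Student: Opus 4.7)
The plan is to compute the Mobius transform of $u$ directly from its definition, substitute the inverse Mobius expansion $f(\bbm)=\sum_{\bk\leq \bbm} F(\bk)$, swap the order of summation, and then show that the resulting inner sum collapses to an indicator of $\bH\bk = \bj$. Concretely, I would start from
\begin{equation*}
U(\bj) \;=\; \sum_{\bi\leq \bj} (-1)^{\bOne^{\trans}(\bj-\bi)}\, u(\bi) \;=\; \sum_{\bi\leq \bj} (-1)^{\bOne^{\trans}(\bj-\bi)} \sum_{\bk\leq \bbm_{\bi}} F(\bk),
\end{equation*}
where $\bbm_{\bi}\defeq\overline{\bH^{\trans}\overline{\bi}}$, and then swap the two sums so that the outer variable is $\bk\in\bbZ_2^n$ and the inner sum ranges over $\bi$ subject to $\bi\leq \bj$ and $\bk\leq\bbm_{\bi}$.

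The crux of the argument, and what I expect to be the main obstacle, is the following boolean-lattice identity:
\begin{equation*}
\bk \leq \bbm_{\bi} \quad\Longleftrightarrow\quad \bi \geq \bH\bk.
\end{equation*}
This is where the particular ``double-complement'' form of the sampling pattern $\bbm_{\bi}=\overline{\bH^{\trans}\overline{\bi}}$ really earns its keep, since it turns the multiplicative matrix action on $\bk$ on the right-hand side into a meet-style constraint on $\bi$ on the left. To prove it, I would unfold the definitions in the boolean arithmetic of Appendix~\ref{apdx:bool}: for each coordinate $s$, $(\bbm_{\bi})_s = 1$ iff $\bigvee_r H_{rs}\overline{\bi}_r = 0$, i.e.\ iff $\bi_r = 1$ for every $r$ with $H_{rs}=1$; requiring this for every $s$ with $k_s=1$ is exactly the condition $\bi_r=1$ whenever $(\bH\bk)_r=1$. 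This equivalence is morally a Galois connection between the two coordinate systems and should go through coordinate-by-coordinate with no surprises.

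With the equivalence in hand, the inner sum becomes $\sum_{\bH\bk\leq \bi \leq \bj}(-1)^{\bOne^{\trans}(\bj-\bi)}$. If $\bH\bk\not\leq \bj$ the range is empty and the sum is $0$. Otherwise I would parametrize $\bi$ over the free set $S\defeq\{r:\bj_r=1,\;(\bH\bk)_r=0\}$, note that $\bOne^{\trans}(\bj-\bi)=|S|-\sum_{r\in S}\bi_r$, and factor the resulting sum as $\prod_{r\in S}(1-1)$, which vanishes unless $S=\emptyset$, i.e.\ unless $\bH\bk=\bj$; in that single case the sum equals $1$. Substituting back yields $U(\bj)=\sum_{\bH\bk=\bj}F(\bk)$, finishing the proof. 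The remaining steps after the key identity are routine indicator/cancellation calculations, so I expect essentially all of the substance to live in establishing $\bk\leq\bbm_{\bi}\iff \bi\geq \bH\bk$.
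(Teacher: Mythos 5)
Your proposal is correct, and at the decisive step it takes a genuinely cleaner route than the paper's. Both arguments share the same outer skeleton: write $U(\bj)$ from the definition, substitute the inverse Mobius expansion of $f$, and swap summation order so that the task reduces to evaluating, for each fixed $\bk$, an inner alternating sum over $\bi$ restricted by two indicator constraints. Where the paper then performs a direct case analysis on $I(\br)$ --- splitting on whether $\bH\br = \bj$ and, in the negative case, further on whether there is a coordinate with $k_j = 0,\ k'_j=1$ versus $\bk' < \bk$, and reasoning coordinate-by-coordinate about which $\bell$ survive the indicator $\one\{\br \leq \bigodot_{i:\ell_i=0}\barbh_i\}$ --- you instead isolate the adjunction identity $\bk \leq \overline{\bH^{\trans}\overline{\bi}} \iff \bH\bk \leq \bi$, which turns the awkward constraint on $\bi$ into a simple interval condition $\bH\bk \leq \bi \leq \bj$. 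From there the inner sum $\sum_{\bH\bk\leq\bi\leq\bj}(-1)^{\bOne^{\trans}(\bj-\bi)}$ factors over the free coordinate set $S$ and collapses to $\one\{S=\emptyset\} = \one\{\bH\bk=\bj\}$ by the standard inclusion--exclusion cancellation. Your route makes the ``why this double-complement sampling pattern?'' question transparent --- it is exactly the form that makes $\bbm_{\bi}$ the upper adjoint of $\bk\mapsto\bH\bk$ on the boolean lattice --- whereas the paper's case analysis establishes the same fact without naming it, at the cost of more bookkeeping. Both are complete proofs; yours is the more conceptually modular one and would likely generalize more easily to other subsampling maps admitting such an adjoint.
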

This lemma is a powerful tool, allowing us to control the aliasing sets through the matrix $\bH$.
The proof can be found in Appendix \ref{apdx:aliasing}, and is straightforward, given the relationship between $u$ and $f$. Understanding why we choose this relationship, however, is more complicated. Underlying this choice is the algebraic theory of \emph{monoids} and abstract algebra. 

As we have mentioned, our ultimate goal is to design $\bH$ to sufficiently ``spread out" the non-zero indices among the aliasing sets. Below, we define a simple and useful construction for $\bH$.
\begin{definition}\label{def:unif}
    Consider $\{i_1, \dotsc, i_b\} = I \subset [n]$, with $\abs{I} = b$, and $\bH \in \bbZ_2^{b \times n}$. Let $\bh_{i}$ correspond to the $i^{\text{th}}$ row of $\bH$, given by $\bh_{i} = \be_{i_j}$, the length $n$ unit vector in coordinate $i_j$. Then if we subsample according to \eqref{eq:subsample_alias} we have:
    \begin{equation}
        U(\bj) = \sum_{\bk \;:\; k_i = j_i \;\forall i \in I}F(\bk).
    \end{equation}
\end{definition}
which happens to result in aliasing sets $\cA(\bj) = \{\bk : k_i = j_i \;\forall i \in I\}$ all of equal size $2^b$. The above choice $\bH$ actually induces a rather simple sampling procedure when we follow \eqref{eq:subsample_alias}. For instance if $I=[b]$, we have:
\begin{equation}
    u(\bell) = f \left( [\overline{\bell}; \bOne_{n-b} ] \right),
\end{equation}
In other words, in this case, we construct samples by freezing $n-b$ of the inputs to $1$ and then varying the remaining $b$ inputs across all the $2^b$ possible options. 
In the case where the non-zero Mobius interactions are chosen uniformly at random, this construction does a good job at spacing them out across the various aliasing sets. The following result formalizes this.

\begin{lemma} \label{cor:uniform}
    (Uniform interactions) 
    Let $\bk_1, \dotsc, \bk_K$ be sampled uniformly at random from $\bbZ_2^n$, where $F(\bk_i) \neq 0, \; \forall i \in [K]$, but $F(\bk) = 0$ for all other $\bk \in \bbZ_2^n$. Construct disjoint sets $I_c \subset [n]$  for $c = 1, \dotsc, C$, and the corresponding matrix $\bH_c$ according to Definition \ref{def:unif}. Let $\cA_c(\bj)$ correspond to the aliasing sets after sampling with respect to matrix $\bH_c$. Now define:
    \begin{equation}
        \bj \text{ such that  } \bk_i \in \cA_c(\bj) \defeq \bj_i^c. \label{eq:hashed_index}
    \end{equation}
    Then if $b = O(\log(K))$, $K= O(2^{n/C})$, in the limit as $n \rightarrow \infty$ with $C = O(1)$, $\bj_i^c$ are mutually independent and uniformly distributed over $\bbZ_2^b$.
\end{lemma}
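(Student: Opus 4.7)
The plan is to reduce the claim to the simpler sampling model in which $\bk_1, \dotsc, \bk_K$ are drawn iid uniformly from $\bbZ_2^n$ (with replacement)---where mutual independence is essentially by construction---and then bound the total-variation distance between this idealized law and the effective without-replacement sampling of Assumption~\ref{ass:unif} by the collision probability, which vanishes under the stated sparsity scaling.

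First I would unpack Definition~\ref{def:unif}: the rows of $\bH_c$ are standard basis vectors $\be_{i_j}$ with $i_j \in I_c$, so $\bH_c \bk_i$ is precisely the restriction of $\bk_i$ to the coordinates indexed by $I_c$, and the $\bj_i^c$ from \eqref{eq:hashed_index} coincides with this projection. Disjointness of $I_1, \dotsc, I_C$ then yields a unique decomposition $\bk_i = (\bj_i^1, \dotsc, \bj_i^C, \bv_i)$ where $\bv_i \in \bbZ_2^{n - Cb}$ is the restriction to $U \defeq [n] \setminus \bigcup_c I_c$; the hypotheses $b = O(\log K)$ and $K = O(2^{n/C})$ give $Cb \leq n$ for large $n$, so the disjoint choice of the $I_c$ is feasible.

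Under the idealized with-replacement model each coordinate of each $\bk_i$ is an independent fair coin, so the full collection $(\bj_i^c)_{i \in [K], c \in [C]}$ is mutually independent and uniform on $\bbZ_2^b$ \emph{exactly}. To transfer this to the without-replacement law, let $P$ and $Q$ denote the with- and without-replacement laws on $(\bbZ_2^n)^K$ and $E = \{\bk_i \text{ pairwise distinct}\}$, so that $Q = P(\,\cdot\mid E)$. A standard computation yields $d_{TV}(P,Q) \leq P(E^c)$, and a union bound on pairwise collisions gives $P(E^c) \leq \binom{K}{2}2^{-n} = O(K^2/2^n)$. For $K = O(2^{n/C})$ with $C \geq 3$ (as instantiated in Theorem~\ref{thm:noiseless_reconstruct} with $C=3$, $\delta \leq 1/3$), this bound is $O(2^{-n(1 - 2/C)}) \to 0$. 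Since the coordinate projection $(\bk_i)_i \mapsto (\bj_i^c)_{i,c}$ cannot increase total variation, the joint law of $(\bj_i^c)_{i,c}$ under $Q$ converges in TV to the uniform product on $(\bbZ_2^b)^{KC}$, which is exactly the mutual independence and uniformity claimed in the $n \to \infty$ limit.

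The main subtlety, and the reason the statement is necessarily asymptotic, is that for any finite $n$ the distinctness conditioning introduces small correlations among the $(\bj_i^c)$; the content of the lemma is that these correlations vanish under the sparsity scaling. The crude union bound is also only sharp enough when $C \geq 3$: the borderline regime $C=2$ with $K$ at the full order of $2^{n/2}$ is not controlled by this argument and would require a finer computation conditioning also on the ``free'' coordinates $\bv_i$, but it is not needed by Theorem~\ref{thm:noiseless_reconstruct}.
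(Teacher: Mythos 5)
Your proof is correct and follows essentially the same route as the paper's: both arguments observe that $\bH_c\bk_i$ is exactly the coordinate projection of $\bk_i$ onto $I_c$, that the disjointness of $I_1,\dots,I_C$ therefore makes the collection $(\bj_i^c)_{i,c}$ a collection of disjoint coordinate blocks, and that independence/uniformity is then immediate once one may pretend the $\bk_i$ are drawn i.i.d.\ with replacement. The one place you add value is the reduction from the without-replacement law to the i.i.d.\ law: the paper simply asserts this step, justifying it by the observation that $K/2^n \to 0$, which is actually insufficient---the relevant quantity is the collision probability $\binom{K}{2}2^{-n} = O(K^2/2^n)$, not $O(K/2^n)$. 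Your total-variation bound $d_{TV}(P,Q) \le P(E^c) \le \binom{K}{2}2^{-n}$ and the data-processing inequality for TV make this rigorous, and your observation that the bound degenerates when $C=2$ and $K$ is at the full order $2^{n/2}$ correctly identifies the borderline the paper avoids by working with $\delta \le 1/3$ (equivalently $C=3$) in Theorem~\ref{thm:noiseless_reconstruct}. In short: same decomposition, same independence mechanism; you have just filled in the TV estimate the paper leaves as a remark.
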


The proof is given in Appendix~\ref{apdx:unif_deg_dist}, and follows directly from the form of the aliasing sets $\cA_c(\bj)$. Corollary~\ref{cor:uniform} means that using $\bH$ as constructed in Definition~\ref{def:unif} ensures that we all $\bk$ with $F(\bk) \neq 0$ are uniformly distributed over the aliasing sets, which maximizes the number of singletons. This result, however, hinges on the fact that the non-zero coefficients are uniformly distributed. We are also interested in the case where the non-zero coefficients are all low-degree. In order to induce a uniform distribution in this case, we need to exploit a \emph{group testing} matrix. 

\begin{lemma} \label{cor:low_deg_matrix}
    (Low-degree interactions) 
    Let $\bk_1, \dotsc, \bk_K$ be sampled uniformly at random from $\{\bk : \abs{\bk} \leq t, \bk \in \bbZ_2^n\}$, where $F(\bk_i) \neq 0, \; \forall i \in [K]$, but $F(\bk) = 0$ for all other $\bk \in \bbZ_2^n$. By constructing $C$ matrices $\bH_c, c=1,\dotsc,C$ from rows of a near constant column weight group testing matrix, and sampling as in \eqref{eq:subsample_alias}, if $t = \Theta(n^{\alpha})$ for $\alpha < 0.409$, and $b = O(\log(K))$, $K= O(n^{t})$, in the limit as $n \rightarrow \infty$, $\bj_i^c$ as defined in \eqref{eq:hashed_index} are mutually independent and uniformly distributed over $\bbZ_2^b$.
    \end{lemma}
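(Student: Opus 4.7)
The goal is to establish that the hashed indices $\bj_i^c = \bH_c \bk_i$, computed under the boolean OR arithmetic of this paper, are asymptotically mutually independent and uniformly distributed over $\bbZ_2^b$ as $i$ ranges over $[K]$ and $c$ over $[C]$. The essential difference from Lemma \ref{cor:uniform} is that the inputs $\bk_i$ are no longer uniform on $\bbZ_2^n$ but are uniform on the sparse set $\{\bk : \abs{\bk}\le t\}$. A coordinate-projection $\bH$ (as in Definition \ref{def:unif}) would send almost every such sparse $\bk$ to the zero hash, collapsing all the mass into one aliasing set; the whole point of the group-testing construction is to spread these sparse inputs uniformly across $\bbZ_2^b$ by mixing many coordinates into each row of $\bH_c$.

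First I would verify the marginal distribution of a single $\bj_i^c$. Conditioning on $\abs{\bk_i} = s$, the $r$-th coordinate of $\bH_c \bk_i$ equals $\bigvee_{j\in \supp(\bk_i)} H_{c,r,j}$, which is $0$ exactly when none of the $s$ active columns of $\bk_i$ carries a $1$ in row $r$. For a near-constant column-weight design with column weight $\omega$ chosen so that $\omega s / b' \approx \ln 2$ (with $b'$ the total number of rows of the parent group-testing matrix), this probability concentrates on $1/2$, giving Bernoulli$(1/2)$ marginals per coordinate. Next I would upgrade this to joint uniformity of the $b$ coordinates of $\bj_i^c$: for disjoint rows whose $1$-positions are placed through (near-)independent random column-by-column assignments, the joint probability that rows $r_1,\dots,r_b$ take a specified bit pattern factorizes up to a vanishing correction. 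The admissible scaling of $b$ and $s$ under which this factorization holds is precisely the capacity region of Bernoulli/near-constant column weight group testing established by \citet{Bay2022}, and it is this capacity constant that forces $\alpha < 0.409$.

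For mutual independence across $c$, I would build $\{\bH_c\}_{c=1}^C$ as disjoint row-blocks of one large near-constant column-weight matrix. Since the randomness used to populate different rows is independent, the hashings $\bH_c\bk_i$ for distinct $c$ depend on disjoint sources of matrix-construction randomness, so they are mutually independent conditional on $\bk_i$. Averaging over $\bk_i$ preserves this and yields the desired uniformity of $(\bj_i^c)_{c=1}^C$ on $\bbZ_2^{bC}$. Independence across different $i$ is then immediate from the i.i.d.\ sampling of the $\bk_i$, and the $b = O(\log K)$ regime with $K = O(n^t)$ guarantees the total row budget $bC = O(\log K)$ remains inside the group-testing capacity region so all of the above approximations hold with vanishing error.

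\textbf{Main obstacle.} The principal difficulty is that boolean OR is not a group operation, so none of the clean linear-algebraic cancellations available in XOR/Fourier hashing are at our disposal; marginal and joint uniformity must both be extracted probabilistically from the structure of the random group-testing matrix. The constant $0.409$ is not cosmetic: it is the known Bernoulli (non-adaptive) group-testing capacity threshold, and pushing past it would force $bC = \omega(t\log n)$ rows before uniformity kicks in, breaking the $O(Kt\log n)$ sample budget that the rest of the paper relies on.
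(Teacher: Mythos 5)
Your plan takes a genuinely different route from the paper. You attempt a direct probabilistic computation: compute $\Pr(\text{coordinate}\ r\ \text{of}\ \bj_i^c = 0) \approx (1-\omega/b')^s \to 1/2$ from the near-constant column weight $\omega$, then argue that the joint distribution over $b$ rows factorizes because the column placements are nearly independent across rows. The paper does not compute any of these probabilities. Instead, it invokes the rate-1 achievability result (Aldridge et al.\ 2019, Theorem~\ref{thm:rate1_group_test}) as a black box, deduces via the chain rule that $H(\bY^n)/T \to 1$, and then uses the fact that each conditional entropy $H(Y_i \mid \bY_{<i}) \le 1$ to force almost all of them to equal $1$ in the limit, from which uniformity and mutual independence follow purely by entropy bookkeeping. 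The paper's route never needs to touch the combinatorics of the design matrix; yours does.

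The gap in your plan is exactly the step you describe as "upgrade this to joint uniformity." You assert that the factorization holds "up to a vanishing correction" and that the regime in which it holds "is precisely" the group-testing capacity region, forcing $\alpha < 0.409$. Neither claim is established, and I don't believe the direct route delivers the $0.409$ threshold. Your marginal computation works for any $\alpha \in (0,1)$ and is insensitive to whether rate-1 group testing is achievable. The threshold can therefore only arise from the joint analysis, but you never carry that out. Worse, in the relevant regime $b = O(\log K) = O(t\log n)$ can be a constant fraction of the total row budget $T = \Theta(t\log n)$, so the restriction of a fixed-weight column to $b$ selected rows is \emph{not} well-approximated by an i.i.d.\ Bernoulli vector, and the cross-row correlations you are waving away are not obviously negligible. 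The paper sidesteps all of this: rate-1 achievability directly implies near-maximal entropy of $\bY$, which implies the factorization, with the $0.409$ constant inherited verbatim from the constant-column-weight achievability bound. Your plan would need an independent argument reproducing that implication, and it isn't there. Separately, you attribute the $0.409$ constant to the "Bernoulli (non-adaptive) group-testing capacity threshold" and to \citet{Bay2022}; in fact $0.409$ is the constant-column-weight design threshold (the Bernoulli threshold is $1/3$) and appears in the paper via Aldridge et al.\ 2019, while \citet{Bay2022} is used for a different result (poly-time decoding for arbitrary $t$ scaling).
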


The proof is given in Appendix~\ref{apdx:low_deg_dist}. It relies on an information theoretic argument, exploiting a result from optimal group testing \cite{Oghlan2020}.

\subsection{Proof of Lemma \ref{lem:aliasing}} \label{apdx:aliasing}\label{apdx:aliasing_proof}
\begin{proof}
Taking the Mobius transform of $u$ gives us:

\begin{eqnarray*}
    U(\bk)  &=&\sum_{\bell \leq \bk} (-1)^{\one^T (\bk - \bell)}u(\bell)\\
    &=&
    \sum_{\bell \leq \bk} (-1)^{\one^T (\bk - \bell)}f\left(\bigodot_{i:\ell_i = 0} \barbh_i \right) \\
    &=&
    \sum_{\bell \leq \bk} (-1)^{\one^T (\bk - \bell)} \sum_{\br \leq \bigodot_{i:\ell_i = 0} \barbh_i}F(\br)\\
    &=& \sum_{\bell \in \bbZ_2^b} (-1)^{\one^T (\bk - \bell)}\one\{\bell \leq \bk\} \sum_{\br \in \bbZ_2^n}F(\br)\one\left\{\br \leq \bigodot_{i:\ell_i = 0} \barbh_i\right\} \\
    &=& \sum_{\br \in \bbZ_2^n} F(\br) \left(\sum_{\bell \in \bbZ_2^b}(-1)^{\one^T(\bk - \bell)}\one\{\bell \leq \bk\}\one\left\{\br \leq \bigodot_{i:\ell_i = 0} \barbh_i\right\}\right) \\
    &=& \sum_{\br \in \bbZ_2^n} F(\br) I(\br) \\
\end{eqnarray*}
Now let's just focus on the term in the parenthesis for now, which we have called $I(\br)$.
\paragraph{Case 1: $\bH \br = \bk$}
\begin{eqnarray}
    I(\br) &=& \sum_{\bell \leq \bk}(-1)^{\one^T(\bk - \bell)}\one\left\{\br \leq \bigodot_{i:\ell_i = 0} \barbh_i\right\}
\end{eqnarray}
First note that under this condition, $\bell = \bk \implies \br \leq \bigodot_{i: \ell_i = 0} \barbh_i$. To see this, note that $k_j = 0 \implies \br \leq \barbh_j$. Since this holds for all $j$ such that $k_j = 0$, we have the previously mentioned implication.

Conversely, if $\ell_j < k_j$ (this means $\ell_j=0$ AND $k_j = 1$) for some $j$, then $\br$ and $\bh_j$ must overlap. Thus, 
$$\one\left\{\br \leq  \barbh_j\right\} = 0 \implies \one\left\{\br \leq \bigodot_{i: \ell_i = 0} \barbh_i\right\} = 0$$
We can split $I(\br)$ into two parts, the part where $\bell = \bk$ and the part where $\bell < \bk$:
\begin{eqnarray}
    I(\br) &=&  \one\left\{\br \leq \bigodot_{i:\bk_i = 0} \barbh_i\right\} + \sum_{\bell < \bk}(-1)^{\one^T(\bk - \bell)}\one\left\{\br \leq \bigodot_{i:\ell_i = 0} \barbh_i\right\} \quad(\bH \br = \bk)\\
    &=& 1 + \sum_{\bell < \bk}0 \\
    &=& 1
\end{eqnarray}

\paragraph{Case 2: $\bH \br \neq \bk$}

Let $\bH \br = \bk' \neq \bk$. This case itself will be broken into two parts. First let's say there is some $j$ such that $k_j = 0$ and $k_j' = 1$. Since $k'_j = 1$ we know that $\one\left\{\br \leq  \barbh_j\right\} = 0$. Furthermore, since $\forall \bell \in \{\bell : \bell \leq \bk\}$ we have $\ell_j=0$. Then by a similar argument to our previous one, we have $\one\left\{\br \leq \bigodot_{i:\ell_i = 0} \barbh_i\right\} = 0 \;\forall \bell \leq \bk$. It follows immediately that $I(\br) = 0$ in this case.

Finally, we have the case where $\bk' < \bk$. First, if there is a coordinate $j$ such that $ 0 = \ell_j < k'_j = 1$, we know that $\one\left\{\br \leq  \barbh_j\right\} = 0$ so we have $\one\left\{\br \leq \bigodot_{i:\ell_i = 0} \barbh_i\right\} = 0 \;\forall \bell$ s.t. $\exists j, \ell_j < k'_j$. The only $\bell$ that remain are those such that $\bk' \leq \bell \leq \bk$. It is easy to see that this is a sufficient condition for $\one\left\{\br \leq \bigodot_{i:\ell_i = 0} \barbh_i\right\} = 1$.
\begin{eqnarray}
    I(\br) &=& \sum_{\bell \leq \bk}(-1)^{\one^T(\bk - \bell)}\one\left\{\br \leq \bigodot_{i:\ell_i = 0} \barbh_i\right\} \\
            &=& \sum_{\bk' \leq \bell \leq \bk}(-1)^{\one^T(\bk - \bell)} \\
            &=& 0
\end{eqnarray}
Where the final sum is zero because exactly half of the $\bell$ have even and odd parity respectively. 

Thus, the subsampling pattern becomes:
$$U(\bk) = \sum_{\bH \br = \bk}F(\br).$$
\end{proof}
\subsection{Proof of Section~\ref{sec:singleton_detection}} \label{apdx:proof_delays}

\begin{eqnarray*}
    U(\bk)  &=&\sum_{\bell \leq \bk} (-1)^{\one^T (\bk - \bell)}u(\bell)\\
    &=&
    \sum_{\bell \leq \bk} (-1)^{\one^T (\bk - \bell)}f\left(\left(\bigodot_{i:\ell_i = 0} \barbh_i\right) \odot \barbd \right) \\
    &=&
    \sum_{\bell \leq \bk} (-1)^{\one^T (\bk - \bell)} \sum_{\br \leq \bigodot_{i:\ell_i = 0} \barbh_i}F(\br) \one \left\{ \br \leq \barbd \right\}\\
    &=& \sum_{\bell \in \bbZ_2^b} (-1)^{\one^T (\bk - \bell)}\one\{\bell \leq \bk\} \sum_{\br \in \bbZ_2^n}F(\br)\one\left\{\br \leq \bigodot_{i:\ell_i = 0} \barbh_i\right\} \one \left\{ \br \leq \barbd \right\} \\
    &=& \sum_{\br \in \bbZ_2^n} F(\br)\one \left\{ \br \leq \barbd \right\} \left(\sum_{\bell \in \bbZ_2^b}(-1)^{\one^T(\bk - \bell)}\one\{\bell \leq \bk\}\one\left\{\br \leq \bigodot_{i:\ell_i = 0} \barbh_i\right\}\right) \\
    &=& \sum_{\br \in \bbZ_2^n} F(\br)\one \left\{ \br \leq \barbd \right\} I(\br) \\
    &=& \sum_{\substack{\bH \br = \bk \\ \br \leq \barbd}}F(\br)
\end{eqnarray*}

\subsection{Proof of Main Theorems}\label{apdx:final_proofs}

\thmnoiseless*

\thmnoisy*

\begin{proof}
The first step for proving both Theorem \ref{thm:noiseless_reconstruct} and Theorem \ref{thm:noisy_reconstruct} is to show that Algorithm \ref{alg:smt} can successfully recover all Mobius coefficients with probability $1 - O(1/K)$ under the assumption that we have access to a $\detect{ \bU_c(\bj)}$ function that can output the type $\type{ \bU_c(\bj)}$ for any aliasing set $\bU_c(\bj)$. Under this assumption, we use density evolution proof techniques to obtain Theorem \ref{thm_peeling} and conclude both theorems. 

Then, to remove this assumption, we need to show that we can process each aliasing set $\bU_c(\bj)$correctly, meaning that each bin is correctly identified as a zeroton, singleton, or multiton. Define $\mathcal{E}$ as the error event where the detector makes a mistake in $O(K)$ peeling iterations. If the error probability satisfies $\mathrm{Pr}(\mathcal{E}) \leq O(1/K)$, the probability of failure of the algorithm satisfies
\begin{align*}
    \mathbb{P}_F &= \mathrm{Pr}\left(\widehat{F} \neq F | \mathcal{E}^{\mathrm{c}} \right) \mathrm{Pr}(\mathcal{E}^{\mathrm{c}}) + \mathrm{Pr}\left(\widehat{F} \neq F | \mathcal{E} \right) \mathrm{Pr}(\mathcal{E})\\
    &\leq \mathrm{Pr}\left(\widehat{F} \neq F | \mathcal{E}^{\mathrm{c}} \right) + \mathrm{Pr}(\mathcal{E})\\
    &= O(1/K).
\end{align*}

In the following, we describe how we achieve $\mathrm{Pr}(\mathcal{E}) \leq O(1/K)$ under different scenarios.

In the case of uniformly distributed interactions without noise, singleton identification and detection can be performed without error as described in Section \ref{sect_uniform_singleton}. In the case of interactions with low-degree and without noise, singleton identification and detection can be performed with vanishing error as described in Section \ref{subsec:detect_noiseless_low_deg}. Lastly, we can perform noisy singleton identification and detection with vanishing error for low-degree interactions as described in Section \ref{subsec:detect_noiseless_low_deg}.

\end{proof}

\subsection{Density Evolution Proofs} \label{apdx:density}

The density evolution proof is generally separated into two parts. 
\begin{itemize}
    \item We show that with high probability, nearly all of the variable nodes will be resolved.
    \item We show that with high probability, the graph is a good \emph{expander}, which ensures that if only a small number are unresolved, the remaining variable nodes will be resolved.
\end{itemize}

Whether the decoding succeeds or fails depends entirely on the graph (or rather distribution over graphs) that is induced by the algorithm. The graph ensemble is parameterized as $\cG \left(\cD, \left\{ \bM_c\right\}_{c \in [C]}\right)$. $\cD$ is the support distribution. The set of non-zero Mobius coefficients $\left\{\br : \cM[f](\br) \neq 0\right\} \sim \cD$ is drawn from this distribution. In \cite{li2015spright}, using the arguments above it is shown that if the following conditions hold, the peeling message passing successfully resolves all variable nodes:
\begin{enumerate}
    \item In the limit as $n \rightarrow \infty$ asymptotic check-node degree distribution from an edge perspective converges to that of independent an identically distributed Poisson distribution (shifted by 1).
    \item The variable nodes have a constant degree $C \geq 3$ (This is needed for the expander property).
    \item The number of check nodes $b$ in each of the $C$ sampling group is such that $2^b = O(K)$.
\end{enumerate}

\begin{theorem}[\cite{li2015spright}] If the above three conditions hold, the peeling decoder recovers all Mobius coefficients with probability  $1 - O(1/K)$.
\label{thm_peeling}
\end{theorem}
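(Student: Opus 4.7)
The plan is to follow the standard density-evolution recipe for peeling decoders pioneered by Luby--Mitzenmacher--Shokrollahi for LT codes and adapted by \citet{li2015spright} for sparse Hadamard transforms. The argument has two phases. In the first, density evolution shows that the peeling process strips away all but a vanishing fraction of variable nodes. In the second, a graph-expansion / stopping-set argument shows that with high probability no small residual configuration can survive, so the cleanup completes.

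For phase one I would set up the following recursion. Let $\pi_j$ denote the probability that a uniformly random edge is unresolved after iteration $j$ viewed from the variable side. A variable node of degree $C$ is unresolved along a given edge iff each of its $C-1$ other check neighbors has at least one \emph{other} attached variable that is still unresolved. Under condition (1), the check-side edge-perspective degree distribution converges to $1 + \mathrm{Poisson}(\lambda)$ for some $\lambda$ determined by the density ratio $K/2^b$. A short generating-function calculation then gives
\begin{equation}
    \pi_{j+1} = \bigl(1 - e^{-\lambda \pi_j}\bigr)^{C-1}.
\end{equation}
Condition (3), $2^b = \Theta(K)$, lets us tune $\lambda$ below the critical threshold for which $\pi = 0$ is the unique fixed point in $[0,1]$, and condition (2), $C \geq 3$, gives strong enough contraction at $0$ so that $\pi_j$ decays super-linearly to zero. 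A Wormald differential-equation or Doob-martingale concentration argument then shows that the random trajectory of unresolved edges stays within $O(K^{-1/2})$ of its deterministic counterpart throughout the $O(\log K)$ iterations, so after finitely many rounds the number of surviving variable nodes is at most $\epsilon K$ for an arbitrarily small constant $\epsilon > 0$.

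For phase two I would rule out small stopping sets: a stopping set is a subset $S$ of variable nodes such that every check neighbor of $S$ has at least two neighbors inside $S$, which is precisely the configuration that stalls peeling. A first-moment bound counts stopping sets of size $s \leq \epsilon K$ as
\begin{equation}
    \binom{K}{s} \cdot \mathbb{P}\bigl[\,sC \text{ edges fall into at most } sC/2 \text{ check nodes}\,\bigr] = O\!\left( \left( \tfrac{s}{K} \right)^{s(C/2 - 1)}\right),
\end{equation}
where the probabilistic factor uses Lemmas~\ref{cor:uniform} and~\ref{cor:low_deg_matrix}, which guarantee that the hashed indices are mutually independent and uniform on $\bbZ_2^b$ across the $C$ sampling groups. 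For $C \geq 3$ the exponent $C/2 - 1 \geq 1/2$ makes the sum over $s$ geometric, and the whole bound collapses to $O(1/K)$, as required. Combining the two phases gives the claimed failure probability.

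The main obstacle I expect is the concentration step that connects the deterministic density-evolution trajectory to the actual random peeling process: one must verify bounded-differences-style Lipschitz conditions on the peeling Markov chain and handle the fact that variable-side independence is only \emph{pairwise} across the $C$ groups (Lemmas~\ref{cor:uniform},~\ref{cor:low_deg_matrix} must be strengthened to higher-order joint independence for the expansion counting to go through cleanly). A secondary subtlety is matching the phase-one residual constant $\epsilon$ to the phase-two expansion constant so that the two windows overlap; this is a matter of choosing the implicit constants in $b = O(\log K)$ and $C = 3$ with enough margin, and is routine once the two sub-lemmas are in place.
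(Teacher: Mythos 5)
The paper does not actually prove Theorem~\ref{thm_peeling}: it is stated as a citation to \citet{li2015spright}, and the surrounding text in Appendix~\ref{apdx:density} merely summarizes the high-level structure of that cited proof (density evolution to remove nearly all variable nodes, then an expander/graph-expansion argument to finish) and then verifies, in Lemmas~\ref{cor:uniform} and~\ref{cor:low_deg_matrix}, that the three hypotheses of the cited theorem are met. Your proposal is therefore not being compared against a proof in this paper, but against the SPRIGHT argument that the paper imports as a black box.

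That said, your sketch is a faithful reconstruction of the standard two-phase argument, and the recursion $\pi_{j+1} = (1 - e^{-\lambda \pi_j})^{C-1}$ you derive matches the one implicit in condition (1) and used in the cited work, with $\lambda = 1/\eta = K/2^b$. The overall architecture (density-evolution fixed point below threshold, Wormald/martingale concentration, then a residual cleanup for a sublinear remainder) is exactly what conditions (1)--(3) are designed to feed into, and your use of $C\geq 3$ and $2^b = \Theta(K)$ in both phases is correct. The one place you go wrong is the closing caveat: you worry that the hashing lemmas give only \emph{pairwise} independence across the $C$ groups and would need to be strengthened. That is not the case. Lemmas~\ref{cor:uniform} and~\ref{cor:low_deg_matrix} as stated (and as proved in Appendix~\ref{apdx:unif_deg_dist} and~\ref{apdx:low_deg_dist}) already give \emph{mutual} independence of the hashed indices $\bj_i^c$ over both $i\in[K]$ and $c\in[C]$, so the first-moment stopping-set count and the tree-like-neighborhood argument have all the independence they need; there is no gap to close there. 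Your phase-two variant, a direct first-moment bound on stopping sets, is a legitimate alternative to the tree-like-neighborhood plus expander-counting argument the cited paper uses; they establish the same conclusion, with your route being somewhat more self-contained but requiring a careful union bound over all small $s$, while the cited route modularizes the concentration step via a Doob-martingale bound on the number of unresolved edges.
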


In the following section, we show that for suitable choice of sampling matrix, these conditions are satisfied, both in the case of uniformly distributed and low degree Mobius coefficients.

\subsubsection{Uniform Distribution}\label{apdx:unif_deg_dist}
In order to satisfy the conditions for the case of a uniform distribution of we use the matrix in Corollary~\ref{cor:uniform}. We select $C=3$ different $I_1, I_2, I_3$ such that $I_i \cap I_j = \emptyset\;\;\forall i \neq j \in \{1, 2, 3\}$. Note that this satisfies condition (2) above. Furthermore, we let $k$ scale as $O(2^{n\delta})$. In order to satisfy condition (3), we must have $\delta < \frac{1}{3}$, since each $I_i$ can consist of at most $\frac{1}{3}$ of all the coordinates. 

We now introduce some notation. Let $\bg_j(\cdot)$ represent the \emph{hash function}, that maps a frequency $\br$ to a check node index $\bk$ in each subsampling group $j=1,\dotsc,C$, i.e., $\bg_j(\br) = \bH_j\br$. Per our assumption, we have $K$ non-zero variable notes $\br^{(1)}, \dotsc, \br^{(K)}$ chosen uniformly at random. Technically, we are sampling without replacement, however, since $\frac{K}{2^n} \rightarrow 0$, the probability of selecting a previously selected $\br^{(i)}$ vanishes. Going forward in this subsection, we will assume that each $\br_i$ is sampled with replacement for a more brief solution. A more careful analysis that deals with sampling with replacement before taking limits yields an identical result.

First, let's consider the marginal distribution of $\bg_j(\br^{(i)})$ for some arbitrary $j \in [C]$ and $i \in [K]$. Assuming sampling with replacement, we have: 
\begin{equation}
    \Pr\left(\bg_j(\br^{(i)}) = \bk\right) = \Pr\left(\br^{(i)}_{I_j} = \bk\right) = \prod_{m \in I_j} \Pr\left( r^{(i)}_m= k_m\right) = \frac{1}{2^b}.
\end{equation}
Thus, we have established that the our approach induces a uniform marginal distribution over the $2^b$ check nodes. Next, we consider the independence of our bins. By assuming sampling with replacement, we can immediately factor our probability mass function.
\begin{equation}
    \Pr\left(\bigcap_{i,j} \bg_j(\br^{(i)}) = \bk^{(i,j)}\right) = \prod_i \Pr\left(\bigcap_j \bg_j(\br^{(i)}) = \bk^{(i,j)}\right)
\end{equation}
Furthermore, since we carefully chose the $I_i$ such that they are pairwise disjoint, we have 
\begin{equation}
    \Pr\left(\bigcap_j \bg_j(\br^{(i)}) = \bk^{(i,j)}\right) = \Pr\left(\bigcap_j \br^{(i)}_{I_j} = \bk^{(i,j)}\right) = \prod_j \Pr\left(\br^{(i)}_{I_j} = \bk^{(i,j)}\right) = \prod_j \Pr\left(\bg_j(\br^{(i)}) = \bk^{(i,j)}\right),
\end{equation}
establishing independence. Let's define an inverse load factor $\eta = \frac{2^b}{K}$.  From a edge perspective, sampling with replacement with independent uniformly distributed gives us:
\begin{equation}
    \rho_j = j \eta \binom{K}{j}\left(\frac{1}{2^{b}}\right)^{j}\left( 1 - \frac{1}{2^b}\right)^{k-j},
\end{equation}
For fixed $\eta$, asymptotically as $K \rightarrow \infty$ this converges to:
\begin{equation}
    \rho_j \rightarrow \frac{(1/\eta)^{j-1}e^{-1/\eta}}{{(j-1)!}}.
\end{equation}
\subsubsection{Low-Degree Distribution}\label{apdx:low_deg_dist}
For this proof, we take an entirely different approach to the uniform case. We instead exploit the results of Theorem~\ref{thm:rate1_group_test}, which is about asymptotically exactly optimal group testing, and then make an information-theoretic argument. Let $\bX^n$ be a group testing matrix (constructed either by an i.i.d. Bernoulli design or a constant column weight design using the parameters required for the given $n$). We don't explicitly write the dependence of $\bX^n$ on $t$, since by invoking Theorem~\ref{thm:rate1_group_test}, we assume some implicit relationship where $t = \Theta(n^{\theta})$ for $\theta$ satisfying the theorem conditions. Now consider some $\br_n$ chosen uniformly at random from the $\binom{n}{t}$ weight $t$ binary vectors. Note that in this work we actually use what is known as the ``i.i.d prior" as opposed to the ``combinatorial prior" that we have just defined. As noted in \cite{Aldridge_2019}, these are actually equivalent, so we can arbitrarily choose to work with one, and the result holds for the other as well. We define:
\begin{equation}
    \bY^n =\bX^n \br^n.
\end{equation}
Furthermore, we define the decoding function $\dec_n(\cdot)$, which represents the deterministic procedure that successfully recovers $\br$ with vanishing error probability. We have the following bounds on the entropy of $\bY_n$:
\begin{eqnarray}\label{eq:entropy_conditioning}
        H(\bY^n) &=& H(Y^n_{1}) + H(Y^n_{2} \mid Y^n_{1}) + \dots + H(Y^n_{T} \mid Y^n_{1},\dotsc,Y^n_{T-1})\\
        &\leq& T,
\end{eqnarray}
where we have used the fact that binary random variables have a maximum entropy of $1$. Furthermore, by the properties of entropy we also have $H(\bY^n) \geq H(\dec (\bY^n, \bX^n) \mid \bX^n)$. Dividing through by $T$, we have:
\begin{equation}
    \frac{H(\dec(\bY^n, \bX^n) \mid \bX^n))}{T} \leq \frac{H(\bY^n)}{T} \leq 1.
\end{equation}
Let $\dec_n(\bY^n , \bX^n)) = \br^n + \err_n(\bY^n, \bX^n)$. It is known (see \cite{Aldridge_2019}) that $\Pr(\err_n(\bY^n, \bX^n) \neq 0) = O(\poly(T)e^{-T})$.
Thus, we can bound the left-hand side as:
\begin{eqnarray}
    \frac{H(\dec(\bY^n, \bX^n) \mid \bX^n)}{T} &=& \frac{H(\br^n + \err_n(\bY^n, \bX^n) \mid \bX^n)}{T} \\ \label{eq:entropy_decomp_step1}
    &\geq& \frac{H(\br^n) - H(\err_n(\bY^n, \bX^n) \mid \bX^n)}{T} \\ 
    &\geq& \frac{H(\br^n) - H(\err_n(\bY^n, \bX^n))}{T}, \label{eq:entropy_decomp_step2}
\end{eqnarray}
Where in \eqref{eq:entropy_decomp_step1} we have used the bound $H(A + B) \geq H(A) - H(B)$ and the fact that $\bX^n$ and $\br^n$ are independent, and in \eqref{eq:entropy_decomp_step2} we have used the fact that conditioning only decreases entropy. By the continuity of entropy and Theorem~\ref{thm:rate1_group_test}, we have that:
\begin{equation}
    \lim_{n \rightarrow \infty} \frac{H(\br^n) - H(\err_n(\bY^n, \bX^n))}{T}= \lim_{n \rightarrow \infty} \frac{\log \binom{n}{t}}{T} - \lim_{n \rightarrow \infty}\frac{H(\err_n(\bY^n, \bX^n))}{T} = 1 - 0= 1.
\end{equation}
This establishes that:
\begin{equation}
    \lim_{n \rightarrow \infty} \frac{1}{T(n)} \sum_{i=1}^{T(n)} H\left(Y^n_{i} \mid \bY^n_{1:(i-1)}\right) = 1.
\end{equation}
Unfortunately, this does not immediately imply that $\emph{all}$ of the summands have a limit of $1$, however, it does mean that the fraction of total summands that are less than one goes to zero (it grows as $o(T(n))$). Let $G \subset \bbN$ correspond to the set containing all the indicies $i$ of the summands that are equal to $1$.
By using the fact that conditioning only reduces entropy, we have 
\begin{equation}
    \lim_{n \rightarrow \infty} H\left(Y^n_{i} \mid \bY^n_{S_i}\right) = 1,\;\; S_i = \{ j <  i, j \in G\},
\end{equation}
Now we define the countable sequence of random variables:
\begin{equation}
\bar{Y_i} = \lim_{n \rightarrow \infty} Y^n_{i}, \; i\in \bbN.    
\end{equation}
By continuity of entropy, and the above limit and definition, we have:
\begin{equation}
     H\left(\bar{Y_{i}} \mid \bar{\bY}_{S_i}\right) = 1,
\end{equation}
Noting that conditioning only decreases entropy, we immediately have that $\bar{Y}_i\sim \bern(\frac{1}{2})$. Now consider some arbitrary finite set $S^* \subset G$. We will now prove that $\{\bar{Y}_i, i \in S^*\}$ is mutually independent.
\begin{proof}
    Let $i_1<i_2<\dotsc<i_{\abs{S^*}}$ be an ordered indexing of the elements of $S^*$. Furthermore, let $Q_j = \left\{ i_q \mid 1 \leq q \leq j\right\}$. Assume the set $\{\bar{Y}_i, i \in Q_j\}$ is mutually independent, and use the notation $\bY_{Q_j}$ to represent a vector containing all of these entries. We have:
    \begin{equation}
        H(Y_{i_{j+1}}, \bY_{Q_j}) = H(\bY_{Q_j}) + H(Y_{i_{j+1}} \mid \bY_{Q_j}).
    \end{equation}
However, by using the fact that conditioning only decreases entropy we have:
\begin{equation}
    1 = H(Y_{i_{j+1}} \mid \bY_{S_{j+1}}) \leq H(Y_{i_{j+1}} \mid \bY_{Q_j}) \leq H(Y_{i_{j+1}}) \leq 1,   
\end{equation}
thus,
\begin{equation}
    H(Y_{i_{j+1}} \mid \bY_{Q_j}) = H(Y_{i_{j+1}}) = 1.
\end{equation}
This leads to the following chain of implications:
\begin{equation}
    H(Y_{i_{j+1}}, \bY_{Q_j}) = H(\bY_{Q_j}) + H(Y_{i_{j+1}}) \iff Y_{i_{j+1}} \indep \bY_{Q_j}.
\end{equation} 
From this, and the initial inductive assumption, we can conclude that $\{\bar{Y}_i, i \in Q_{j+1}\}$ is mutually independent. The base case of $j=1$ follows from the fact that a set containing just one single random variable is mutually independent.
Since $Q_{\abs{S^*}} = S^*$ the proof is complete.
\end{proof}

Now let $L(n) = \abs{G \cap [n]}$ we know $L = \Theta(T(n))$, which follows from the stronger result that $\lim_{n \rightarrow \infty}\frac{L(n)}{T(n)}=1$. Take $b \leq \frac{L(n)}{C}$ 
By leveraging the above results, we can select our subsampling matrices $\{\bH_i\}_{i=1}^{C}$ from suitable rows of $\bX_n$. Let $S^{(n)}_1,\dotsc, S^{(n)}_C \subset G\cap [n]$, $\abs{S^{(n)}_i} = b$ and $S^{(n)}_i\cap S^{(n)}_j = \emptyset$.
Then take
\begin{equation}
    \bH_i(n) = \bX^n_{S_i,:}.
\end{equation}
Due to the independence result proved above, the asymptotic degree distribution is:
\begin{equation}
    \rho_j \rightarrow \frac{(1/\eta)^{j-1}e^{-1/\eta}}{{(j-1)!}}.
\end{equation}
\subsection{Singleton Detection and Identification}
\subsubsection{Uniform Interactions Singleton Identification and Detection without Noise}
\label{sect_uniform_singleton}
Consider a multiton where $\bU_c(\bj) = F(\bk_1) + F(\bk_2)$ for $\bk_1 \neq \bk_2$. Since any two binary vectors must differ in at least one location, there must exist some $p$ such that
\begin{equation}
    y_{c,p} = \frac{F_{\bk_1}}{F_{\bk_1} + F_{\bk_2}} \notin \{0,1\},
\end{equation}
or 
\begin{equation}
    y_{c,p} = \frac{F_{\bk_2}}{F_{\bk_1} + F_{\bk_2}} \notin \{0,1\}.
\end{equation}
Furthermore, since $\by = \bD \bk^*$ always exactly recovers $\bk^*$, we have that $\detect{\bU_c(\bj)} = \type{\bU_c(\bj)}$.

\subsubsection{Low-Degree Singleton Identification and Detection without Noise}\label{subsec:detect_noiseless_low_deg}
In this case, we can simply rely on the result of \cite{chan2014non}. Since $\Pr(\hat{\bk} \neq \bk^*) \leq n^{-\beta}$, we correctly recover each $\bk^*$ in the limit, Furthermore, if $\bU_c(\bj) = F(\bk_1) + F(\bk_2)$, we also must have $\Pr(\bD_c\bk_1 \neq \bD_c \bk_2) \rightarrow 1$. Thus, by the same argument as above, we must also have $\by_c \notin \{0,1\}^n$ in the limit, implying that $\detect{\bU_c(\bj)}$ has vanishing error in the limit. The complete argument requires us to union bound over all of the singleton identifications success, but since $T$ is linear in $\beta$, so long as $K = \poly(n)$, constant $\beta$ suffices for vanishing error.

\subsubsection{Singleton Identification in i.i.d. Spectral Noise}
\label{subsec:detect_noisy_low_deg}
In this section, we discuss how to ensure that we can detect the true non-zero index $\br^*$ from the delayed samples, under the i.i.d. noise assumption. We first discuss the delay matrix itself, $\bD \in \bbZ_2^{P_1 \times n}$. As in the noiseless case, we want to choose this matrix to be a group testing matrix. For the purposes of theory, we will choose $\bD$ such that each element is drawn i.i.d. as a $\bern \left(\frac{\nu}{t}\right)$ for some $\nu = \Theta(1)$. We denote  the $i^{th}$ row of $\bD$ as $\bd_i$. 
Each group test is derived from one of the delayed samples. Under the i.i.d. spectral noise model, this means each sample has the form: 
\begin{eqnarray}
    U_i(\bk) & = & \sum_{\substack{\bH \br = \bk \\ \br \leq \barbd_i}} F(\br) + Z_i(\bk) \\
    & = & F(\br^*) \one \left\{\br^* \leq \bar{\bd}_i\right\} + Z_i(\bk),
\end{eqnarray}
where $Z_i(\bk) \sim \cN \left( 0, \sigma^2\right)$. Essentially, we can view this as a hypothesis testing problem, where we have one sample $X$, and hypothesis and the alternative are:
\begin{equation*}
    H_0: X  = Z \quad H_1: X = F(\br^*) + Z,\;\;Z \sim \cN(0,\sigma^2)
\end{equation*}
Furthermore, lets say the magnitude of $\abs{F[\bk]} = \rho$ is known. We construct a threshold test: 
\begin{equation}
    \varphi(X) = \one\left\{ \abs{X} > \gamma\right\}
\end{equation}
With such a test, we can compute the cross-over probabilities:
\begin{equation}
    p_{01} = \Pr_{H_0} \left( \abs{X} > \gamma \right) = 2 Q(\gamma/\sigma),
\end{equation}
\begin{equation}
    p_{10} = \Pr_{H_1} \left(\abs{X} < \gamma \right) = \Phi((\gamma - \rho)/\sigma) - \Phi((-\gamma - \rho)/\sigma).
\end{equation}
For the sake of simplicity, we will make the choice to choose $\gamma$ such that $p_{10} = p_{01}$. In that case, we can numerically solve for the cross-over probability which is fixed for a given signal-to-noise ratio.
\begin{figure}[ht]
    \centering
    \includegraphics[width= 0.5\textwidth]{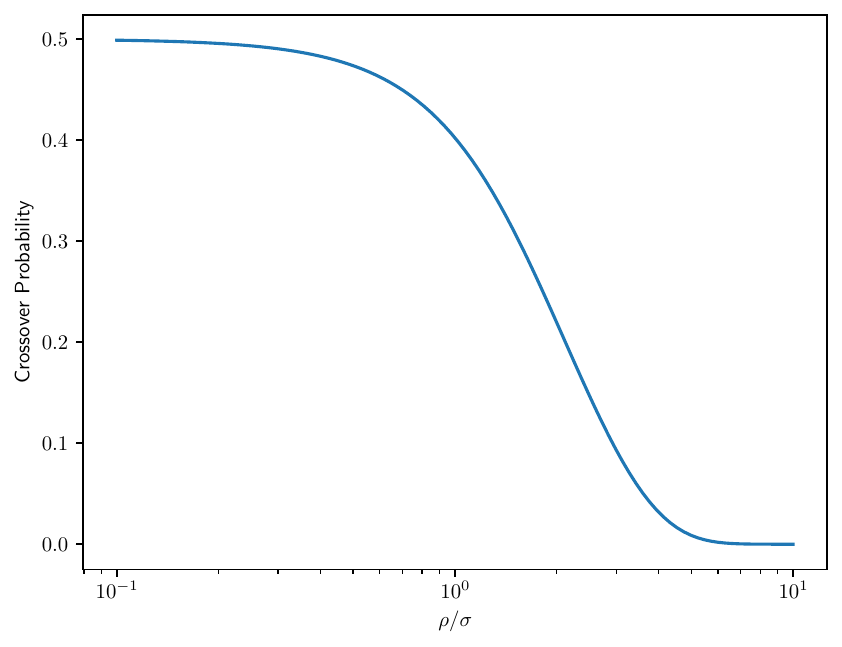}
    \caption{Symmetric cross-over probability induced by hypothesis testing problem for noisy singleton identification/detection.}
    \label{fig:cross_over}
\end{figure}

Now we can use \cite{chan2014non}, to prove our desired result. let $q$ be the resulting cross-over probability for a given $\rho/\sigma$. The probability that all of our Singleton identifications succeed can computed by a union bound on  $P_e = \Pr\left( \hat{\bk} \neq \bk\right) \leq n^{-\beta}$.
If $K = \poly(n)$, a constant $\beta$ suffices us to drive the union bound to zero.

\begin{lemma}
    For any fixed SNR, taking $\bD_c$ such that each element is $\bern \left(\frac{\nu}{t}\right)$, and $t = \Theta(n^\alpha)$ for $\alpha \in (0,1)$, taking $P_1 = O(t \log(n))$ suffices to ensure that we can achieve error of bin identification failure with probability of error $O(1/K^3)$.
\label{lemma_noisy_bin_ident}
\end{lemma}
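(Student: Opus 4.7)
The plan is to reduce the noisy singleton identification task to a standard noisy group testing problem with a binary symmetric channel (BSC), and then invoke the achievability guarantee for the Definite Defectives (DD) algorithm from \citet{Scarlett2019}. First, I would observe that for each row $\bd_i$ of $\bD_c$, the measurement has the form
\begin{equation}
    U_i(\bk) = F(\br^*)\,\one\{\br^* \leq \bar{\bd}_i\} + Z_i(\bk), \qquad Z_i(\bk)\sim\cN(0,\sigma^2),
\end{equation}
so that after applying the threshold rule $\varphi(\cdot)$, the binary output $\varphi(U_i(\bk))$ equals the noiseless group test outcome $\one\{\bd_i^\trans \br^* = 0\}$ corrupted by an independent Bernoulli flip whose probability equals the (symmetric) crossover probability $q=p_{01}=p_{10}$ derived above. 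For any fixed SNR $\rho/\sigma$, one can choose $\gamma\in(0,\rho)$ making $p_{01}=p_{10}=q(\rho/\sigma)<1/2$ strictly, and this $q$ depends only on the SNR, not on $n$ or $t$.

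Second, I would check that the conditional distribution of $\{\varphi(U_i(\bk))\}_{i=1}^{P_1}$ given the unknown singleton index $\br^*$ is exactly that of an i.i.d.\ Bernoulli$(\nu/t)$ group testing design passed through a BSC$(q)$ channel: the rows $\bd_i$ are i.i.d.\ by construction, and the additive noises $Z_i(\bk)$ are i.i.d.\ across $i$ (and independent of $\bD_c$), so the thresholded outputs are conditionally i.i.d.\ given $\br^*$. This is precisely the model analyzed in \citet{Scarlett2019}.

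Third, I would invoke Scarlett's achievability theorem for DD under i.i.d.\ Bernoulli designs with symmetric noise, which states that for any $t=\Theta(n^\alpha)$ with $\alpha\in(0,1)$ and any constant crossover $q<1/2$, there exists a constant $C=C(\nu,q)<\infty$ such that using $P_1\geq C\,t\log n$ tests, the DD decoder recovers $\br^*$ with error probability $o(1)$ as $n\to\infty$. Since our $q$ is bounded away from $1/2$ for fixed SNR, $C(\nu,q)$ is finite, and we conclude $P_1=O(t\log n)$ suffices.

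The main obstacle, and the only step requiring care, is the reduction itself: one must verify that the thresholded observations are genuinely conditionally independent across $i$ given $\br^*$ (which follows from the independence of the rows of $\bD_c$ and of the noises $Z_i$) and that the induced crossover probability $q$ does not drift with $n$. Once the symmetric BSC structure is in place with a constant $q<1/2$, the quoted theorem of \citet{Scarlett2019} finishes the proof directly, with no further combinatorial or probabilistic analysis required on our side.
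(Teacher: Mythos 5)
Your proposal follows the paper's argument essentially verbatim: both reduce the measurement to a symmetric binary channel via a threshold test with crossover probability depending only on the fixed SNR, and then invoke the noisy DD achievability result of \citet{Scarlett2019} (restated in the paper as Theorem~\ref{thm_noiseless_gt}) to conclude that $P_1 = O(t\log n)$ tests suffice. Your added remarks on the conditional i.i.d.\ structure of the thresholded outputs and the constancy of $q$ in $n$ are correct and simply make explicit what the paper's one-line invocation leaves implicit.
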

\subsubsection{Singleton Detection in i.i.d. Spectral Noise}\label{apdx:noisy_detect}

We note that the general flow of this proof follows \cite{erginbas2023efficiently}, but there are several fundamental differences that make this proof overall quite different. We define $\mathcal{E}_b$ as the error event where a bin $\bk$ is decoded wrongly, and then using a union bound over different bins and different iterations, the probability of the algorithm making a mistake in bin identification satisfies
\begin{equation*}
    \mathrm{Pr}(\mathcal{E}) \leq \text{(\# of iterations)} \times \text{(\# of bins)} \times \mathrm{Pr}(\mathcal{E}_b)
\end{equation*}

The number of bins is at most $\eta K$ for some constant $\eta$ and the number of iterations is at most $CK$ (at least one edge is peeled off at each iteration in the worst case). Hence, $\mathrm{Pr}(\mathcal{E}) \leq \eta C K^2 \mathrm{Pr}(\mathcal{E}_b)$. In order to satisfy $\mathrm{Pr}(\mathcal{E}) \leq O(1/K)$, we need to show that $\mathrm{Pr}(\mathcal{E}_b) \leq O(1/K^3)$.

We already showed in Lemma \ref{lemma_noisy_bin_ident} that we can achieve singleton identification under noise with vanishing error $O(1/K^3)$ with a delay matrix $\bD \in \bbZ_2^{P_1 \times n}$.

To achieve type detection, we construct another pair of delay matrices $\bD^1 \in \bbZ_2^{P_2 \times n}$ and $\bD^2 \in \bbZ_2^{P_2 \times n}$. We will choose $\bD^1$ and $\bD^2$ such that each element is drawn i.i.d. as a $\bern \left((1/2)^{1/t}\right)$. We denote the $i^{th}$ row of $\bD^1$ as $\bd_i^1$ and denote the $i^{th}$ row of $\bD^2$ as $\bd_i^2$. Then, with these delay matrices, we can obtain observations of the form
\begin{align*}
    U_i^1(\bk) &= \sum_{\substack{\bH \br = \bk \\ \br \leq \barbd_i^1}} F(\br) + Z_i(\bk)\\
    U_i^2(\bk) &= \sum_{\substack{\bH \br = \bk \\ \br \leq \barbd_i^2}} F(\br) + Z_i(\bk).
\end{align*}

Note that we can represent these observations as
\begin{align*}
    \bU^1 = \bS^1 \balpha + \bW^1\\
    \bU^2 = \bS^2 \balpha + \bW^2 
\end{align*}
with $\bW^1, \bW^2 \sim \mathcal{N}(0, \sigma^2 \bI)$, a $\balpha$ vector with entries $F(\br)$ for coefficients in the set and binary signature matrices $\bS^1, \bS^2$ with entries indicating the subsets of coefficients included in each sum.

Then, we subtract these observations to obtain a single observation $\bU = \bU^1 - \bU^2$ which can be written as
\begin{align*}
    \bU = \bS \balpha + \bW
\end{align*}
with $\bW \sim \mathcal{N}(0, 2 \sigma^2 \bI)$ and $\bS = \bS^1 - \bS^2$. This construction allows us to show that the columns of $\bS$ are sufficiently incoherent and hence we can correctly perform identification. 

\begin{lemma}
    For any fixed SNR, taking $\bD_c^1$ and $\bD_c^2$ such that each element is $\bern \left((1/2)^{1/t}\right)$, and $t = \Theta(n^\alpha)$ for $\alpha \in (0, 1/2)$ and taking $P_2 = O(t \log(n))$ suffices to ensure that the probability $\mathrm{Pr}(\mathcal{E}_b)$ for an arbitrary bin can be upper bounded as $\mathrm{Pr}(\mathcal{E}_b) \leq O(1/K^3)$.
\end{lemma}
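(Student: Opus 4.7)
The plan is to build a matched-filter-plus-residual detector acting on the combined observation $\bU = \bS\balpha + \bW$ (where $\bS = \bS^1 - \bS^2$ and $\bW \sim \cN(0,2\sigma^2 \bI)$) and to bound its failure probability via Hoeffding-type concentration on the random signature matrix $\bS$ together with Gaussian tail bounds on $\bW$.

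First I would pin down the statistical structure of $\bS$. With each entry of $\bD^1$ and $\bD^2$ chosen so that the zero outcome has probability $(1/2)^{1/t}$, a direct computation shows that for any index $\bk_j$ of weight at most $t$ the associated column $\bs_j$ of $\bS$ has i.i.d.\ entries in $\{-1,0,+1\}$ that are zero-mean with constant-order variance (for instance $1/2$ when $|\bk_j|=t$). For two distinct columns $\bs_j,\bs_{j'}$ the per-entry covariance equals $\tfrac12(2^{|\bk_j\cap\bk_{j'}|/t}-1)=O(|\bk_j\cap\bk_{j'}|/t)$, which is $O(t/n)$ on average over random supports. Summed over $P_2$ rows, this gives an approximately orthogonal random dictionary whose column norms concentrate around $P_2/2$ and whose pairwise inner products are at most $O(P_2\, t/n + \sqrt{P_2\log n})$ with high probability.

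Second, I would use this structure to define the detector and analyse each hypothesis. Run the identification routine of Lemma~\ref{lemma_noisy_bin_ident} (based on the separate $\bD$-block) to produce a candidate $\hat\bk$; then form the matched-filter amplitude $\hat F = \langle \bs(\hat\bk),\bU\rangle/\|\bs(\hat\bk)\|^2$ and the residual $R = \|\bU-\bs(\hat\bk)\hat F\|^2$. Declare a zeroton if $\|\bU\|^2$ is below a threshold compatible with pure-noise energy $2\sigma^2 P_2$; otherwise declare a singleton if $R$ is below a threshold compatible with noise energy after one signal has been projected out; otherwise declare a multiton. The Gaussian parts concentrate via sub-exponential $\chi^2$ tails and the bounded-entry signature sums via Hoeffding's inequality. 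The key quantitative estimate is that the signal gap between the true singleton and true multiton hypotheses is $\Theta(P_2 \rho^2)$, while the fluctuation sources (noise energy, column-norm deviation, cross-column correlation, matched-filter bias) are each at most $O(\sigma\sqrt{P_2}+\rho\sqrt{P_2\log n}+\rho P_2\, t/n)$. Taking $P_2 = C\, t\log n$ with a sufficiently large constant $C$ yields per-event error probability $\exp(-\Omega(t\log n)) = n^{-\Omega(t)}$, and a union bound over the at most $\binom{n}{t} \le n^t$ candidate supports and the $O(CK)$ detection events completes $\Pr(\cE_b) \le O(1/K^3)$.

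The hardest step is the multiton analysis: I must rule out that any candidate $\hat\bk$ can make the residual $R$ atypically small when two or more coefficients are actually aliased in the bin, even under adversarial sign/amplitude combinations of the aliased $F(\bk_j)$. Controlling this requires exploiting both the independence of $\bS^1,\bS^2$ and the approximate incoherence of $\bS$ simultaneously. The constraint $\alpha<1/2$ enters precisely here, since the matched-filter bias contributed by an extra aliased coefficient is $O(t/n)\cdot \rho$ per entry and must be dominated by the noise-induced deviation $O(\sigma/\sqrt{P_2})$ for the singleton-versus-multiton threshold to remain separable; this is tight exactly when $t=o(\sqrt{n})$, i.e.\ $\alpha<1/2$. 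Pushing the constants through the union bound so that the remaining $1/K^3$ slack survives without shrinking the SNR threshold is the central technical challenge.
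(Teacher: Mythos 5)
Your plan captures the spirit of the paper's argument — a residual‐energy singleton verification test, chi‐squared concentration on the Gaussian noise, and Hoeffding‐type concentration on the random signature matrix $\bS = \bS^1 - \bS^2$ — and the observation that the zero‐mean entries of $\bS$ and their pairwise incoherence are what make the test work is the right high‐level picture. However, the plan departs from the paper in several structural ways and contains at least one concrete error in reasoning that should be flagged.

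First, the constraint $\alpha < 1/2$ does not come from where you place it. You attribute it to the requirement that the matched‐filter bias $O(t/n)\cdot\rho$ from an extra aliased coefficient be dominated by the noise deviation $O(\sigma/\sqrt{P_2})$, and you state this is tight at $t = o(\sqrt{n})$. But with $P_2 = \Theta(t\log n)$, the inequality $(t/n)\rho \ll \sigma/\sqrt{P_2}$ gives $t^{3/2}\sqrt{\log n} \ll n\,\sigma/\rho$, i.e. $\alpha < 2/3$, not $\alpha < 1/2$. In the paper the $\alpha < 1/2$ restriction enters through Lemma~\ref{lemma_min_eigen}, where it is invoked to ensure the marginal inclusion probability $g(t,n) = \Pr\bigl((\bs_\br^1)_i = 1\bigr)$ converges to $\beta = 1/2$, which in turn is needed so that $\frac{1}{P_2}\bs_\br^\top\bs_\br$ concentrates around $2\beta(1-\beta)$ and the Gershgorin‐circle eigenvalue bound applies. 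So both the mechanism and the arithmetic in your explanation are off.

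Second, the paper's proof is organized around a three‐way decomposition of $\Pr(\cE_b)$ into missed, false, and crossed verification (Propositions~\ref{prop_missed_verfication}, \ref{prop_false_verfication}, \ref{prop_crossed_verification}), and the multiton/false‐verification case — which you correctly identify as the hardest step — is resolved by bounding $\lambda_{\min}\bigl(\frac{1}{P_2}\bS_{\cL}^\top\bS_{\cL}\bigr)$ and then splitting into $\abs{\cL} = O(1)$ (eigenvalue bound) versus $\abs{\cL} = \omega(1)$ (central limit theorem). Your sketch does not develop this step; gesturing at "approximate incoherence" is not enough, since the adversarial sign combination you worry about is exactly what the minimum‐eigenvalue bound is there to rule out. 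Third, you reach the $O(1/K^3)$ bound by union bounding over the $\binom{n}{t}\le n^t$ candidate supports plus $O(CK)$ detection events. The $n^t$ union bound is a legitimate (and arguably more careful) alternative to the paper's reliance on the group‐testing decoder error $\Pr(\hat\br\neq\br)\le O(1/K^3)$, but the $O(CK)$ factor does not belong here: this lemma asks only for the \emph{per‐bin} error $\Pr(\cE_b)\le O(1/K^3)$, and the union bound over $\eta C K^2$ bin–iteration pairs is applied separately, in Section~\ref{apdx:noisy_detect}, to go from $\Pr(\cE_b)$ to $\Pr(\cE)$. Conflating the two levels would overcount.
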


\begin{proof}

In the following, we prove that $\mathrm{Pr}(\mathcal{E}_b) \leq O(1/K^3)$ holds using the observation model. We consider separate cases where the bin in consideration is fixed as a zeroton, singleton, or multiton.

The error probability $\mathrm{Pr}(\mathcal{E}_b)$ for an arbitrary bin can be upper bounded as
\begin{align*}
    \mathrm{Pr}(\mathcal{E}_b) \leq &\sum_{\mathcal{F} \in \{\mathcal{H}_Z, \mathcal{H}_M\} } \mathrm{Pr}(\mathcal{F} \leftarrow \mathcal{H}_S(\br, F(\br)))\\
    + &\sum_{\mathcal{F} \in \{\mathcal{H}_Z, \mathcal{H}_M\} } \mathrm{Pr}(\mathcal{H}_S(\widehat{\br}, \widehat{F}(\br)) \leftarrow \mathcal{F})\\
    + &\mathrm{Pr}(\mathcal{H}_S(\widehat{\br}, \widehat{F}(\br)) \leftarrow \mathcal{H}_S(\br, F(\br)))   \end{align*}
above, each of these events should be read as:
\begin{enumerate}
    \item $\{ \mathcal{F} \leftarrow \mathcal{H}_S(\br, F(\br)) \}$:  missed verification in which the singleton verification fails when the ground truth is in fact a singleton.
    \item $\{\mathcal{H}_S(\widehat{\br}, \widehat{F}(\br)) \leftarrow \mathcal{F}\}$: false verification in which the singleton verification is passed when the ground truth is not a singleton.
    \item $\{ \mathcal{H}_S(\widehat{\br}, \widehat{F}(\br)) \leftarrow \mathcal{H}_S(\br, F(\br)) \}$: crossed verification in which a singleton with a wrong index-value pair passes the singleton verification when the ground truth is another singleton pair.
\end{enumerate}

We can upper-bound each of these error terms using Propositions \ref{prop_false_verfication}, \ref{prop_missed_verfication}, and \ref{prop_crossed_verification}. Note that all upper-bound terms decay exponentially with $P_2$ except for the term $\mathrm{Pr} (\widehat{\br} \neq \br) \leq O(1/K^3)$. 

We use Theorem \ref{thm_noiseless_gt} to show that we can achieve $\mathrm{Pr} (\widehat{\br} \neq \br) \leq O(1/K^3)$ if we choose $P_1 = O(t \log n)$. Since all other error probabilities decay exponentially with $P_2$, it is clear that if $P_2$ is chosen as $P_2 = O(t \log n)$, the error probability can be bounded as $\mathrm{Pr}(\mathcal{E}_b) \leq O(1/K^3)$.

\end{proof}

\begin{proposition}[False Verification Rate]
    For $0 < \gamma < \frac{\eta}{4} \mathrm{SNR}$, the false verification rate for each bin hypothesis satisfies:
    \begin{align*}
        \mathrm{Pr}(\mathcal{H}_S(\widehat{\br}, \widehat{F}(\widehat{\br})) \leftarrow \mathcal{H}_Z) &\leq e^{-\frac{P_2}{2}(\sqrt{1+2 \gamma} - 1)^2},\\
        \mathrm{Pr}(\mathcal{H}_S(\widehat{\br}, \widehat{F}(\widehat{\br})) \leftarrow \mathcal{H}_M) &\leq e^{- \frac{P_2 \gamma^2}{4 (1 + 4\gamma)}} + K^2 e^{-\epsilon \left( 1 - \frac{4 \gamma \nu^2}{\rho^2} \right)^2 P_2},
    \end{align*}
where $P_2$ is the number of the random offsets.
\label{prop_false_verfication}
\end{proposition}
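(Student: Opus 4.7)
The plan is to bound the two false-verification events separately using the observation model $\bU = \bS\balpha + \bW$ with $\bW \sim \cN(\bZero, 2\sigma^2 \bI_{P_2})$, exploiting the Bernoulli$((1/2)^{1/t})$ construction of $\bD^1, \bD^2$. The verification test accepts $\cH_S(\widehat{\br}, \widehat{F}(\widehat{\br}))$ when the normalized residual $\|\bU - \bs_{\widehat{\br}}\widehat{F}(\widehat{\br})\|^2/P_2$ falls below a threshold calibrated to $\gamma$ and the noise variance. The condition $\gamma < (\eta/4)\mathrm{SNR}$ ensures this threshold is strictly below the typical signal energy.

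For the zeroton case, $\bU = \bW$ is pure Gaussian noise. A false singleton verification requires the algorithm to find some candidate pair $(\widehat{\br}, \widehat{F}(\widehat{\br}))$ whose predicted pattern $\bs_{\widehat{\br}}\widehat{F}(\widehat{\br})$ is consistent with $\bW$ up to the threshold. Since $\widehat{F}(\widehat{\br})$ must exceed a minimum magnitude to be declared a singleton, this event is contained in a lower-tail/upper-tail event on $\|\bW\|^2$ of the form $\|\bW\|^2 \geq 2\sigma^2 P_2 (1+2\gamma)$. Because $\|\bW\|^2/(2\sigma^2) \sim \chi^2_{P_2}$, the Laurent--Massart bound directly yields $\exp(-P_2(\sqrt{1+2\gamma}-1)^2/2)$, matching the first claim.

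For the multiton case, write $\bU = \sum_j \bs_{\br_j}F(\br_j) + \bW$ with at least two nonzero components. A spurious singleton verification for $\widehat{\br}$ requires either (a) the noise $\bW$ to align anomalously with the candidate signature $\bs_{\widehat{\br}}$, or (b) the Bernoulli-designed signatures $\{\bs_{\br_j}\}$ to be atypically coherent with $\bs_{\widehat{\br}}$. Branch (a) is handled by a Gaussian concentration inequality on $\bs_{\widehat{\br}}^{\trans}\bW$, producing the first term $\exp(-P_2\gamma^2/(4(1+4\gamma)))$. Branch (b) is handled by analyzing the inner products $\bs_{\widehat{\br}}^{\trans}\bs_{\br_j}/P_2$: the Bernoulli$((1/2)^{1/t})$ design ensures that each entry of $\bs_{\br_j}$ takes values in $\{-1,0,1\}$ with the probability of being nonzero precisely $1/2$ on weight-$t$ indices, making the columns of $\bS = \bS^1 - \bS^2$ zero-mean and pairwise weakly correlated. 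A Hoeffding bound on this inner product gives the exponent $\epsilon(1 - 4\gamma\nu^2/\rho^2)^2 P_2$, and a union bound over at most $K^2$ pairs of candidate multiton indices contributes the $K^2$ prefactor.

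The main obstacle is the multiton bound. The delicate point is disentangling the noise-dominated and signature-coherence-dominated failure modes so that each receives a tight exponential bound, and verifying that the particular choice of Bernoulli parameter $(1/2)^{1/t}$ yields signatures incoherent enough for the $K^2$ union bound to remain $o(1)$ when $P_2 = \Omega(t\log n)$. The constraint $\gamma < \rho^2/(4\nu^2)$ implicit in $(1 - 4\gamma\nu^2/\rho^2)^2$ delineates the regime where the threshold is small enough that only atypical inner-product realizations can produce a false singleton. This analysis parallels the bin-detection arguments of \cite{erginbas2023efficiently}, adapted to the Mobius-specific signature model $\bS=\bS^1-\bS^2$ used here.
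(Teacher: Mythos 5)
Your high-level decomposition is in the right spirit, but several of the intermediate quantities you propose to bound do not line up with what is actually needed, and at least one numerical claim is incorrect.

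\paragraph{Zeroton term.} The paper bounds $\Pr(\cH_S \leftarrow \cH_Z)$ by the probability that the bin fails the \emph{zeroton} verification, $\Pr(\tfrac{1}{P_2}\|\bW\|^2 \geq (1+\gamma)\nu^2)$, and applies the non-central tail bound (Lemma~\ref{non-central-tail-bound} with $\theta_0 = 0$, $\tau_1 = (1+\gamma)\nu^2$) to obtain the exponent $-\tfrac{P_2}{4}(\sqrt{1+2\gamma}-1)^2$. You instead use threshold $(1+2\gamma)$ and assert that Laurent--Massart gives the exponent $-\tfrac{P_2}{2}(\sqrt{1+2\gamma}-1)^2$. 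That is not correct: applying Laurent--Massart to $\Pr(X \geq d(1+\gamma))$ for $X\sim\chi^2_d$ by solving $2\sqrt{dt}+2t = d\gamma$ gives $t = \tfrac{d}{4}(\sqrt{1+2\gamma}-1)^2$, i.e. the $\tfrac14$ constant, matching the paper's proof (the proposition's stated $\tfrac12$ appears to be a typo inherited from elsewhere). With threshold $(1+2\gamma)$ you would instead obtain $-\tfrac{d}{4}(\sqrt{1+4\gamma}-1)^2$, which matches neither.

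\paragraph{Multiton term.} Your branch (a) proposes to control the scalar inner product $\bs_{\widehat{\br}}^{\trans}\bW$ by Gaussian concentration. The paper instead conditions on the residual signal energy event $\{\|\bg\|^2/P_2 \geq 2\gamma\nu^2\}$ with $\bg = \bS(\balpha - \widehat{F}(\widehat\br)\be_{\widehat\br})$, and then applies the \emph{non-central} chi-square lower tail bound to the full residual norm $\|\bg + \bv\|^2$, which is the quantity the verification test actually checks. Your inner-product bound would not yield the exponent form $\exp(-P_2\gamma^2/(4(1+4\gamma)))$, which arises specifically from Lemma~\ref{non-central-tail-bound} with $\theta_0 \geq 2\gamma$ and $\tau_2 = (1+\gamma)\nu^2$. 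For branch (b), your intuition about incoherence of the Bernoulli signatures is on target, but the paper's actual argument bounds $\Pr(\|\bg\|^2/P_2 \leq 2\gamma\nu^2)$ via a lower bound on $\lambda_{\min}(\tfrac{1}{P_2}\bS_{\mathcal L}^{\top}\bS_{\mathcal L})$ using Gershgorin and Hoeffding (Lemma~\ref{lemma_min_eigen}), \emph{split into two cases}: a constant multiton size $L = O(1)$ (eigenvalue argument) and a growing multiton size $L = \omega(1)$ (CLT makes $\bg$ asymptotically Gaussian, then apply the non-central tail bound again). Your sketch handles only a version of the first case and omits the case split entirely; without it the claimed $K^2 e^{-\epsilon(1-4\gamma\nu^2/\rho^2)^2 P_2}$ bound does not follow, since a naive Hoeffding bound on pairwise inner products does not cover the large-$L$ regime where the number of interfering terms grows.

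In short, the decomposition ``noise anomaly vs.\ signature coherence'' is the right skeleton, but the specific random variables you bound and the missing $L$ case split constitute genuine gaps that would prevent the argument from producing the stated bounds.
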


\begin{proof}
The probability of detecting a zeroton as a singleton can be upper-bounded by the probability of a zeroton failing the zeroton verification. This means
 \begin{align*}
     \mathrm{Pr}(\mathcal{H}_S(\widehat{\br}, \widehat{F}(\widehat{\br})) \leftarrow \mathcal{H}_Z) &\leq  \mathrm{Pr}\left( \frac{1}{P_2} \|\bW\|^2 \geq (1+\gamma)\nu^2 \right)\\
     &\leq e^{-\frac{P_2}{4}(\sqrt{1+2 \gamma} - 1)^2},
 \end{align*}
by noting that $\bW \sim \mathcal{N}(0, \nu^2 \bI)$ and applying Lemma \ref{non-central-tail-bound}.

On the other hand, given some multiton observation $\bU = \bS \balpha + \bW$, the probability of detecting it as a singleton with index-value pair $(\widehat{\br}, \widehat{F}(\widehat{\br}) )$ can be written as
\begin{align*}
    \mathrm{Pr}(\mathcal{H}_S(\widehat{\br}, \widehat{F}(\widehat{\br})) \leftarrow \mathcal{H}_M) = \mathrm{Pr}\left( \frac{1}{P_2} \left\| \bU - \widehat{F}(\widehat{\br}) \bs_{\widehat{\br}} \right\|^2 \leq (1+\gamma)\nu^2\right) = \mathrm{Pr}\left( \frac{1}{P_2} \left\| \bg + \bv \right\|^2 \leq (1+\gamma)\nu^2\right),
\end{align*}
where $\bg := \bS(\balpha - \widehat{F}(\widehat{\br}) \be_{\widehat{\br}})$ and $\bv := \bW$. Then, we can upper bound this probability as
\begin{align*}
    \mathrm{Pr}\left( \frac{1}{P_2} \left\| \bg + \bv \right\|^2 \leq (1+\gamma)\nu^2 \middle| \frac{\|\bg\|^2}{P_2} \geq 2 \gamma \nu^2 \right) + \mathrm{Pr}\left( \frac{\|\bg\|^2}{P_2} \leq 2 \gamma \nu^2 \right).
\end{align*}

To upper bound the first term, we use Lemma \ref{non-central-tail-bound}. Note that the first term is conditioned on the event $\|\bg\|^2 / P_2 \geq 2 \gamma \nu^2$, thus the normalized non-centrality parameter satisfies $\theta_{0} \geq 2 \gamma$. As a result, we can use Lemma \ref{non-central-tail-bound} by letting $\tau_2 = (1 + \gamma) \nu^2$. Then, the first term is upper bounded by $\exp\{- (P_2 \gamma^2)/(4 (1 + 4\gamma))\}$. To analyze the second term, we let $\bbeta = \balpha - \widehat{F}(\widehat{\br}) \be_{\widehat{\br}}$ and write $\bg = \bS \bbeta$. Denoting its support as $\mathcal{L} := \mathrm{supp}(\bbeta)$, we can further write $\bS \bbeta = \bS_{\mathcal{L}} \bbeta_{\mathcal{L}}$ where $\bS_{\mathcal{L}}$ is the sub-matrix 
 of $\bS$ consisting of the columns in $\mathcal{L}$ and $\bbeta_{\mathcal{L}}$ is the sub-vector consisting of the elements in $\mathcal{L}$. Then, we consider two scenarios:
\begin{itemize}
    \item The multiton size is a constant, i.e., $|\mathcal{L}| = L = O(1)$. In this case, we have
    \begin{equation*}
        \lambda_{\mathrm{min}}(\bS_{\mathcal{L}}^\top \bS_{\mathcal{L}}) \| \bbeta_{\mathcal{L}}\|^2 \leq \|\bS_{\mathcal{L}} \bbeta_{\mathcal{L}}\|^2
    \end{equation*}
    Using $\| \bbeta_{\mathcal{L}}\|^2 \geq L \rho^2$, the probability can be bounded as
    \begin{equation*}
        \mathrm{Pr}\left( \frac{\|\bg\|^2}{P_2} \leq 2 \gamma \nu^2 \right) \leq \mathrm{Pr}\left( \lambda_{\mathrm{min}} \left(\frac{1}{P_2} \bS_{\mathcal{L}}^\top \bS_{\mathcal{L}} \right) \leq \frac{2 \gamma \nu^2}{L \rho^2} \right)
    \end{equation*}
    On the other hand, using Lemma \ref{lemma_min_eigen} with the selection $\beta = 1/2$ and $\eta = \frac{1}{1 + 2L} ( \frac{1}{2} - \frac{2 \gamma \nu^2}{L \rho^2})$, we have
\begin{align*}
    &\mathrm{Pr}\left( \frac{\|\bg\|^2}{P_2} \leq 2 \gamma \nu^2 \right) \leq 2 L^2 e^{ - \frac{P_2}{2 (1 + 2L)^2} \left( \frac{1}{2} - \frac{2 \gamma \nu^2}{L \rho^2} \right)^2}.
\end{align*}
which holds as long as $\gamma < L \rho^2 / (4 \nu^2) = \frac{L \eta}{4} \mathrm{SNR}$.

    \item The multiton size grows asymptotically with respect to $K$, i.e., $|\mathcal{L}| = L = \omega(1)$. As a result, the vector of random variables $\bg = \bS_{\mathcal{L}} \bbeta_{\mathcal{L}}$ becomes asymptotically Gaussian due to the central limit theorem with zero mean and a covariance
    \begin{equation*}
        \mathbb{E}[\bg \bg^\mathrm{H}] = \frac{1}{2} L \rho^2 \bI
    \end{equation*}
    Therefore, by Lemma \ref{non-central-tail-bound}, we have
    \begin{align*}
        \mathrm{Pr}\left( \frac{\|\bg\|^2}{P_2} \leq 2 \gamma \nu^2 \right) \leq e^{- \frac{P_2}{2} \left( 1 - \frac{\gamma \nu^2}{L \rho^2} \right)}
    \end{align*}
    which holds as long as $\gamma < L \rho^2 / \nu^2 = L \eta \mathrm{SNR}$.
\end{itemize}    

By combining the results from both cases, there exists some absolute constant $\epsilon > 0$ such that 
\begin{align*}
    \mathrm{Pr}\left( \frac{\|\bg\|^2}{P_2} \leq 2 \gamma \nu^2 \right) \leq K^2 e^{-\epsilon \left( 1 - \frac{4 \gamma \nu^2}{\rho^2} \right)^2 P_2}
\end{align*}
as long as $\gamma < \rho^2 / (4 \nu^2) = \frac{\eta}{4} \mathrm{SNR}$.
\end{proof}

\begin{proposition}[Missed Verification Rate]
    For $0 < \gamma < \frac{\eta}{2} \mathrm{SNR}$, the missed verification rate for each bin hypothesis satisfies
    \begin{align*}
        \mathrm{Pr}( \mathcal{H}_Z \leftarrow \mathcal{H}_S(\br, F[\br])) &\leq e^{-\frac{P_2}{4}\frac{(\rho^2/\nu^2 - \gamma)^2}{1 + 2\rho^2/\nu^2}}\\
        \mathrm{Pr}(\mathcal{H}_M \leftarrow \mathcal{H}_S(\br, F[\br]) ) &\leq e^{-\frac{P_2}{4}(\sqrt{1+2 \gamma} - 1)^2} + 2 e^{- \frac{\rho^2}{2 \nu^2} P_2} + 2 \mathrm{Pr} (\widehat{\br} \neq \br)
    \end{align*}
where $P_2$ is the number of the random offsets.
\label{prop_missed_verfication}
\end{proposition}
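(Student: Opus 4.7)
The plan is to analyze both miss events using the conditional model $\bU = F(\br)\bs_{\br} + \bW$, where under the true singleton hypothesis $\bW \sim \mathcal{N}(\bZero, \nu^2 \bI_{P_2})$ with $\nu^2 = 2\sigma^2$ (the doubling coming from the subtraction $\bU = \bU^1 - \bU^2$) and $\bs_{\br}$ is the $\{-1,0,1\}$-valued signature column induced by $\bS = \bS^1 - \bS^2$. The $\bern((1/2)^{1/t})$ choice is tuned so that each entry of $\bs_{\br}$ has second moment bounded away from $0$ uniformly in $t$, which by a Hoeffding bound produces $\|\bs_{\br}\|^2/P_2$ concentrating around a positive constant with exponential rate. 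Both bounds then reduce to applications of the non-central chi-squared tail inequality (Lemma~\ref{non-central-tail-bound}) already used in Proposition~\ref{prop_false_verfication}.

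For $\mathrm{Pr}(\mathcal{H}_Z \leftarrow \mathcal{H}_S(\br, F[\br]))$, the zeroton test declares true when $\|\bU\|^2/P_2 \leq (1+\gamma)\nu^2$. Conditionally on $\bs_{\br}$, the statistic $\|\bU\|^2/\nu^2$ is a non-central $\chi^2_{P_2}$ random variable with non-centrality $\lambda = \rho^2 \|\bs_{\br}\|^2 / \nu^2$. The left-tail form of Lemma~\ref{non-central-tail-bound}, combined with a Hoeffding lower bound on $\|\bs_{\br}\|^2/P_2$, yields the stated exponent $\tfrac{P_2}{4}(\rho^2/\nu^2 - \gamma)^2/(1 + 2\rho^2/\nu^2)$; the hypothesis $\gamma < \tfrac{\eta}{2}\mathrm{SNR}$ is precisely what guarantees the positive gap $\rho^2/\nu^2 - \gamma > 0$ required for the left-tail bound to be meaningful.

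For $\mathrm{Pr}(\mathcal{H}_M \leftarrow \mathcal{H}_S(\br, F[\br]))$, I would split on whether the independently-produced index estimate $\widehat{\br}$ equals $\br$. On $\{\widehat{\br} \neq \br\}$ the failure is trivially bounded by $\mathrm{Pr}(\widehat{\br} \neq \br)$, and the factor of two absorbs the additional contribution of the inflated $\widehat{F}$ error that follows from an incorrect index. On $\{\widehat{\br} = \br\}$, using the projection estimator $\widehat{F}(\br) = \langle \bU, \bs_{\br}\rangle/\|\bs_{\br}\|^2$, the residual $\bU - \widehat{F}(\br)\bs_{\br}$ is the orthogonal projection of $\bW$ onto $\bs_{\br}^{\perp}$, so $\|\bU - \widehat{F}(\br)\bs_{\br}\|^2 \leq \|\bW\|^2 \sim \nu^2 \chi^2_{P_2}$. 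The right-tail of Lemma~\ref{non-central-tail-bound} then delivers the first term $\exp(-P_2(\sqrt{1+2\gamma}-1)^2/4)$. The remaining $2 e^{-\rho^2 P_2 / (2\nu^2)}$ term covers two complementary ``bad-design'' events, each controlled at rate $\rho^2/(2\nu^2)$ per sample: a Hoeffding deviation $\{\|\bs_{\br}\|^2 < cP_2\}$ that would inflate the estimator variance, and the Gaussian tail event $\{|\widehat{F}(\br) - F(\br)| > \rho/2\}$ that would break the residual identity used above.

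The most delicate point is handling the signature randomness $\bs_{\br}$ consistently, since it appears both in the signal $F(\br)\bs_{\br}$ and in the decoder's projection residual, and any slack in the Hoeffding step on $\|\bs_{\br}\|^2$ threatens to weaken the final exponent after de-conditioning. Crucially, the index estimate $\widehat{\br}$ is produced from the independent design $\bD$, not $\bD^1,\bD^2$, so identification and verification decouple cleanly; this removes the only potentially problematic coupling and the remaining steps are standard sub-Gaussian quadratic-form concentration plus a union bound.
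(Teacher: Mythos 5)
Your decomposition of the multiton-miss event does not match the paper's, and the way you account for the third term is incorrect. The paper conditions on the \emph{exact-recovery} event $\{\widehat{F}[\widehat{\br}] = F[\br] \wedge \widehat{\br} = \br\}$, so that on this event the residual is precisely $\bW$ and the right-tail of Lemma~\ref{non-central-tail-bound} gives the $e^{-\frac{P_2}{4}(\sqrt{1+2\gamma}-1)^2}$ term; the complementary probability is then unpacked by the union-bound chain $\mathrm{Pr}(\widehat{F}\neq F \vee \widehat{\br}\neq\br) \leq \mathrm{Pr}(\widehat{F}\neq F \mid \widehat{\br}=\br) + 2\,\mathrm{Pr}(\widehat{\br}\neq\br)$, and the first summand is exactly the binary (BPSK) detection error of deciding $F[\br]\in\{+\rho,-\rho\}$ from the projected observation, giving $2e^{-\rho^2 P_2/(2\nu^2)}$. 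Your projection estimator $\widehat{F}(\br)=\langle\bU,\bs_{\br}\rangle/\|\bs_{\br}\|^2$ makes the residual identically $(\bI-\bP_{\bs_{\br}})\bW$ regardless of how accurate $\widehat{F}$ is, so in your framework the event $\{|\widehat{F}(\br)-F(\br)|>\rho/2\}$ cannot ``break the residual identity'' and there is nothing for the $2e^{-\rho^2 P_2/(2\nu^2)}$ term to cover; you are then forced to invent roles (``bad-design events'') for it, and the claim that the Hoeffding deviation $\{\|\bs_{\br}\|^2 < cP_2\}$ is controlled at rate $\rho^2/(2\nu^2)$ per sample is simply not a Hoeffding rate --- that concentration exponent depends on the gap $c$ and the entry distribution of $\bs_{\br}$, not on the SNR.

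For the zeroton-miss bound, you add a Hoeffding concentration step on $\|\bs_{\br}\|^2/P_2$ that the paper does not use; the paper applies Lemma~\ref{non-central-tail-bound} directly with $\theta_0 = \rho^2/\nu^2$, which corresponds to taking $\|\bs_{\br}\|^2 = P_2$. Since the entries of $\bs_{\br}=\bs_{\br}^1-\bs_{\br}^2$ lie in $\{-1,0,1\}$ and are nonzero with probability $2g(1-g)<1$, your more careful treatment would in fact produce a \emph{smaller} non-centrality and an additional Hoeffding failure term, so ``yields the stated exponent'' does not follow. If you want to keep the rigorous concentration step, you need to track how the reduced $\theta_0$ and the extra Hoeffding term propagate into the final bound rather than asserting the stated exponent falls out unchanged.
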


\begin{proof}
    The probability of detecting a singleton as a zeroton can be upper bounded by the probability of a singleton passing the zeroton verification. Hence, by noting that $\bW \sim \mathcal{N}(0, \nu^2 \bI)$ and applying Lemma \ref{non-central-tail-bound},
\begin{align*}
     \mathrm{Pr}&( \mathcal{H}_Z \leftarrow \mathcal{H}_S(\br, F[\br])) \\
     &\leq  \mathrm{Pr}\left( \frac{1}{P_2} \|F[\br] \bs_{\br} + \bW\|^2 \leq (1+\gamma)\nu^2 \right)\\
     &\leq e^{-\frac{P_2}{4}\frac{(\rho^2/\nu^2 - \gamma)^2}{1 + 2\rho^2/\nu^2}}.
 \end{align*}
which holds as long as $\gamma < \rho^2 / \nu^2 = \eta \mathrm{SNR}$.

On the other hand, the probability of detecting a singleton as a multiton can be written as the probability of failing the singleton verification step for some index-value pair $(\widehat{\br}, \widehat{F}[\widehat{\br}] )$. Hence, we can write 
\begin{align*}
    \mathrm{Pr}(\mathcal{H}_M \leftarrow \mathcal{H}_S(\br, F[\br]) ) &= \mathrm{Pr} \left( \frac{1}{P_2} \left\| \bU - \widehat{F}[\widehat{\br}] \bs_{\widehat{k}} \right\|^2 \geq (1 + \gamma) \nu^2 \right) \\
    &\leq \mathrm{Pr} \left( \frac{1}{P_2} \left\| \bU - \widehat{F}[\widehat{\br}] \bs_{\widehat{k}} \right\|^2 \geq (1 + \gamma) \nu^2 \middle| \widehat{F}[\widehat{\br}] = F[\br] \wedge \widehat{\br} = \br \right) + \mathrm{Pr} (\widehat{F}[\widehat{\br}] \neq F[\br]  \vee \widehat{\br} \neq \br).
\end{align*}

Then, using Lemma \ref{non-central-tail-bound}, the first term is upper-bounded as
\begin{align*}
    \mathrm{Pr} \left( \frac{1}{P_2} \left\| \bU - \widehat{F}[\widehat{\br}] \bs_{\widehat{k}} \right\|^2 \geq (1 + \gamma) \nu^2 \middle| \widehat{F}[\widehat{\br}] = F[\br] \wedge \widehat{\br} = \br \right) &\leq  \mathrm{Pr}\Big( \frac{1}{P_2} \|\bW\|^2 \geq (1+\gamma)\nu^2 \Big)\\
     &\leq e^{-\frac{P_2}{4}(\sqrt{1+2 \gamma} - 1)^2}.
\end{align*}

On the other hand, the second term can be bounded as
\begin{align*}
    \mathrm{Pr} (\widehat{F}[\widehat{\br}] \neq F[\br]  \vee \widehat{\br} \neq \br) &\leq \mathrm{Pr} (\widehat{F}[\widehat{\br}] \neq F[\br]) + \mathrm{Pr} (\widehat{\br} \neq \br)\\
    &= \mathrm{Pr} (\widehat{F}[\widehat{\br}] \neq F[\br] | \widehat{\br} \neq \br) \mathrm{Pr} (\widehat{\br} \neq \br)\\
    &\qquad + \mathrm{Pr} (\widehat{F}[\widehat{\br}] \neq F[\br] | \widehat{\br} = \br) \mathrm{Pr} (\widehat{\br} = \br)\\
    &\qquad + \mathrm{Pr} (\widehat{\br} \neq \br)\\
    &\leq \mathrm{Pr} (\widehat{F}[\widehat{\br}] \neq F[\br] | \widehat{\br} = \br) + 2 \mathrm{Pr} (\widehat{\br} \neq \br)
\end{align*}

The first term is the error probability of a BPSK signal with amplitude $\rho$, and it can be bounded as
\begin{equation*}
    \mathrm{Pr} (\widehat{F}[\widehat{\br}] \neq F[\br] | \widehat{\br} = \br) \leq 2 e^{- \frac{\rho^2}{2 \nu^2} P_2}
\end{equation*}

\end{proof}

\begin{proposition}[Crossed Verification Rate]
For $0 < \gamma < \frac{\eta}{2} \mathrm{SNR}$, the crossed verification rate for each bin hypothesis satisfies
\begin{align*}
    &\mathrm{Pr}(\mathcal{H}_S(\widehat{\br}, \widehat{F}[\widehat{\br}]) \leftarrow \mathcal{H}_S(\br, F[\br])) \leq e^{- \frac{P_2 \gamma^2}{4 (1 + 4\gamma)}} + K e^{-\epsilon \left( 1 - \frac{4 \gamma \nu^2}{\rho^2} \right)^2 P_2} + K^2 e^{-\epsilon \left( 1 - \frac{4 \gamma \nu^2}{\rho^2} \right)^2 P_2^2/t}.
\end{align*}
where $P_2$ is the number of the random offsets.
\label{prop_crossed_verification}
\end{proposition}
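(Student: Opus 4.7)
The proof mirrors the multiton case inside Proposition~\ref{prop_false_verfication}, now with the ground truth being a singleton. Under the singleton hypothesis we have $\bU = F[\br]\bs_{\br} + \bW$ with $\bW \sim \mathcal{N}(0,\nu^2\bI)$, and for any candidate $(\widehat{\br},\widehat{F}[\widehat{\br}]) \neq (\br,F[\br])$ the verification passes precisely when $\|\bg+\bW\|^2/P_2 \leq (1+\gamma)\nu^2$, where $\bg := F[\br]\bs_{\br} - \widehat{F}[\widehat{\br}]\bs_{\widehat{\br}}$. The overall strategy is to (i) split the probability into a non-central $\chi^2$ tail event and a ``small signal-norm'' event by the same conditioning trick used in Proposition~\ref{prop_false_verfication}, and then (ii) union-bound the signal-norm event over the feasible outputs of the identification/estimation step.

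For step (i), the plan is to write
\begin{equation*}
    \mathrm{Pr}(\|\bg+\bW\|^2/P_2 \leq (1+\gamma)\nu^2) \leq \mathrm{Pr}\bigl(\|\bg+\bW\|^2/P_2 \leq (1+\gamma)\nu^2 \,\bigm|\, \|\bg\|^2/P_2 \geq 2\gamma\nu^2\bigr) + \mathrm{Pr}(\|\bg\|^2/P_2 \leq 2\gamma\nu^2),
\end{equation*}
and then apply Lemma~\ref{non-central-tail-bound} to the first summand with normalized non-centrality parameter $\theta_0 \geq 2\gamma$ to obtain $\exp(-P_2\gamma^2/(4(1+4\gamma)))$, which is exactly the leading term in the claimed bound.

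For step (ii), union-bound the second summand over all feasible $(\widehat{\br},\widehat{F}[\widehat{\br}]) \neq (\br,F[\br])$. In the low-degree regime, the number of candidate indices with $|\widehat{\br}|\leq t$ is $\binom{n}{\leq t} = O(K)$, and the amplitude $\widehat{F}[\widehat{\br}]$ is either restricted to $\{\pm\rho\}$ or discretized to an $O(K)$-sized grid. Split into two cases. Case (a): $\widehat{\br} = \br$ with $\widehat{F}[\widehat{\br}] \neq F[\br]$, where $\|\bg\|^2 = (F[\br]-\widehat{F}[\widehat{\br}])^2 \|\bs_{\br}\|^2$ concentrates at $\Omega(P_2\rho^2)$ by a Hoeffding tail on $\|\bs_{\br}\|^2$, contributing the middle $K\exp(-\epsilon(1-4\gamma\nu^2/\rho^2)^2 P_2)$ term. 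Case (b): $\widehat{\br}\neq\br$, where Lemma~\ref{lemma_min_eigen} applied to the two-column signature submatrix $\bS_{\{\br,\widehat{\br}\}}$ gives the needed lower bound on $\|\bg\|^2$, and combining with the $O(K^2)$ index--amplitude union produces the final $K^2\exp(-\epsilon(1-4\gamma\nu^2/\rho^2)^2 P_2^2/t)$ term.

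The main obstacle is the slower $P_2^2/t$ rate in the last term. It reflects the fact that for $(\br,\widehat{\br})$ pairs at small Hamming distance the signatures $\bs_{\br}$ and $\bs_{\widehat{\br}}$ share most of their underlying $\bern((1/2)^{1/t})$ coordinates and are therefore heavily correlated, forcing the off-diagonal of $\bS_{\{\br,\widehat{\br}\}}^{\top}\bS_{\{\br,\widehat{\br}\}}/P_2$ to concentrate only at rate $1/\sqrt{t}$ rather than $1/\sqrt{P_2}$. Stratifying the union bound by Hamming-distance classes and invoking a Bernstein-type inequality that exploits the $1/t$ per-coordinate variance induced by the low-degree design is what delivers the stated exponent; the rest of the argument is a direct adaptation of the false-verification analysis.
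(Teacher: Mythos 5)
Your core decomposition — conditioning on $\|\bg\|^2/P_2 \geq 2\gamma\nu^2$, applying Lemma~\ref{non-central-tail-bound} to the conditioned term, and then bounding $\mathrm{Pr}(\|\bg\|^2/P_2 \leq 2\gamma\nu^2)$ — is exactly what the paper does. The paper's proof writes $\bg = \bS\bbeta$ with $\bbeta$ the $2$-sparse difference vector (entries in $\{\pm\rho\}$, support $\mathcal{L}=\{\br,\widehat{\br}\}$), applies Lemma~\ref{non-central-tail-bound} to get the leading $e^{-P_2\gamma^2/(4(1+4\gamma))}$ term, and then directly invokes Lemma~\ref{lemma_min_eigen} with $L=2$ to bound the second term by $8e^{-\frac{P_2}{50}(\frac{1}{2}-\frac{\gamma\nu^2}{L\rho^2})^2}$. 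That is the whole argument; the paper does no explicit union bound over $(\widehat{\br},\widehat{F}[\widehat{\br}])$, and its proof therefore does not literally produce the $K$, $K^2$, and $P_2^2/t$ factors appearing in the proposition statement.

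Where your proposal goes beyond the paper is the attempt to account for those $K$, $K^2$, and $P_2^2/t$ factors via a union bound over candidate pairs. That instinct is reasonable (the paper's proof as written is looser than its statement), but there are two problems with your execution. First, your Case~(a) ($\widehat{\br}=\br$, $\widehat{F}[\widehat{\br}]\neq F[\br]$) is not part of the crossed-verification event: the paper defines it as requiring $\widehat{\br}\neq\br$ (the $\widehat{\br}=\br$ mis-estimate is absorbed into the missed-verification analysis of Proposition~\ref{prop_missed_verfication}), so stratifying over Case~(a) is misplaced. Second, your explanation of the $P_2^2/t$ exponent — signature correlation for Hamming-close pairs degrading the off-diagonal concentration to rate $1/\sqrt{t}$, fixed by a Bernstein argument — is asserted but never derived, and Lemma~\ref{lemma_min_eigen}, which is the paper's actual tool, produces an $e^{-\Theta(P_2)}$ tail with no $t$-dependence. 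You gesture at the right gap but do not close it, so the proposal does not actually establish the claimed exponent any more than the paper's terse proof does.
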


\begin{proof}
This error event can only occur if a singleton with index-value pair $(\br, F[\br])$ passes the singleton verification step for some index-value pair $(\widehat{\br}, \widehat{F}[\widehat{\br}] )$ such that $\br \neq \widehat{\br}$. Hence,
\begin{align*}
    &\mathrm{Pr}(\mathcal{H}_S(\widehat{\br}, \widehat{F}[\widehat{\br}]) \leftarrow \mathcal{H}_S(\br, F[\br])) \\
    &\leq  \mathrm{Pr}\left( \frac{1}{P_2} \|F[\br] \bs_{\br} - \widehat{F}[\widehat{\br}] \bs_{\widehat{\br}} + \bW\|^2 \leq (1+\gamma)\nu^2 \right)\\
    &= \mathrm{Pr}\left( \frac{1}{P_2} \|\bS \bbeta + \bW\|^2 \leq (1+\gamma)\nu^2 \right)\\
    &= \mathrm{Pr}\left( \frac{1}{P_2} \|\bS \bbeta + \bW\|^2 \leq (1+\gamma)\nu^2 \middle | \|\bS \bbeta\|^2 \geq 2 \gamma \nu^2 \right)\\
    &\qquad + \mathrm{Pr}\left( \|\bS \bbeta\|^2 \leq 2 \gamma \nu^2 \right)
\end{align*}
where $\bbeta$ is a $2$-sparse vector with non-zero entries from $\{ \rho, -\rho \}$. Using Lemma \ref{non-central-tail-bound}, the first term is upper-bounded as
\begin{align*}
   \mathrm{Pr}\left( \frac{1}{P_2} \|\bS \bbeta + \bW\|^2 \leq (1+\gamma)\nu^2 \middle | \|\bS \bbeta\|^2 \geq 2 \gamma \nu^2 \right) \leq e^{- \frac{P_2 \gamma^2}{4 (1 + 4\gamma)}}.
\end{align*}
By Lemma \ref{lemma_min_eigen}, the second term is upper bounded as
\begin{align*}
    \mathrm{Pr}\left( \|\bS \bbeta\|^2 \leq 2 \gamma \nu^2 \right) \leq 8 e^{ - \frac{P_2}{50} \left( \frac{1}{2} - \frac{\gamma \nu^2}{L \rho^2} \right)^2}
\end{align*}
which holds as long as $\gamma < \rho^2 / (2 \nu^2) = \frac{\eta}{2} \mathrm{SNR}$.

\end{proof}

\begin{lemma}[Non-central Tail Bounds (Lemma 11 in \cite{li2015spright})]
    Given any $\bg \in \bbR^P$ and a Gaussian vector $\bv \sim \mathcal{N}(0, \nu^2 \bI)$, the following tail bounds hold:
    \begin{align*}
        \mathrm{Pr}\left( \frac{1}{P}\|\bg + \bv\|^2 \geq \tau_1 \right) &\leq e^{-\frac{P}{4} (\sqrt{2 \tau_1/\nu^2-1} - \sqrt{1 + 2\theta_0})^2}\\
        \mathrm{Pr}\left( \frac{1}{P}\|\bg + \bv\|^2 \leq \tau_2 \right) &\leq e^{-\frac{P}{4} \frac{\left(1+\theta_0-\tau_2/\nu^2 \right)^2}{1+2\theta_0}}
    \end{align*}
for any $\tau_1$ and $\tau_2$ that satisfy $\tau_1 \geq \nu^2 (1 + \theta_0) \geq \tau_2$ where
\begin{equation*}
    \theta_0 := \frac{\|\bg\|^2}{P \nu^2}
\end{equation*}
is the normalized non-centrality parameter.
\label{non-central-tail-bound}
\end{lemma}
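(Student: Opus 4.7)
The plan is to recognize that $Y := \|\bg + \bv\|^2/\nu^2$ follows a non-central chi-squared distribution with $P$ degrees of freedom and non-centrality parameter $\lambda := \|\bg\|^2/\nu^2 = P\theta_0$, since $(\bg+\bv)/\nu \sim \cN(\bg/\nu, \bI)$. Both inequalities then reduce to standard Birgé/Laurent--Massart concentration bounds for $\chi^2_P(\lambda)$, namely
\begin{align*}
    \Pr\!\left(Y \geq P + \lambda + 2\sqrt{(P+2\lambda)u} + 2u\right) &\leq e^{-u},\\
    \Pr\!\left(Y \leq P + \lambda - 2\sqrt{(P+2\lambda)u}\right) &\leq e^{-u},
\end{align*}
each of which is obtained by Chernoff against the closed-form MGF $E[e^{tY}] = \exp(\lambda t/(1-2t))/(1-2t)^{P/2}$, valid on $t<1/2$ (and its companion $E[e^{-tY}] = \exp(-\lambda t/(1+2t))/(1+2t)^{P/2}$ for $t>0$).

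For the upper-tail claim, I would set the target threshold $s := P\tau_1/\nu^2$ equal to $P + \lambda + 2\sqrt{(P+2\lambda)u} + 2u$ and solve the resulting quadratic in $\sqrt{u}$. Its nonnegative root is $\sqrt{u} = \tfrac{1}{2}\!\left(\sqrt{2s - P} - \sqrt{P + 2\lambda}\right)$; squaring and substituting back $s = P\tau_1/\nu^2$ and $\lambda = P\theta_0$ yields $u = \tfrac{P}{4}(\sqrt{2\tau_1/\nu^2 - 1} - \sqrt{1+2\theta_0})^2$, which is precisely the stated exponent. The hypothesis $\tau_1 \geq \nu^2(1+\theta_0)$ guarantees $2s - P \geq P + 2\lambda$, keeping $\sqrt{u}$ real.

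For the lower tail, the analogous equation $s' := P\tau_2/\nu^2 = P + \lambda - 2\sqrt{(P+2\lambda)u}$ is linear in $\sqrt{u}$, giving $\sqrt{u} = (P+\lambda - s')/(2\sqrt{P+2\lambda})$. Squaring and substituting produces $u = \tfrac{P}{4}(1 + \theta_0 - \tau_2/\nu^2)^2/(1 + 2\theta_0)$, which matches the second claim; the constraint $\tau_2 \leq \nu^2(1+\theta_0)$ is exactly what keeps $P + \lambda - s' \geq 0$.

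The main obstacle is purely algebraic bookkeeping: verifying that the upper-tail quadratic in $\sqrt{u}$ collapses to the specific form $\sqrt{2\tau_1/\nu^2 - 1} - \sqrt{1+2\theta_0}$ and that normalization by $P$ moves cleanly inside each square root. If I did not want to invoke the Birgé inequality as a black box, I would prove the bounds from first principles via Chernoff---$\Pr(Y \geq s) \leq \inf_{t\in(0,1/2)} e^{-st}E[e^{tY}]$, with the optimizer $t^\star = \tfrac{1}{2}\!\left(1 - \sqrt{(P+2\lambda)/(2s-P)}\right)$ for the upper tail and the analogous stationary point for the lower---substitute back, and simplify to the identical exponent; the two routes agree because the Birgé bound is itself the optimized Chernoff bound for this family.
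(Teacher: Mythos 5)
Your argument is correct, and it is self-contained where the paper is not: the paper simply imports this statement as Lemma~11 of \citet{li2015spright} without re-deriving it. The key observations all check out. Since $(\bg+\bv)/\nu \sim \cN(\bg/\nu, \bI)$, the quantity $Y := \|\bg+\bv\|^2/\nu^2$ is indeed $\chi^2_P(\lambda)$ with $\lambda = \|\bg\|^2/\nu^2 = P\theta_0$. Birg\'e's two-sided bounds
\begin{align*}
\Pr\bigl(Y \geq P + \lambda + 2\sqrt{(P+2\lambda)u} + 2u\bigr) &\leq e^{-u},\\
\Pr\bigl(Y \leq P + \lambda - 2\sqrt{(P+2\lambda)u}\bigr) &\leq e^{-u},
\end{align*}
then give the result once the threshold equations are inverted for $u$. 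Your algebra is right: with $s = P\tau_1/\nu^2$, the quadratic $2(\sqrt{u})^2 + 2\sqrt{P+2\lambda}\,\sqrt{u} - (s-P-\lambda) = 0$ has discriminant $(P+2\lambda) + 2(s-P-\lambda) = 2s - P$, so $\sqrt{u} = \tfrac12\bigl(\sqrt{2s-P}-\sqrt{P+2\lambda}\bigr)$, and the factor $P$ passes cleanly through both square roots, yielding the claimed exponent; the lower-tail case is linear in $\sqrt{u}$ and is even more direct. The hypothesis $\tau_1 \geq \nu^2(1+\theta_0) \geq \tau_2$ is precisely what makes both expressions for $\sqrt{u}$ nonnegative.

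One small imprecision worth flagging in your closing parenthetical. The value $t^\star = \tfrac12\bigl(1 - \sqrt{(P+2\lambda)/(2s-P)}\bigr)$ is the minimizer of the Bernstein-type surrogate $-ta + (P+2\lambda)t^2/(1-2t)$ with $a = s-P-\lambda$ (i.e., Chernoff against the standard upper bound $\log\E[e^{t(Y - P - \lambda)}] \leq (P+2\lambda)t^2/(1-2t)$), and substituting it back does recover $e^{-u}$ with $u = \tfrac14(\sqrt{2s-P}-\sqrt{P+2\lambda})^2$. But it is not the minimizer of the exact Chernoff expression $-st + \log \E[e^{tY}]$, whose stationary point solves $s(1-2t)^2 - P(1-2t) - \lambda = 0$, i.e.\ $1-2t = \bigl(P + \sqrt{P^2 + 4s\lambda}\bigr)/(2s)$; the two agree only at the boundary $s = P+\lambda$. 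So the Birg\'e bound is the optimized Chernoff bound against an MGF \emph{surrogate}, not against the exact MGF, and the exact Chernoff route would give a different (strictly tighter) exponent rather than the identical one. This does not affect the validity of your main argument, which rests only on the Birg\'e inequality itself.
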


\begin{lemma}
Suppose $\beta = \Theta(1)$, $\eta = \Omega(1)$, and $t = \Theta(n^\alpha)$ for some $\alpha \in (0, 1/2)$. Then, there exists some $n_0$ such that for all $n \geq n_0$, we have
\begin{align*}
    \Pr \left( \lambda_{\mathrm{min}}\left(\frac{1}{P_2} \bS_{\mathcal{L}}^\top \bS_{\mathcal{L}}\right) \leq 2 \beta (1 - \beta) - (2 L + 1) \eta \right) \leq 2 L^2 \exp \left(- \frac{\eta^2}{2} P_2 \right).
\end{align*}
\label{lemma_min_eigen}
\end{lemma}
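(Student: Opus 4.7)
The plan is to establish the minimum eigenvalue bound via a diagonal-plus-perturbation decomposition of the empirical Gram matrix $\mathbf{G} \defeq (1/P_2)\bS_{\mathcal{L}}^\top \bS_{\mathcal{L}}$, combined with entrywise Hoeffding concentration and a union bound. First I would expand entries as $G_{jj'} = (1/P_2)\sum_{i=1}^{P_2} S_{i,j} S_{i,j'}$ and observe that the rows of $\bS_{\mathcal{L}}$ are i.i.d.\ since the rows of $\bD^1$ and $\bD^2$ are i.i.d. The entries $S_{i,j} = S^1_{i,j} - S^2_{i,j}$ lie in $\{-1,0,1\}$, so $S_{i,j}^2 \in \{0,1\}$ and $S_{i,j}S_{i,j'} \in \{-1,0,1\}$. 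Using the independence of $\bD^1$ and $\bD^2$, one checks that $\mathbb{E}[S_{i,j}^2] = 2\beta(1-\beta)$ for the relevant Bernoulli parameter $\beta$, and that $\mathbb{E}[S_{i,j}S_{i,j'}] = 2\,\mathrm{Cov}(S^1_{i,j}, S^1_{i,j'})$ for $j\neq j'$.

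Next I would apply Hoeffding's inequality entrywise. For the diagonals, each summand lies in $[0,1]$, giving $\Pr(G_{jj} \leq 2\beta(1-\beta) - \eta) \leq \exp(-2P_2\eta^2)$. For the off-diagonals, each summand lies in an interval of length $2$, giving $\Pr(|G_{jj'} - \mathbb{E}[G_{jj'}]| \geq \eta) \leq 2\exp(-\eta^2 P_2/2)$; this slower rate is the dominant one and matches the $\exp(-\eta^2 P_2/2)$ in the statement. A union bound over the $L$ diagonals and the $\binom{L}{2}$ off-diagonals yields the $2L^2$ prefactor.

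I would then combine these bounds through a Gershgorin-type inequality: for any unit vector $\bx$,
\[
\bx^\top \mathbf{G} \bx \;\geq\; \min_j G_{jj} \;-\; (L-1)\max_{j\neq j'}|G_{jj'}|,
\]
so on the intersection of the good concentration events I obtain $\lambda_{\min}(\mathbf{G}) \geq 2\beta(1-\beta) - \eta - (L-1)\bigl(\eta + \max_{j\neq j'}|\mathbb{E}[G_{jj'}]|\bigr)$, which collapses to $2\beta(1-\beta) - (2L+1)\eta$ provided the expected off-diagonals can be absorbed into the slack.

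The main obstacle is precisely that last absorption: controlling $|\mathbb{E}[G_{jj'}]| = 2|\mathrm{Cov}(S^1_{i,j}, S^1_{i,j'})|$. With $p = 1-(1/2)^{1/t}$, a direct computation gives this covariance equal to $(1-p)^{|\br_j \vee \br_{j'}|} - (1-p)^{|\br_j|+|\br_{j'}|}$, which is governed by the overlap $|\br_j \wedge \br_{j'}|$. This is where the hypotheses $t = \Theta(n^{\alpha})$ with $\alpha < 1/2$ and the large-$n$ threshold $n_0$ enter: for weight-$\leq t$ indices the typical pairwise overlap scales like $t^2/n = O(n^{2\alpha-1}) = o(1)$, so for $n \geq n_0$ the overlaps are zero with high probability and the expected off-diagonals are negligible compared to $\eta$. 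Making this bound uniform over the random index sets $\mathcal{L}$ actually produced during peeling --- rather than merely in expectation over a random draw --- is the most delicate piece, and is ultimately why the statement requires $n$ beyond some threshold $n_0$ rather than holding for every $n$.
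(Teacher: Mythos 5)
Your proposal is correct and follows essentially the same route as the paper: you view the Gram matrix $(1/P_2)\,\bS_{\mathcal{L}}^\top\bS_{\mathcal{L}}$ entrywise, compute diagonal/off-diagonal expectations of $S_{i,j} = S^1_{i,j}-S^2_{i,j}$ via independence of $\bD^1,\bD^2$, show the off-diagonal means vanish as $n\to\infty$ under $t=\Theta(n^\alpha)$ with $\alpha<1/2$, concentrate via Hoeffding, and conclude with a Gershgorin bound and a union bound over the $L^2$ pairs. The only (inconsequential) differences are stylistic: the paper establishes the vanishing off-diagonal mean through explicit closed-form limits $g(t,n)\to\beta$ and $f(t,n,k)\to\beta$ rather than your overlap-scaling heuristic $t^2/n = o(1)$, and you note a sharper Hoeffding exponent on the diagonals that the paper does not bother to use since the off-diagonal rate dominates.
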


\begin{proof}

For any $\br$ sampled uniformly from vectors up to degree $t$, the probability that it will have degree $0 \leq k \leq t$ can be written as
\begin{align*}
    \Pr \left( |\br| = k \right) = \frac{\binom{n}{k}}{\sum_{k = 1}^t {\binom{n}{k}}}
\end{align*}

We know that the entries of $\bs_{\br}$ are given as $(\bs_{\br}^1)_i = \one \left\{\br \leq \bar{\bd}_i^{1}\right\}$ and $(\bs_{\br}^2)_i = \one \left\{\br \leq \bar{\bd}_i^{2}\right\}$. Therefore, 
\begin{align*}
     \Pr \left( (\bs_{\br}^1)_i = 1 \right) &= \Pr \left( d_{ij}^1 = 0, \forall j \in \mathrm{supp}(\br) \right)\\
     &= \sum_{k = 1}^t \Pr \left( d_{ij}^1 = 0, \forall j \in \mathrm{supp}(\br) | |\br| = k \right) \Pr \left( |\br| = k \right)\\
     &= \frac{\sum_{k = 1}^t {\binom{n}{k}} \beta^{k/t}}{\sum_{k = 1}^t {\binom{n}{k}}}.\\
     &=: g(t, n)
\end{align*}

With $\beta = \Theta(1)$ and $t = \Theta(n^\alpha)$ for $\alpha \in (0,1/2)$, we can show that $\lim_{n \to \infty} g(t, n) = \beta$. Therefore, there exists some $n_0$ such that $|\Pr \left( (\bs_{\br}^1)_i = 1 \right) - \beta| \leq \eta$ for all $n \geq n_0$. For the rest of the proof, let $g = \Pr \left( (\bs_{\br}^1)_i = 1 \right)$ and assume $|g - \beta| \leq \eta$.

Then, recalling $(\bs_{\br})_i = (\bs_{\br}^1)_i - (\bs_{\br}^2)_i$, the distribution for each entry of $\bs_{\br}$ can be written as
\begin{align*}
    \Pr \left( (\bs_{\br})_i = 1 \right) &= \Pr \left( (\bs_{\br})_i = -1 \right) = g (1 - g).
\end{align*}

Hence, using Hoeffding's inequality, we obtain
\begin{align*}
    \Pr \left( \frac{1}{P_2} \bs_{\br}^\top \bs_{\br} \leq 2 \beta (1 - \beta) - \eta \right) \leq \Pr \left( \frac{1}{P_2} \bs_{\br}^\top \bs_{\br} \leq 2 g (1 - g) - \eta \right) \leq \exp \left(- \frac{\eta^2}{2} P_2 \right).
\end{align*}

Furthermore, the conditional probability of another vector $\bbm \neq \br$ being included in test $i$ is given by
\begin{align*}
    \Pr \left( (\bs_{\bbm}^1)_i = 1 | (\bs_{\br}^1)_i = 1, |\br| = k \right) &= \Pr \left( d_{ij} = 0, \forall j \in \mathrm{supp}(\bbm) \setminus \mathrm{supp}(\br) | |\br| = k \right)\\
    &= \sum_{\ell = 0}^{t} \left( \beta^{1/t} \right)^\ell \left( 1 - \frac{k}{n} \right)^\ell \left( \frac{k}{n} \right)^{t - \ell}\\
    &= \left( \frac{k}{n} + \left( 1 - \frac{k}{n} \right) \beta^{1/t} \right)^t\\
    &=: f(t,n,k).
\end{align*}

With $\beta = \Theta(1)$ and $t = \Theta(n^\alpha)$ for $\alpha \in (0,1)$, for any $k \leq t$, we can show that $\lim_{n \to \infty} f(t,n,k) = \beta$. Therefore, there exists some $n_0$ such that $|\Pr \left( (\bs_{\bbm}^1)_i = 1 | (\bs_{\br}^1)_i = 1 \right) - \beta| \leq \eta$ for all $n \geq n_0$. For the rest of the proof, let $f = \Pr \left( (\bs_{\bbm}^1)_i = 1 | (\bs_{\br}^1)_i = 1 \right) $ and assume $|f - \beta| \leq \eta$.

On the other hand, 
\begin{align*}
    \Pr \left( (\bs_{\bbm})_i (\bs_{\br})_i = 1 \right) &= 2 f g \left[ 1 - g - (1 - f) g \right]\\
    \Pr \left( (\bs_{\bbm})_i (\bs_{\br})_i = - 1 \right) &= 2 \left[(1 - f) g \right]^2
\end{align*}

As a result, we have
\begin{align*}
    \bbE [(\bs_{\bbm})_i (\bs_{\br})_i] = 2 g (f - g).
\end{align*}

Since $\lim_{n \to \infty} \bbE [(\bs_{\bbm})_i (\bs_{\br})_i] = 0$, there exists some $n_0$ such that $-\eta \leq \bbE [(\bs_{\bbm})_i (\bs_{\br})_i] \leq \eta$ for all $n \geq n_0$. For the rest of the proof assume $-\eta  \leq \bbE [(\bs_{\bbm})_i (\bs_{\br})_i] \leq \eta $. As a result, we can write
\begin{align*}
    \Pr \left( \frac{1}{P_2} |\bs_{\br}^\top \bs_{\bbm}| \geq 2 \eta \right) \leq \Pr \left( |\bs_{\br}^\top \bs_{\bbm} - P_2 \bbE [(\bs_{\bbm})_i (\bs_{\br})_i]| \geq P_2 \eta \right) \leq \exp \left(- \frac{\eta^2}{2} P_2 \right).
\end{align*}

By Gershgorin Circle Theorem, the minimum eigenvalue of $\frac{1}{P_2} \bS_{\mathcal{L}}^\top \bS_{\mathcal{L}}$ is lower bounded as
\begin{align*}
     \lambda_{\mathrm{min}}\left(\frac{1}{P_2} \bS_{\mathcal{L}}^\top \bS_{\mathcal{L}}\right) \geq \frac{1}{P_2} \min_{\br \in \mathcal{L}} \left( |\bs_{\br}^\top \bs_{\br}| - \sum_{\substack{\bbm \in \mathcal{L} \\ \bbm \neq \br}} |\bs_{\br}^\top \bs_{\bbm}| \right).
\end{align*}

Lastly, we apply a union bound over all $(\br, \bbm)$ pairs to obtain
\begin{align*}
    \Pr \left( \lambda_{\mathrm{min}}\left(\frac{1}{P_2} \bS_{\mathcal{L}}^\top \bS_{\mathcal{L}}\right) \leq 2 \beta (1 - \beta) - (2 L + 1) \eta \right) \leq 2 L^2 \exp \left(- \frac{\eta^2}{2} P_2 \right).
\end{align*}

\end{proof}

\section{Worst-Case Time Complexity}
In this section, we discuss the computational complexity of Algorithm~\ref{alg:smt}, which is broken down into the following parts:

\paragraph{Computing Samples} Computing samples for one sapling matrix requires computing the row-span of $\bH_c$, which can be computed in $n2^b$ operations. Then for each sample, we must take the bit-wise and with each row of the delay matrix, so the total complexity is: $Cn2^bP$. 

\paragraph{Taking Small Mobius Transform} Computing the Mobius transform for each of the $CP$ subsampled functions is $CPb2^b$.

\paragraph{Singleton Detection} To detect each singleton requires computing $\by$. This requires $P$ divisions for each of the $C2^b$ bins, for a total of $CP2^b$ operations.

\paragraph{Singleton Identification} To identify each singleton requires different complexity for our different assumptions.
\begin{enumerate}
    \item In the case of uniformly distributed interactions, singleton detection is $O(1)$, since $\by = \bk^*$ immediately, so doing this for each singleton makes the total complexity $CK$.
    \item In the noiseless low-degree case decoding $\bk^*$ from $\by$ is $\poly(n)$, so for each singleton the complexity is $CK\poly(n)$
\end{enumerate}

\paragraph{Message Passing} In the worst case, we peel exactly one singleton per iteration, resulting in $CK$ subtractions (the above singleton identification bounds already take into account the need to re-do singelton identification).

Thus in the case of uniformly distributed and low-degree interactions respectively, the complexity is:
\begin{eqnarray*}
    \text{Uniform distributed noiseless time complexity} &=& O(CPn2^b + CPb2^b + CK) \\
    &=& O(CPnK) \\
    &=& O(n^2 K).
\end{eqnarray*}
\begin{eqnarray*}
        \text{Low-degree (noisy) time complexity} &=& O(CPn2^b + CPb2^b + CK\poly(n)) \\
    &=& O(CP\poly(n)K) \\
    &=& O(\poly(n) K).
\end{eqnarray*}

\section{Additional Simulations} \label{apdx:additional_sim}
In this section, we present some additional simulations that did not fit in the body of the manuscript. Fig~\ref{fig:noiselessUniformTime} and \ref{fig:noiselessUniformTimeLow}. Plot the runtime of SMT vs. $n$ under both of our assumptions. In both cases we observe excellent scaling with $n$. We note that our low-degree setting has a higher fixed cost since we are using linear programming to solve our group testing problem and the solver appears to have some non-trivial fixed time cost. 

Fig.~\ref{fig:noiselessUniformSampleLow} plots the perfect reconstruction percentage against $n$ and sample complexity. We also observe a phase transition, however, the phase threshold appears very insensitive to $n$, as expected, since our sample complexity requirement is growing like $\log(n)$, and we are already plotting on a log scale.

\begin{minipage}{\textwidth}
    \begin{minipage}{0.47\textwidth}
            \centering
            \includegraphics[width=\textwidth]{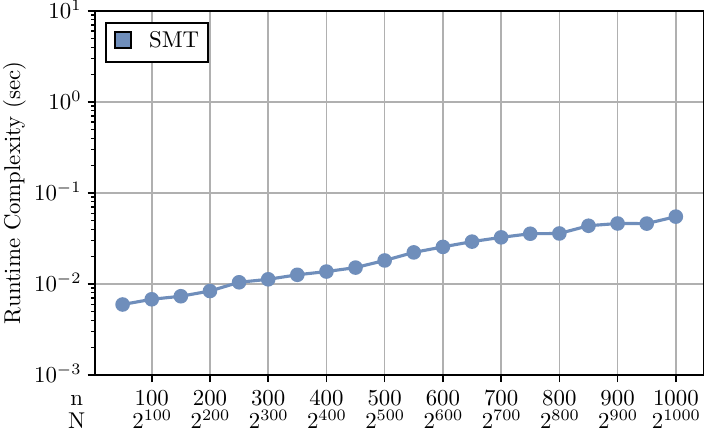}
            \captionof{figure}{Time complexity of SMT under Assumption~\ref{ass:unif}. The parameter $K$ is fixed and we plot the runtime v.s. $n$. our algorithm remains possible to run for $n=1000$ where other competitors fail.}
            \label{fig:noiselessUniformTime}
    \end{minipage}
    \hfill
    \begin{minipage}{0.47\textwidth}
    \centering
        \includegraphics[width=\textwidth]{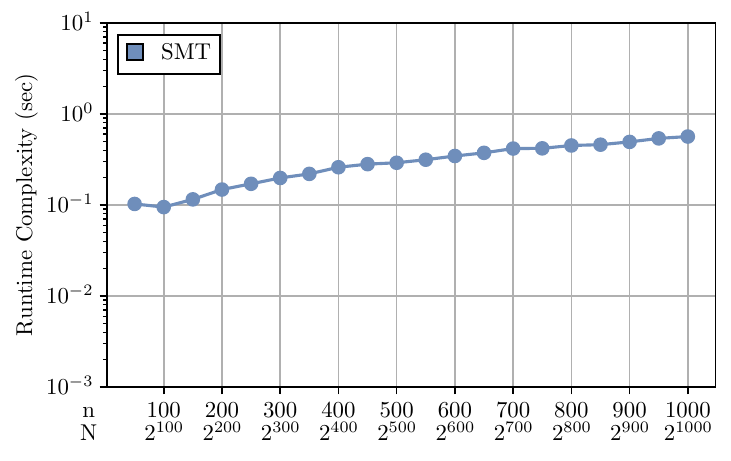}
    \captionof{figure}{Time complexity of SMT under assumption ~\ref{ass:low}. The parameters $K$ and $t$ are fixed and we plot the runtime v.s. $n$. Our theory says we have a $\poly(n)$ complexity. In practice, for reasonable $n$ our algorithm is running quickly.}
    \label{fig:noiselessUniformTimeLow}
    \end{minipage}
\end{minipage}
    \begin{figure}[ht]
        \centering
    \includegraphics[width=0.5\textwidth]{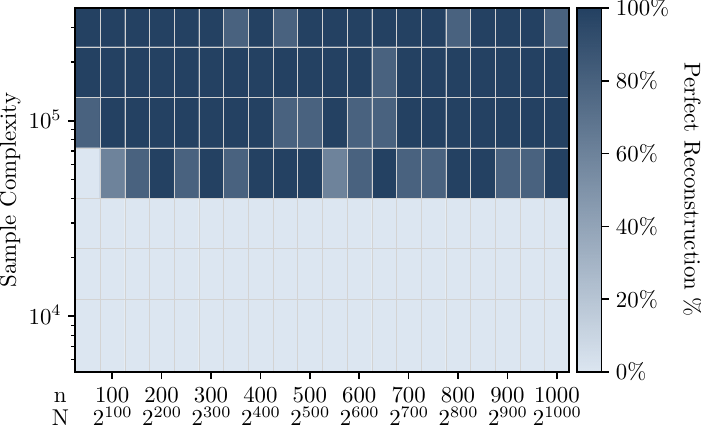}
    \captionof{figure}{Perfect reconstruction percentage plotted against sample complexity and $n$ under Assumption~\ref{ass:low}. Holding $C=3$, we scale $b$ to increase the sample complexity. We observe that the number of samples required to achieve perfect reconstruction is scaling linearly is very insensitive to $n$ as predicted. We also include $N=2^n$ on the bottom axis, which is the total number of interactions. In this regime we do not appear to consistently maintain zero error. This could be due to the fact that the asymptotic behaviour of group testing might not yet be fuly realized in the regime with $n \leq 1000$.}
    \label{fig:noiselessUniformSampleLow}
\end{figure}

\section{Group Testing}
\subsection{Group Testing Achievability Results From Literature}
\begin{theorem}[Part of Theorem 4.1 and 4.2 in \cite{Aldridge_2019}]\textbf{Asymptotic Rate 1 Noiseless Group Testing:}\label{thm:rate1_group_test}
Consider a noiseless group testing problem with $t = \Theta(n^{\theta})$ defects out of $n$ elements. We define the rate of a group testing procedure as:
\begin{equation}
    R \defeq \frac{\log\binom{n}{t}}{T}
\end{equation}
where $T$ is the number of tests performed by the group testing procedure. 
For an i.i.d. Bernoulli design matrix, for $\theta \in [0,1/3]$, in the limit as $n \rightarrow \infty$, a rate $R^*_{\text{BERN}} = 1$ is achievable with vanishing error. Furthermore, for the constant column-weight design matrix, for $\theta \in [0, 0.409]$ a rate $R^*_{\text{CCW}} = 1$ is achievable with vanishing error.

\end{theorem}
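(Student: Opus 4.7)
The statement is quoted directly from \citet{Aldridge_2019}, so my plan is to sketch the standard route by which such achievability results are established for noiseless group testing, rather than to prove anything novel. The approach is based on analyzing the \emph{Definite Defectives} (DD) decoder under a random design matrix: first use negative tests to rule out items (these items are provably non-defective), then declare an item defective whenever it is the unique remaining candidate in some positive test. Both achievability claims follow by showing that, with the appropriate choice of design parameter, the DD decoder succeeds with probability $1-o(1)$ at rate arbitrarily close to $1$.

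For the Bernoulli design, I would let each entry of $\bX^n$ be i.i.d.\ $\bern(\nu/t)$ for a constant $\nu>0$ to be optimized, take $T = (1+\epsilon)\log\binom{n}{t}$, and bound two error events. The first is that some non-defective item appears only in positive tests (so COMP fails to eliminate it); by independence across tests, this probability is of order $n\cdot (1-e^{-\nu})^{\Theta(T)}$, which vanishes as long as $\nu$ exceeds a threshold. The second is that some defective item is never the unique surviving candidate in a positive test; controlling this event requires a more delicate union bound over the $t$ defectives and uses the fact that, after COMP elimination, the expected number of competing candidates per positive test concentrates. Setting $\nu = \ln 2$ yields the critical constant and one shows that the combined bound tends to zero whenever the rate $R = \log\binom{n}{t}/T$ is strictly below $1$, provided $\theta \leq 1/3$. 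The cutoff $\theta \leq 1/3$ arises because beyond this regime the number of competing candidates no longer concentrates tightly enough around its mean.

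For the constant column-weight design, where each item participates in exactly $L = \Theta(T/t)$ tests chosen uniformly, the analysis is structurally similar but replaces independence across rows by a hypergeometric-style calculation. The gain is that the variance of the number of surviving competitors is strictly smaller than in the Bernoulli case, and optimizing $L$ produces an improved operating regime. Carrying out the DD analysis carefully (as in \citet{Oghlan2020}) one shows that rate $1$ is achievable as long as $\theta < 0.409\ldots$, where the numerical threshold is determined by the root of a transcendental condition involving the expected size of the ``possibly defective'' set after the COMP stage.

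The main obstacle, and the reason the two thresholds differ, is precisely this concentration-of-candidates step: one must show that the number of non-defectives surviving COMP within each positive test is both small in expectation and sharply concentrated around that expectation. The Bernoulli analysis loses a factor in this step that the constant column-weight design recovers, pushing $\theta$ from $1/3$ up to $0.409$. Since for our application we only need this theorem as a black box to invoke in Appendix~\ref{apdx:low_deg_dist} via the entropy calculation in \eqref{eq:entropy_conditioning}, I would not reproduce the full proof but simply cite \citet{Aldridge_2019,Oghlan2020} for the detailed analysis.
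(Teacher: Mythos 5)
The paper does not prove this theorem---it is stated purely as a citation to \citet{Aldridge_2019} (and, for the $0.409$ threshold, implicitly to \citet{Oghlan2020}), and is used as a black box in Appendix~\ref{apdx:low_deg_dist}. You correctly recognize this, and your decision to sketch the literature's DD-decoder argument and then invoke the result by citation matches the paper's treatment exactly. Your sketch is an accurate high-level account of how the achievability is proved: the two-stage COMP-then-DD decoder, Bernoulli design with parameter $\nu/t$ and optimal $\nu = \ln 2$, the union bound over surviving non-defectives, and the replacement of row-wise independence by a near-constant column-weight calculation to push the threshold from $1/3$ to roughly $0.409$. The only minor imprecision is in your COMP failure bound: the probability that a fixed non-defective survives COMP scales as $(1-e^{-\nu})^{\Theta(T/t)}$ rather than $(1-e^{-\nu})^{\Theta(T)}$, since each item participates in only about $T\nu/t$ tests; with $T = \Theta(t\log n)$ this still yields the polynomial decay needed for the union bound, so the conclusion is unaffected. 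Since the paper itself offers no proof, there is nothing further to compare.
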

\begin{theorem}[\cite{Bay2022, chan2014non}] \textbf{Noiseless Group Testing}: Consider the noiseless non-adaptive group testing setup with $t = \abs{\bk}$ defects out of $n$ items, with $t$ scaling arbitrarily in $n$. Let $\hat{\bk}$ be the output of a group testing decoder and let $T^* = \Theta\left(\min\left\{t \log(n), n\right\}\right)$. Then there exists a strategy using $T \leq (1 + \epsilon)T^*$ such that in the limit as $n \rightarrow \infty$ we have:
\begin{equation}
    \Pr\left( \hat{\bk} \neq \bk\right) \rightarrow 0.
\end{equation}
Furthermore, there is a $\poly(n)$ algorithm for computing $\hat{\bk}$. From \cite{chan2014non}, for $t = o(n)$ we can achieve:
    \begin{equation}\label{eq:err_rate_noiseless}
    \Pr\left( \hat{\bk} \neq \bk\right) \leq n^{-\delta}.
\end{equation}
with number of tests $T = O((1 + \delta) t \log(n))$.
\end{theorem}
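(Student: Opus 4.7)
The plan is to split into two regimes according to which term achieves the minimum in $T^*$. When $t \log n \gtrsim n$, the bound $T^* = \Theta(n)$ is trivially matched by \emph{individual testing} (one test per item), which has zero error and runs in $\poly(n)$ time. So the only interesting case is $t \log n = o(n)$, where $T^* = \Theta(t \log n)$, and for that case I would use a random combinatorial design paired with a linear-time decoder.

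Specifically, I would construct the test matrix as a near constant-column-weight random design: each of the $n$ columns is independently assigned to a uniformly random size-$L$ subset of the $T$ rows, with $L = \nu T / t$ for a constant $\nu > 0$ to be optimized. For decoding I would use the Definite Defectives (DD) algorithm: in stage one, every item appearing in any negative test is declared non-defective; in stage two, an item is declared defective iff it is the unique surviving candidate in at least one positive test. Both stages are linear in $nT$, so the total runtime is $\poly(n)$.

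The main probabilistic step is to control two failure events: (i) some defective item is \emph{masked}, meaning every positive test containing it also contains another surviving item; and (ii) some non-defective item is \emph{intruding}, meaning it appears in no negative test at all. Each event factors over items, and each can be controlled with a sharp Chernoff-type bound that uses the exact column weight $L$ and the hypergeometric structure of the design. A union bound over the $n$ items then shows both probabilities vanish whenever $T \geq (1+\epsilon) T^*$, which, combined with the trivial upper bound $T \leq n$ from individual testing, matches the claimed $T \leq (1+\epsilon)\,T^*$.

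The main obstacle I anticipate is pinning down the precise rate-one constant rather than a loose $O(t \log n)$ bound. The relevant information-theoretic lower bound is the counting bound $\log_2 \binom{n}{t}$, and closing the gap to it requires tuning $\nu$ exactly and using second-moment-style refinements to shave off stray logarithmic factors. For $t$ near or above the $n^{1/3}$ boundary, i.i.d.\ Bernoulli designs provably lose a constant factor, so committing to the constant-column-weight design is essential; the associated hypergeometric tail analysis, together with the delicate balance between the masking and intruding events, is where most of the technical work lies.
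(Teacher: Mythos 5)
The paper itself offers no proof of this statement: it is a verbatim citation of an achievability result from \citet{Bay2022}, so there is no internal argument to compare against. What you can be judged on is whether your reconstruction is sound and whether it would actually deliver the theorem as stated.

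Your two-regime split (individual testing when $t\log n \gtrsim n$, a random design with a combinatorial decoder otherwise) is the right skeleton, and your description of the constant-column-weight design and of the two DD failure events (masked defectives and intruding non-defectives) is accurate. The runtime claim is also fine, since DD is trivially polynomial. However, there are two places where the sketch either misreads the target or glosses over the genuinely hard part.

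First, you have mis-identified the hard part of the problem. You treat ``pinning down the precise rate-one constant'' (i.e.\ matching $\log_2\binom{n}{t}$ exactly) as the central technical challenge and warn that Bernoulli designs lose a constant factor above $\theta = 1/3$. But the theorem as stated only demands $T \le (1+\epsilon)T^*$ with $T^* = \Theta(\min\{t\log n, n\})$, i.e.\ order optimality with a free multiplicative constant, not rate one. For that, the distinction between Bernoulli and constant-column-weight designs is irrelevant, and the second-moment refinements you allude to are unnecessary. If instead you want to reprove what \citet{Bay2022} actually show (the exact information-theoretic threshold), then DD is provably \emph{not} enough: DD achieves rate $\min\{1, (1-\theta)/\theta\} < 1$ for $\theta > 1/2$, and no amount of tuning $\nu$ will fix this. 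The original result requires a fundamentally different decoder in the dense regime, not a sharper analysis of DD.

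Second, even for the order-optimal version, your union-bound-plus-Chernoff plan needs to be shown uniform in the scaling of $t$. The theorem allows $t$ to scale arbitrarily, so it must cover $t = n^{1-o(1)}$ with $t\log n$ just below $n$. In that regime $\log(n/t) = o(\log n)$, the two DD error exponents degrade at different rates as $\theta \to 1$, and the constant hidden in ``both probabilities vanish whenever $T \ge (1+\epsilon)T^*$'' can drift. The citation handles exactly this transition by interpolating carefully between pooled and individual testing rather than switching abruptly at $t\log n = n$. Your sketch names the masking/intruding balance as where ``most of the technical work lies'' but does not indicate how you would make the bounds uniform over all admissible $t$, which is precisely the contribution of the cited paper over earlier DD analyses.
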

Note that the above error rate is not a state-of-art result, but suffices in this case for our proof, and is very convenient in its form. 

\begin{theorem}[\cite{Scarlett2019}] \textbf{Noisy Group Testing Under General Binary Noise}: Consider the general binary noisy group testing setup with crossover probabilities $p_{10}$ and $p_{01}$. We use i.i.d Bernoulli testing with parameter $\nu > 0$. There are a total of $\abs{\bk} = t = \Theta(n^\theta)$ defects, where $\theta \in (0,1)$. Let $T^* = \max \left\{T_1^{(D)}, T_1^{(ND)}, T_2^{(D)}, T_2^{(ND)} \right\}$, where we have
\begin{eqnarray}
    T_1^{(D)} &=& \frac{1}{\nu p_{10} D_(\alpha/p_{10})}t \log(t),\\
    T_1^{(ND)} &=& \frac{1}{\nu w D(\alpha/w)}t \log(n),\\
    T_2^{(D)} &=& \frac{1}{\nu e^{-\nu} (1 - p_{10})D(\beta/p_{10})}t \log(t),\\
    T_2^{(ND)} &=& \frac{1}{\nu p_{01} D(\beta/p_{01})}t \log(n).
\end{eqnarray}
where $D(x) = x\log(x) -x +1$, and $w = (1-p_{01})e^{-\nu} + p_{10}(1-e^{-\nu})$. For any $\alpha \in (p_{10},1-p_{01})$, $\beta \in (p_{01}, 1 - p_{10})$, there exist some number of tests $T < (1 + \epsilon)T^*$ where the Noisy DD algorithm produces $\hat{\bk}$ such that in the limit as $n \rightarrow \infty$ we have:
\begin{equation}
    \Pr\left( \hat{\bk} \neq \bk\right) \rightarrow 0.
\end{equation}
\label{thm_noiseless_gt}
\end{theorem}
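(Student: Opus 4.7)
The plan is to follow a two-stage analysis of the Noisy DD decoder, establishing vanishing error for each of four distinct failure modes, whose required sample sizes line up exactly with $T_1^{(D)}, T_1^{(ND)}, T_2^{(D)}, T_2^{(ND)}$. First I would describe the Noisy DD algorithm precisely. In Stage 1 (pre-screening), an item is declared \emph{non-defective} if the fraction of positive tests in which it appears falls below a threshold determined by $\alpha \in (p_{10}, 1-p_{01})$; otherwise it is a surviving candidate. In Stage 2, a surviving candidate is declared \emph{definitely defective} if the number of its tests whose outcome is positive \emph{and} for which no other surviving candidate appears exceeds a threshold determined by $\beta \in (p_{01}, 1-p_{10})$. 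All other items are declared non-defective.

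Next I would enumerate the four failure events that the union bound reduces the problem to. For Stage 1 there are two: (i) a true defective $i$ receives too few positive tests and is mistakenly screened out, and (ii) a true non-defective $j$ receives too many positive tests and survives screening. For Stage 2 the analogous pair is: (iii) a true defective has too few "private" positive tests (tests in which it is the lone surviving candidate) to be confirmed, and (iv) a non-defective that survived Stage 1 is spuriously confirmed because enough of its tests happen to be positive. Under the i.i.d.\ Bernoulli$(\nu/t)$ design, the marginal probability that a generic test contains a given defective is $\nu/t$, that it contains at least one defective is $1-(1-\nu/t)^t \to 1-e^{-\nu}$, and the conditional probability of a positive outcome decomposes into the stated constants: $p_{10}$ (no defective in the test), $1-p_{01}$ (defective in the test), and $w = (1-p_{01})e^{-\nu} + p_{10}(1-e^{-\nu})$ (a non-defective, averaging over whether other defectives are present). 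The thresholds $\alpha, \beta$ split these means cleanly, so a standard Chernoff/Cram\'er bound on a $\mathrm{Binomial}(T\nu/t, \cdot)$ count yields an error exponent of the form $T \cdot \nu \cdot (\text{marginal}) \cdot D(\cdot/\cdot)$ with $D(x)=x\log x - x + 1$, matching the four reciprocal rates in the theorem. The $e^{-\nu}$ factor in $T_2^{(D)}$ arises from the requirement that the test be "private" (no \emph{other} defective present), which has conditional probability $e^{-\nu}(1+o(1))$ asymptotically.

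Finally, I would aggregate the per-item bounds by a union bound over the $t$ defectives and $n-t$ non-defectives; the $\log t$ versus $\log n$ distinction in $T_1^{(D)}, T_2^{(D)}$ (over defectives) versus $T_1^{(ND)}, T_2^{(ND)}$ (over non-defectives) appears precisely because of the sizes of these two union bounds. Choosing $T = (1+\epsilon)T^*$ with $T^* = \max\{T_1^{(D)}, T_1^{(ND)}, T_2^{(D)}, T_2^{(ND)}\}$ makes every error exponent strictly positive, and taking $n\to\infty$ (noting $t\to\infty$ since $\theta\in(0,1)$) drives each failure probability to zero. The main obstacle, as I see it, is that Stages 1 and 2 are not independent: the "privacy" event in Stage 2 conditions on which items survived Stage 1, which is random. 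I would resolve this the standard way by conditioning on the high-probability Stage 1 success event (guaranteed by $T \geq (1+\epsilon)\max(T_1^{(D)}, T_1^{(ND)})$), then applying a fresh Chernoff bound to Stage 2 in which the relevant test-level indicator has its mean slightly perturbed by the conditioning; one checks that the perturbation is $o(1)$ uniformly, so the exponent is preserved up to $(1+o(1))$, which is absorbed by the $(1+\epsilon)$ slack. A secondary subtlety is that the optimal choice of $\alpha,\beta$ depends on $p_{10},p_{01},\nu$, but the theorem permits any admissible choice, so we pick any $(\alpha,\beta)$ in the stated open intervals and let the $T^*$ expression absorb the consequence.
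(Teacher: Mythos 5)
The paper does not prove this statement: Theorem~\ref{thm_noiseless_gt} is imported verbatim from \citet{Scarlett2019} and appears in the appendix section ``Group Testing Achievability Results From Literature'' precisely so that it can be invoked without proof in Lemma~\ref{lemma_noisy_bin_ident} and the subsequent singleton-detection analysis. There is therefore no in-paper argument to compare against; what you have written is a reconstruction of the external proof.

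As such a reconstruction, your sketch is essentially sound. The two-stage NDD/DD decomposition, the four failure events, the pairing of $\log t$ with union bounds over defectives and $\log n$ with union bounds over non-defectives, the origin of the four reciprocal rates as Poisson-type Chernoff exponents $T(\nu/t)\cdot(\text{marginal})\cdot D(\cdot)$ for a $\mathrm{Binomial}(T,\nu/t)$ count, and the $e^{-\nu}$ factor in $T_2^{(D)}$ from the test-privacy requirement all match what Scarlett and Cevher actually do. The one place that deserves more weight is exactly the step you flag at the end. The privacy event in Stage~2 is ``no \emph{other surviving candidate} in the test,'' not ``no other \emph{defective} in the test.'' Equating its probability to $e^{-\nu}(1+o(1))$ requires a separate lemma establishing that, with high probability, the number of non-defectives that survive Stage~1 is $o(t)$ (so that their total placement mass $o(t)\cdot\nu/t = o(1)$ per test is negligible), and that this bound holds \emph{uniformly} so it can be plugged into a fresh Chernoff argument for Stage~2. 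In Scarlett's treatment this concentration of the Stage-1 survivor set is a standalone ingredient, not merely a perturbation absorbed by the $(1+\epsilon)$ slack, and your phrase ``one checks that the perturbation is $o(1)$ uniformly'' elides the place where the actual work is. With that lemma stated explicitly, the outline would be complete.
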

The above result is state-of-art for noisy group testing and could be of interest generally for proving the type of results we have here, however, but for simplicity, we state a similar more compact result that suffices for our proofs in this paper.
\begin{theorem}[\cite{chan2014non}] \label{thm:noisy_goup_test_rate}
    Let $\abs{\bk} = t = o(n)$, and consider an i.i.d. Bernoulli design group testing matrix. Further consider the binary symmetric noise model with crossover probability $q$.  If we construct $\hat{\bk}$ via the noisy column matching algorithm, we achieve:
    \begin{equation}
        \Pr\left( \hat{\bk} \neq \bk\right) \leq n^{-\beta},
    \end{equation}
    with number of tests 
    \begin{equation}
        T = \frac{16 (1 + \sqrt{\gamma})^2 (1 + \beta) \ln(2)}{1 - e^{-2}(1-2q)^2}t \log(n).
    \end{equation}
    where $\gamma$ is a constant that depends on $\beta$
\end{theorem}

\subsection{Group Testing Implementation}\label{apdx:group_test_imp}
We implement group testing via linear programming.  As noted in \cite{Aldridge_2019}, linear programming generally outperforms most other group testing algorithms in both the noisy and noiseless case. We use the following linear program, to implement group testing.
\begin{equation}
\begin{aligned}
\min_{\bk,\bxi} \quad & \sum_{i=1}^{n}k_i+ \lambda\sum_{p=1}^{P}{\xi_{j}}\\
\textrm{s.t.} \quad & k_i \geq 0\\
  &\xi_p \geq 0   \\
  & \xi_p \leq 1 \quad p \text{ s.t. } y_p = 1 \\
  & \bd_p^{\trans} \bk = \xi_p \quad p \text{ s.t. } y_p = 0 \\
  & \bd_p^{\trans} \bk + \xi_p \geq 1 \quad p \text{ s.t. } y_p = 1 \\
\end{aligned}
\end{equation}

\section{Impact Statement}
Rigorous tools for understanding models can potentially profoundly increase trust in deep learning systems. If we can understand and reason for ourselves why a model is making a decision, we can put greater trust into those decisions. Furthermore, if we understand why a model is doing something that we believe is incorrect, we can better steer it towards doing what we believe is correct. This ``steering" of model behavior is sometimes described as \emph{alignment}, and is a critical task for addressing things like incorrect or misleading information generated by a model, or for address any undesirable biases.  In terms of concerns, it is important to not misinterpret or over-interpret the interaction indices that come out of SMT. It could be the case that looking over some selection of interactions doesn't reveal the full picture, and leads one down an incorrect line of reasoning.

\end{document}